\documentclass[10pt,letterpaper]{article}
\usepackage[T1]{fontenc}
\usepackage[utf8]{inputenc}
\usepackage{nsfm_preprint}
\usepackage{amsmath,amssymb,amsthm,mathtools,bm}
\usepackage{booktabs,array,longtable,enumitem,microtype}
\usepackage[round]{natbib}
\usepackage[hyphens]{url}
\usepackage{tikz}
\usetikzlibrary{arrows.meta}
\usepackage[hidelinks]{hyperref}
\newcounter{algorithmctr}
\hypersetup{pdftitle={Minimax-Optimal Predictive Flow Matching under Distribution Drift},pdfauthor={Aadithya Srikanth and Vaneet Aggarwal},pdfsubject={Predictive flow matching under distribution drift},pdfkeywords={flow matching, dynamic regret, nonstationary learning, Wasserstein drift, minimax lower bounds}}
\newcommand{\R}{\mathbb R}
\newcommand{\E}{\mathbb E}
\newcommand{\Prob}{\mathbb P}
\newcommand{\V}{\mathcal V}
\newcommand{\Reg}{\mathcal R_T^{\mathrm{vel}}}
\newcommand{\App}{\mathcal A_T}
\newcommand{\eopt}{\epsilon_{\mathrm{opt}}}
\newcommand{\ind}{\mathbf 1}
\newcommand{\norm}[1]{\lVert#1\rVert}
\newcommand{\abs}[1]{\lvert#1\rvert}
\newcommand{\osc}{\operatorname{osc}}
\newcommand{\Targ}{\mathsf T_T}
\newcommand{\Fast}{\mathsf F_T}
\newcommand{\Lsm}{\Lambda_{\mathrm{sm}}}
\newcommand{\esm}{\epsilon_{\mathrm{sm}}}
\newcommand{\Hel}{\mathsf H}
\DeclareMathOperator*{\argmin}{arg\,min}
\newtheorem{theorem}{Theorem}
\newtheorem{lemma}{Lemma}
\newtheorem{proposition}{Proposition}
\newtheorem{corollary}{Corollary}
\theoremstyle{definition}
\newtheorem{assumption}{Assumption}

\newtheorem{remark}{Remark}
\newcommand{\supppage}{\clearpage}
\allowdisplaybreaks[1]

\title{Minimax-Optimal Predictive Flow Matching under Distribution Drift}
\author{Aadithya Srikanth\thanks{Correspondence to: \texttt{srikan18@purdue.edu}.}\qquad Vaneet Aggarwal\\[4pt]
\normalfont \mbox{Purdue University, West Lafayette}\\
\normalfont\texttt{\{srikan18, vaneet\}@purdue.edu}}
\date{}
\renewcommand{\shorttitle}{Predictive Flow Matching under Distribution Drift}

\begin{document}
\maketitle
\begin{abstract}
Flow matching learns a generative velocity field by regression, but under distribution drift a field must be deployed before the current data arrive. We study this predictive problem for a fixed, possibly nonconvex, neural class, with dynamic regret against the best represented field of each round. We propose DyProd-FM, which refits flow matching on dyadic historical prefixes and selects among the fits with a sleeping Prod master. Without knowing any drift budget, it attains regret $\widetilde O(\min\{N_\omega+1,1+\sqrt{T\Omega}\})$ in the relative velocity risk variation $\Omega$ and $\widetilde O(\min\{N_\mu+1,1+T^{1/3}V_\mu^{2/3}\})$ in the target-law variation $V_\mu=\sum_tW_2(\mu_{t+1},\mu_t)$, up to approximation and fitting costs. Matching lower bounds for fixed classes show that both rates are optimal in $T$ and in the budgets up to logarithmic factors. Prior optimal dynamic-regret guarantees rely on a convex decision set with strongly convex or exp-concave losses, whereas our class may be nonconvex and no curvature in its parameters is assumed. The target-law rate rests on a second-order link between the two variations, which the base noise of flow matching provides.
\end{abstract}

\section{Introduction}\label{sec:intro}
Flow matching constructs a generative model by regressing a velocity field along a path from a base distribution to a target distribution \citep{lipman2023,liu2023,albergo2025}. Its statistical analysis usually fixes the target while observations accumulate. In a changing environment, this separation between training and deployment is insufficient: a field learned from a large historical sample may accurately solve an obsolete regression problem. Conversely, discarding history after every observation forfeits the statistical benefit of repeated sampling. A predictive theory must quantify both effects under the information available when a field is deployed.

We study nonstationary stochastic flow matching with independent, fresh batches. A learner selects one represented field using past batches, then receives the current batch and scores every active candidate. Its benchmark is the best represented field for the current population, which may change every round. Learning rounds, along which the environment changes, are distinct from flow time, along which a selected field generates a sample.

How should drift be measured? The \emph{relative velocity risk variation} $\Omega$ measures how drift changes the comparison between represented fields. It is class-relative and ignores changes the class cannot see. The \emph{target-law variation} $V_\mu=\sum_tW_2(\mu_{t+1},\mu_t)$ measures drift of the generative target itself. It is class-free and estimable from samples. We determine the minimax rate under both measures, up to logarithmic factors, for a fixed class and fixed problem scales.

Three difficulties arise. Historical batches are independent but heterogeneous, so one common comparator need not fit all past velocities. Historical fitting attains $O(1/n)$ with $n$ examples, but standard online selection adds order $\sqrt T$ regret and erases this gain. First-order conversion of target drift into risk drift discards squared-loss curvature and yields only $\sqrt{TV_\mu}$. We must resolve all three for a nonconvex class without parameter-curvature assumptions. Section~\ref{sec:analysis} resolves them in turn through historical estimation (Section~\ref{sec:hist}), fast selection (Section~\ref{sec:aggregation}), and drift transfer followed by horizon partitions (Sections~\ref{sec:drift}--\ref{sec:architecture}).

\paragraph{Contributions.}
\begin{enumerate}
\item \textbf{DyProd-FM and optimal rates in both variations.} DyProd-FM refits flow matching on deterministic, geometrically growing historical prefixes and properly selects one fit per round with a sleeping Prod master at a single fast rate \citep{gaillard2014}. It receives no budget. Its dynamic velocity regret is $\widetilde O(\min\{N_\omega+1,1+\sqrt{T\Omega}\})$ in the relative velocity risk variation (Theorem~\ref{thm:main}) and $\widetilde O(\min\{N_\mu+1,1+T^{1/3}V_\mu^{2/3}\})$ in the target-law variation (Theorem~\ref{thm:target}), up to approximation and fitting costs. We complement these with matching lower bounds for fixed classes at fixed problem scales (Appendix~\ref{app:lower}), showing that both rates are optimal in $T$ and in the budgets up to logarithmic factors. The target-law exponents are those of total-variation forecasting \citep{baby2019}, here for a nonconvex neural class and a proper learner.
\item \textbf{A second-order link between the two variations.} A target move is exactly a translation of the base noise plus a translation of the label at the same input (Equation~\eqref{eq:baseshift}). Base smoothing controls how this change affects comparisons between fields (Lemma~\ref{lem:smoothing}). For a field optimal under an earlier target, the drift contribution to excess risk is bounded by a constant times the squared Wasserstein distance to the new target, with additional approximation and smoothing remainders. The same estimate also bounds relative-risk variation $\Omega$ in terms of target-law variation $V_\mu$ (Proposition~\ref{prop:link}). The smoothing modulus is sharp for Gaussian bases, and the moment condition it requires excludes uniform flow time and admits the truncated uniform, Beta, and logit-normal laws used in practice (Proposition~\ref{prop:necessity}).
\item \textbf{Analysis beyond fixed-target flow matching.} The fast historical rate uses the Bernstein-type variance control of squared-loss flow matching \citep{sahoo2026}, in the approximate form implied by the regression identity, and our lower bounds build on the rare-mixture construction of the same work. What is new is the nonstationary part. (i) Histories are independent but not identically distributed, and their common comparator carries a drift-induced residual even when every round is realizable (Proposition~\ref{thm:historical}). (ii) The same squared-loss structure bounds the second moment of the master's score differences by velocity risks, so selecting among the fits costs a polylogarithmic factor in $T$ per lifetime, rather than the square root of the lifetime's length. (iii) Relative-risk oscillation gives first-order historical transfer in $\Omega$, while base smoothing and approximate quadratic growth give second-order transfer in $V_\mu$. (iv) The two-point constructions become block constructions for dynamic regret that keep every class constant fixed.
\end{enumerate}

\section{Related Work}\label{sec:related}
\paragraph{Flow matching.}
Flow matching \citep{lipman2023,lipman2024guide}, rectified flow \citep{liu2023}, and stochastic interpolants \citep{albergo2023,albergo2025} learn transport velocities by regression. Fixed-target theory relates velocity accuracy to sampling error \citep{benton2024} and bounds statistical errors and sample complexity \citep{zhou2025,gaur2025}. \citet{sahoo2026} obtain order-optimal fixed-target rates from Bernstein-type variance control, with a rare-mixture lower bound. Nonuniform flow-time laws are standard in large-scale training \citep{esser2024}. We ask instead how much excess current velocity error accumulates when deployment precedes each fresh batch and the target drifts.

\paragraph{Dynamic regret.}
Tracking changing comparators is classical \citep{herbster1998,zinkevich2003,besbes2015}. Our analysis builds on strongly adaptive interval guarantees \citep{daniely2015,jun2017,zhang2018,wei2016} and on Prod with sleeping experts \citep{gaillard2014}. For $n$ observations with total variation $C_n$ under squared loss, \citet{baby2019} establish the minimax rate $n^{1/3}C_n^{2/3}$, and \citet{baby2021,baby2022} attain such rates with proper learners for exp-concave and strongly convex losses over convex decision sets. Our target-law rate has the same exponents for a nonconvex neural class, where the curvature comes from the flow-matching regression identity and base smoothing rather than from the loss geometry in parameter space. We also use fast-rate arguments \citep{bartlett2005,srebro2010,blm2013} and two-point lower bounds \citep{tsybakov2009,sahoo2026}. Appendix~\ref{app:related} gives the extended discussion.

\section{Problem Setting}\label{sec:setting}
\subsection{Sampling and Velocity Risk}
Let $d$ be the data dimension, $T\ge1$ the number of learning rounds, and $m\ge1$ the batch size. Learning rounds are indexed by $t$, and flow time is $r\in[0,1]$. Throughout, $\norm{\cdot}$ is the Euclidean norm, and matrix operator and Frobenius norms are denoted by $\norm{\cdot}_{\mathrm{op}}$ and $\norm{\cdot}_F$. A fixed base law is denoted by $\nu$ and the round-$t$ target by $\mu_t$, both on $\R^d$. An exogenous standard Borel environment $E$ fixes the target sequence through jointly measurable kernels. Conditional on $E$, all endpoint-time triples are independent, and the $m$ triples within round $t$ have common law
\begin{equation}\label{eq:protocol}
 (X_0,X_1,r)\sim\nu\otimes\mu_t\otimes\lambda,
\end{equation}
where $\lambda$ is a fixed flow-time law on $[0,1]$. The relative-risk results use $\lambda=\mathrm{Unif}[0,1]$ by default, and the target-law results use a law satisfying Assumption~\ref{ass:smooth}. The environment does not respond to the learner. All envelopes and budget caps below are deterministic and uniform over environments. We write $\E_t$ for expectation under one fresh triple at round $t$, conditional on $E$. The outer $\E$ also averages over data, independent algorithmic seeds, and, if random, the environment.

Let $I_r(x_0,x_1)$ be the fixed interpolation, and let $Z_r(x_0,x_1)$ denote its flow-time derivative. The regression input and observed label are, respectively,
\begin{equation}\label{eq:example}
 U=(I_r(X_0,X_1),r),\qquad Z=Z_r(X_0,X_1).
\end{equation}
For a fixed represented class $\V$, define the ideal regression field $b_t$, population loss $L_t$, and absolute velocity risk $Q_t$ by
\begin{align}
 b_t(U)&=\E_t[Z\mid U],\label{eq:ideal}\\
 L_t(v)&=\E_t\norm{v(U)-Z}^2,\nonumber\\
 Q_t(v)&=\E_t\norm{v(U)-b_t(U)}^2.\label{eq:risk}
\end{align}
All conditional means are formed after fixing $E$. The sampling law of $U$ may change with $t$, so these are population risks under the current interpolation law. We write $\norm f_t^2=\E_t\norm{f(U)}^2$ for the field norm under the round-$t$ input law.

The best represented approximation error at round $t$ is $a_t$, and $F_t(v)$ denotes excess velocity risk:
\begin{equation}\label{eq:excess}
 a_t=\min_{v\in\V}Q_t(v),\qquad F_t(v)=Q_t(v)-a_t.
\end{equation}
The assumptions below ensure that the minimum exists. Conditional-expectation orthogonality gives
\begin{equation}\label{eq:regression}
 L_t(v)=Q_t(v)+\E_t\norm{Z-b_t(U)}^2.
\end{equation}
Consequently $F_t(v)=L_t(v)-\min_{u\in\V}L_t(u)$. If $v_t\in\V$ is deployed at round $t$, define cumulative dynamic velocity regret and cumulative approximation error by
\begin{equation}\label{eq:regret}
 \Reg=\sum_{t=1}^T F_t(v_t),\qquad \App=\sum_{t=1}^T a_t.
\end{equation}
Thus $\sum_tQ_t(v_t)=\Reg+\App$ exactly, and since $L_t-F_t$ does not depend on the field, minimizing $\sum_tF_t(v_t)$ is minimizing $\sum_tL_t(v_t)$. Regret measures competition with the current class optimum, and approximation measures the class's remaining distance to the ideal velocity.

\subsection{Two Measures of Nonstationarity}\label{sec:budgets}
\paragraph{Relative velocity risk variation.}
We measure class-relative nonstationarity by the half-oscillation of a one-round population-risk increment. For $1\le t<T$, define
\begin{align}
 \omega_t=\frac12\sup_{u,v\in\V}\big|&[L_{t+1}(v)-L_{t+1}(u)]\nonumber\\[-2pt]
 &-[L_t(v)-L_t(u)]\big|.\label{eq:variation}
\end{align}
Replacing every $L_t$ by $Q_t$ leaves this quantity unchanged. Let $\Omega_T$ be total relative-risk variation and $N_T^\omega$ its number of nonzero increments. We use almost-sure caps $\Omega$ and $N_\omega$:
\begin{equation}\label{eq:caps}
 \Omega_T=\sum_{t<T}\omega_t\le\Omega,\quad
 N_T^\omega=\sum_{t<T}\ind\{\omega_t>0\}\le N_\omega.
\end{equation}
These caps appear only in the analysis. They are not supplied to the learner. Common additive shifts of all fields' risks contribute zero variation.

The budget compares complete current risks under their respective input laws while removing the field-independent noise term in Equation~\eqref{eq:regression}, and it is class dependent. Its half-oscillation form gives $\sup_{v\in\V}|F_{t+1}(v)-F_t(v)|\le2\omega_t$.

\paragraph{Target-law variation.}
Let $W_2$ be the quadratic Wasserstein distance on $\R^d$. Define the target-law variation, its change count, and their caps by
\begin{align}
 V_T^\mu&=\sum_{t<T}W_2(\mu_{t+1},\mu_t)\le V_\mu,\nonumber\\
 N_T^\mu&=\sum_{t<T}\ind\{\mu_{t+1}\ne\mu_t\}\le N_\mu.\label{eq:targetvariation}
\end{align}
This budget does not depend on the class. With a fixed base, flow-time law, and independent pairing, $\mu_{t+1}=\mu_t$ forces $L_{t+1}=L_t$, so $N_T^\omega\le N_T^\mu$. Under Assumption~\ref{ass:smooth}, Proposition~\ref{prop:link} bounds $\Omega_T$ by a multiple of $V_T^\mu$ up to small remainders, but no class-independent equivalence holds in either direction. 

\subsection{Assumptions}\label{sec:assumptions}
Assumptions~\ref{ass:tails}--\ref{ass:fit} follow the standard setting of fixed-target flow-matching sample-complexity analysis \citep{sahoo2026}: sub-Gaussian endpoints, a label envelope for the path, a bounded feed-forward network, and access to an empirical minimizer, which Assumption~\ref{ass:fit} relaxes to an approximate one.

\begin{assumption}[Endpoint envelope]\label{ass:tails}
A known $\kappa>0$ bounds both endpoints, uniformly in $t$ and conditionally on $E$:
\begin{equation}\label{eq:tails}
 \E_t\exp(\norm{X_j}^2/\kappa^2)\le2,\qquad j\in\{0,1\}.
\end{equation}
\end{assumption}
\begin{remark}
This uncentered tail condition controls moving means as well as fluctuations and permits unbounded support. It implies $\E_t\norm{X_j}^2\le\kappa^2\log2$, so $W_2$ distances between targets are finite, and it lets a deterministic endpoint gate lose only an exponentially small part of every risk (Proposition~\ref{prop:tail}). For isotropic laws $\kappa^2$ grows at least linearly in $d$, and $\kappa$ sets the gate radius in Equation~\eqref{eq:gatechoice}, so the constants are not dimension-free.
\end{remark}

\begin{assumption}[Regular compatible path]\label{ass:path}
The known interpolation is jointly Borel and absolutely continuous in $r$ for every endpoint pair, with $I_0(x_0,x_1)=x_0$ and $I_1(x_0,x_1)=x_1$. It has a jointly Borel derivative version $Z_r$ and a known $c_Z>0$ such that
\begin{equation}\label{eq:schedule}
 \norm{Z_r(x_0,x_1)}\le c_Z(\norm{x_0}+\norm{x_1}+1)
\end{equation}
for almost every $r$ for each pair. Use the version set to zero where this envelope fails. Flow time in \eqref{eq:protocol} has a fixed law $\lambda$ on $[0,1]$, chosen by the learner.
\end{assumption}
\begin{remark}
Linear and sine--cosine paths qualify. The envelope bounds labels for the gate and the loss cover, and the exact endpoints are what generation needs.
\end{remark}

\begin{assumption}[Constrained neural class]\label{ass:network}
Fix $D\ge2$ affine layers with widths $n_0=d+1,n_1,\ldots,n_{D-1},n_D=d$. For input $h_0=(x,r)$, hidden states $h_j$, weight matrices $W_j$, bias vectors $b_j$, fixed coordinatewise activations $\phi_j$, and parameter vector $\theta$, the field is
\begin{equation}\label{eq:network}
 h_j=\phi_j(W_jh_{j-1}+b_j)\ (j<D),\quad
 v_\theta=W_Dh_{D-1}+b_D.
\end{equation}
Fixed coordinatewise activations are differentiable, $L_\phi$-Lipschitz, and bounded in $[-b,b]$ for finite $L_\phi,b$. The first hidden activation is nondecreasing. The nonempty parameter set $\Theta\subset\R^P$ is closed, with finite bounds $V_j,c_j$ satisfying $\norm{W_j}_{\mathrm{op}}\le V_j$ and $\norm{b_j}\le c_j$. Here $P$ counts the trainable scalar entries, each once. Fixed entries are allowed. Set $\V=\{v_\theta:\theta\in\Theta\}$, fixed across rounds.
\end{assumption}
Proposition~\ref{prop:architecture} derives compactness, parameter covers, a global output bound $B_v$, and a global spatial Lipschitz constant $L_x$:
\begin{equation}\label{eq:architecturebounds}
 B_v=V_Db\sqrt{n_{D-1}}+c_D,\quad
 L_x=V_D\prod_{j<D}(L_\phi V_j).
\end{equation}
\begin{remark}
These bounds give a parametric cover of the proxy losses, a well-posed learned flow, and bounded loss differences. The cover complexity is linear in $P$, as in \citet{sahoo2026}, and bounded activations exclude unclipped ReLU-type networks (Sections~\ref{sec:localrad} and~\ref{app:architecture}). No curvature of $\theta\mapsto L_t(v_\theta)$ is assumed.
\end{remark}

To state the fitting requirement, let $M>0$ be a deterministic threshold and $G_M$ its endpoint gate. Define the proxy loss $\ell_M$ and its bound $B$ by
\begin{align}
 G_M&=\ind\{\norm{X_0}\le M,\norm{X_1}\le M\},\nonumber\\
 \ell_M(v)&=G_M\norm{v(U)-Z}^2,\label{eq:gate}\\
 B&=[B_v+c_Z(2M+1)]^2>0.\nonumber
\end{align}
For a deterministic history $H$ containing $q$ rounds, its empirical proxy objective $\widehat L_{H,M}$ averages $\ell_M$ over all $mq$ examples, including rejected examples as zero summands.

\begin{assumption}[Empirical fitting interface]\label{ass:fit}
The fitting routine has an accuracy $\eopt\ge0$: every call returns a measurable $\widehat v\in\V$ using only its historical data and independent randomness, with
\begin{equation}\label{eq:fit}
 \widehat L_{H,M}(\widehat v)\le
 \inf_{v\in\V}\widehat L_{H,M}(v)+\eopt
\end{equation}
almost surely. The learner knows $T,m$ and valid $\kappa,c_Z,B_v$ envelopes for gating and normalization.
\end{assumption}
\begin{remark}
The accuracy $\eopt$ is a property of the routine, not an input of the learner, and it enters every bound additively as $T\eopt$. A routine with $\eopt=O(T^{-1/2})$ adds $O(\sqrt T)$. Assumption~\ref{ass:fit} is a standard interface that separates statistics from optimization, and accuracy guarantees for particular training routines are outside the scope of this work. 
\end{remark}

The target-law results add a design condition on the path, the base, and the flow-time law. For affine schedules $\alpha,\beta$ write $J=\alpha\beta'-\alpha'\beta$ for their Wronskian, and for a base density $p_\nu$ with $\sqrt{p_\nu}\in H^1(\R^d)$ write $I_\nu=4\int\nabla\sqrt{p_\nu}\,\nabla\sqrt{p_\nu}^{\top}$ for its Fisher information.

\begin{assumption}[Smoothing design]\label{ass:smooth}
(a) The interpolation is affine, $I_r(x_0,x_1)=\alpha(r)x_0+\beta(r)x_1$ with $\alpha,\beta\in C^1([0,1])$, $\alpha(0)=\beta(1)=1$, $\alpha(1)=\beta(0)=0$, so $Z_r=\alpha'(r)x_0+\beta'(r)x_1$ and \eqref{eq:schedule} holds with $c_Z=\max\{\norm{\alpha'}_\infty,\norm{\beta'}_\infty\}$. (b) The base has a density with $\sqrt{p_\nu}\in H^1(\R^d)$ and $\iota:=\lambda_{\max}(I_\nu)<\infty$. (c) Flow time has a fixed law $\lambda$ with $\alpha(r)>0$ for $\lambda$-almost every $r$ and
\begin{equation}\label{eq:moments}
 m_h:=\E_\lambda\frac{\beta(r)^2}{\alpha(r)^2}<\infty,\qquad
 m_k:=\E_\lambda\frac{J(r)^2}{\alpha(r)^2}<\infty.
\end{equation}

\end{assumption}
\begin{remark}\label{rem:smooth}
Condition (a) covers linear, sine--cosine, and delayed-target schedules, and (b) holds for $\nu=N(0,\sigma_0^2I_d)$ with $\iota=1/\sigma_0^2$. Together they make every input law a noisy image of the target, and (c) asks that this smoothing survive on average over flow time. Since $\alpha(r)\le\norm{\alpha'}_\infty(1-r)$, (c) excludes uniform flow time for every schedule (Proposition~\ref{prop:necessity}). It admits $\mathrm{Unif}[0,1-\eta_0]$ for $0<\eta_0<1$ (linear path: $m_k=1/\eta_0$), $\mathrm{Beta}(a,b)$ with $a>0,b>2$, and logit-normal laws \citep{esser2024} (Appendix~\ref{app:timelaws}).
\end{remark}

\paragraph{Predictive information.}
Before round $t$, candidate fields and selection probabilities depend only on earlier batches and seeds. The learner observes labels $Z$, not $b_t$.

\begin{remark}[Flow-time law in the relative-risk results]\label{rem:timelaw}
The proofs of Theorem~\ref{thm:main} and Proposition~\ref{thm:historical} use $\lambda$ only through the law of $(U,Z)$, so they hold for every $\lambda$, including the laws of Assumption~\ref{ass:smooth}, and one run can satisfy both main theorems.
\end{remark}

\section{Main Results}\label{sec:results}
\subsection{The Learner: DyProd-FM}\label{sec:learner}
DyProd-FM keeps fits with different starting times to balance estimation error against drift. The dyadic family $\mathcal D_T$ consists of the distinct nonempty intersections of binary-tree nodes on $[1,2^{\lceil\log_2T\rceil}]$ with $[1,T]$. Its root is $[1,T]$. At most $1+\lceil\log_2T\rceil$ lifetimes are active per round. Each fits the largest power-of-two prefix of its past and refits when that prefix doubles. Sleeping Prod \citep{gaillard2014} selects one field before the fresh batch arrives, so every deployed field belongs to $\V$.

\begin{figure}[!htb]
\refstepcounter{algorithmctr}\label{alg:nsfm}
\centerline{\textbf{Algorithm~\thealgorithmctr:} DyProd-FM}\vspace{3pt}
\small
\hrule\vspace{3pt}
\textbf{Given:} horizon $T$ and batch size $m$.\\
\textbf{Design parameters:} flow-time law $\lambda$, gate radius $M$ (default in Equation~\eqref{eq:gatechoice}), normalization $B=[B_v+c_Z(2M+1)]^2$, master rate $\eta=1/32$, and a training routine $\textsc{PrefixFM}$ for the network of Assumption~\ref{ass:network} (Algorithm~\ref{alg:fit}).
\begin{enumerate}[label=\arabic*:,leftmargin=1.6em,itemsep=1pt,topsep=3pt]
\item Form the dyadic lifetimes $\mathcal D_T$, and set $M_e=|\mathcal D_T|$ and $w_I=1/M_e$ for all $I\in\mathcal D_T$. For each start $a$, draw $\theta_a^{(0)}=\Pi_\Theta(g_a)$, $g_a\sim N(0,P^{-1}I_P)$, where $I_P$ is the identity matrix and $\Pi_\Theta$ is a measurable Euclidean projection (Appendix~\ref{sec:algremarks}).
\item \textbf{for} $t=1,\ldots,T$ \textbf{do}
\item\quad Process the root lifetime first. For each active $I=[a,b]\ni t$: if $t=a$, let $v_{I,t}$ be the most recent completed root fit, or $v_{\theta_1^{(0)}}$ when $t=1$. Otherwise $q=2^{\lfloor\log_2(t-a)\rfloor}$ and $v_{I,t}=\textsc{PrefixFM}(\widehat L_{[a,a+q-1],M})$, started from $\theta_a^{(0)}$ if $q=1$ and from the fit for $q/2$ otherwise, refitted only when $q$ doubles.
\item\quad Draw $I_t$ with $p_{I,t}=w_I/\sum_{K\ni t}w_K$ and deploy $v_t=v_{I_t,t}$.
\item\quad Receive $X_{1,t,i}\sim\mu_t$, draw $X_{0,t,i}\sim\nu$ and $r_{t,i}\sim\lambda$, form triples $\xi_{t,i}=(X_{0,t,i},X_{1,t,i},r_{t,i})$ ($i\le m$), and score $x_{I,t}=\frac1{Bm}\sum_i\ell_M(v_{I,t};\xi_{t,i})$ and $\bar x_t=\sum_{I\ni t}p_{I,t}x_{I,t}$.
\item\quad Update $w_I\gets w_I[1+\eta(\bar x_t-x_{I,t})]$ for active $I$. Inactive weights are unchanged.
\item \textbf{end for}
\end{enumerate}
\vspace{2pt}\hrule
\end{figure}
Appendix~\ref{sec:algorithm-details} specifies initialization and prefix training. Appendix~\ref{sec:resources} gives fitting, scoring, memory, and per-round costs conditional on the fitting interface.

\subsection{Theoretical Guarantees}\label{sec:theoretical-results}
Expected dynamic velocity regret $\E\Reg=\E\sum_{t=1}^T F_t(v_t)$ compares DyProd-FM with each round's best represented field. The bounds $\Fast$ and $\Targ$ use the drift measures of Section~\ref{sec:budgets}. Dividing by $T$ gives expected average excess velocity risk. Use Algorithm~\ref{alg:nsfm}'s rate $\eta=1/32$ and any $M>0$. Drift budgets enter only the analysis.

Let $M_e=|\mathcal D_T|<4T$, $\Lambda=\log(eM_e)$, $\zeta=B/(mT)^2$, and $\Gamma=1+\log(2K_M)$. Here $K_M$ is the size of a deterministic uniform $\zeta$-cover of proxy losses (Appendix~\ref{app:architecture}), used only in the analysis. Proposition~\ref{prop:tail} bounds the uniform original-to-proxy risk discrepancy by $\delta=\delta_M$, where
\begin{align}
 C_\ell&=2[(B_v+c_Z)^2+c_Z^2],\nonumber\\
 \delta_M&=2C_\ell(1+M^2+4\kappa^2)e^{-M^2/(4\kappa^2)}.\label{eq:delta}
\end{align}

\begin{theorem}[Relative-risk tracking]\label{thm:main}
Under the predictive protocol, Assumptions~\ref{ass:tails}--\ref{ass:fit}, and the caps in Equation~\eqref{eq:caps}, Algorithm~\ref{alg:nsfm} satisfies $\E\Reg\le\Fast$, where
\begin{align}
 \Fast={}&\min\big\{K_T(N_\omega+1),\ K_T+2\sqrt{10K_TT\Omega}\big\}\nonumber\\
 &+H_T\E\App+Te_f.\label{eq:fast}
\end{align}
The interval cost $K_T$, approximation multiplier $H_T$, residual $e_f$, and dyadic-cover size bound $D_T$ are
\begin{align}
 D_T&=2(1+\lceil\log_2T\rceil),\quad H_T=8\log(eT),\nonumber\\
 K_T&=D_T\left[44B\Lambda+\frac{120B\Gamma}{m}\log(eT)\right],\label{eq:constants}\\
 e_f&=4\eopt+9\zeta+20\delta.\nonumber
\end{align}
\end{theorem}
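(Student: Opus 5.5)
The bound $\Fast$ will come from splitting the regret over any partition of $[1,T]$ into dyadic lifetimes, and on each lifetime controlling two errors separately: how far the master is from the best expert active on that lifetime, and how far that expert is from the per-round class optimum.

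\textbf{Proxy reduction.} First I replace the original risks by gated proxy risks. Proposition~\ref{prop:tail} gives a uniform discrepancy of at most $\delta$ per round. Fixing a deterministic $\zeta$-cover of size $K_M$, every proxy loss is within $\zeta$ of a cover element, so union bounds only cost $\Gamma=1+\log(2K_M)$. This accounts for the $9\zeta+20\delta$ part of $e_f$.

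\textbf{Per-lifetime decomposition.} Fix any contiguous interval $J$ and cover it by at most $D_T$ dyadic lifetimes $I\in\mathcal D_T$. On each $I$, write
\[
F_t(v_t)=\bigl[F_t(v_t)-F_t(v_{I,t})\bigr]+F_t(v_{I,t}).
\]

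\textbf{Master term.} For the first bracket I use the sleeping Prod second-order bound of \citet{gaillard2014} with $\eta=1/32$. The cumulative gap to expert $I$ is at most $\Lambda/\eta$ plus $\eta\sum_t\E(\bar x_t-x_{I,t})^2$. The squared-loss structure then bounds the conditional second moment of the score difference by $(4B/(Bm))\cdot$ (the velocity risks of the two fields plus the noise-free part). More precisely, $\E_t[(\ell(v)-\ell(u))^2]\le 4B\,\E_t\|v-u\|^2\le 8B[Q_t(v)+Q_t(u)]$, and $Q=F+a$. Taking expectations turns the quadratic term into $\eta\cdot O(1)\cdot[\text{master regret}+\text{expert regret}+\sum a_t]$ over $I$. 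For small $\eta$ this self-bounds: the master regret against expert $I$ is at most $44B\Lambda$ plus a constant multiple of the expert's own regret and approximation. Summed over $\le D_T$ lifetimes, this gives the $44B\Lambda D_T$ term, and the approximation contributions give $H_T\E\App$ after also summing over the $\log(eT)$ refit stages.

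\textbf{Expert term.} For the second term, the fit on prefix $[a,a+q-1]$ is used during $[a+q,a+2q-1]$. Proposition~\ref{thm:historical} (the historical estimation result) bounds its excess risk at round $t$ by
\[
120B\Gamma/(mq)+4\eopt+(\text{drift residual}\le 2\textstyle\sum\omega\text{ over }[a,t])+\text{approximation multiple}.
\]
Summing $1/q$ over the doubling epochs costs $\log(eT)$ per lifetime, which yields the $120B\Gamma\log(eT)/m$ term in $K_T$. If $J$ contains no nonzero $\omega_t$, the drift residual vanishes. Choosing $J$ as the $N_\omega+1$ stationary stretches then gives $K_T(N_\omega+1)$.

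\textbf{The $\sqrt{T\Omega}$ branch.} Here I partition greedily into intervals on which the oscillation sum is at most some threshold, chosen by length: intervals of length $\ell$ with drift $\Omega_J$ incur $K_T+c\,\ell\,\Omega_J$. The half-oscillation bound $\sup_v|F_{t+1}-F_t|\le2\omega_t$ transports the comparator's excess across the interval with a factor of at most $\ell\Omega_J$. Greedily cutting when $\ell\Omega_J$ exceeds $K_T$ gives at most $1+\sqrt{T\Omega/K_T}$ intervals by the standard Cauchy–Schwarz counting, hence $K_T+2\sqrt{10K_TT\Omega}$. The constant 10 absorbs the factor collected in the historical drift residual.

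\textbf{Main obstacle.} I expect the hardest step to be the self-bounding second-moment argument for Prod. Without it the master costs $\sqrt{|I|}$ rather than $\Lambda$. It requires correctly matching the variance term to risks of both the mixture and the expert, while the mixture is random and the fields are data-dependent. I would first try conditioning on the history so that the scores are fresh and independent, and then apply the inequality above termwise before taking the outer expectation.
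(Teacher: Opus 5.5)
Your outline is the paper's proof. It has the same ingredients: gated proxies with a deterministic $\zeta$-cover, Proposition~\ref{thm:historical} plus first-order transfer on each doubling prefix, the sleeping-Prod potential at $\eta=1/32$ made fast by the squared-loss second-moment bound (Lemma~\ref{lem:fastmaster}), a disjoint partition of any interval into at most $D_T$ lifetimes, and a split at the nonzero $\omega_t$ for the count branch.

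You depart from the paper only in the $\sqrt{T\Omega}$ branch. The interval cost $K_T+10(n-1)\Omega_J$ is linear in the drift, so the paper uses blocks of one common length $h$, whose drift terms sum to at most $10(h-1)\Omega$. It takes $h=\lceil h_*\rceil$ with $h_*=\sqrt{K_TT/(10\Omega)}$. The inequality $\lceil T/h\rceil\le T/h+1$, with separate handling of $h_*>T$ and $h_*<1$, then gives exactly $K_T+2\sqrt{10K_TT\Omega}$. Greedy bin counting also works here and avoids the case split; the paper reserves it for Theorem~\ref{thm:target}, where the cost is quadratic in $V_J^\mu$. It needs three things to hit the stated constant:
\begin{itemize}
\item cut at the threshold $10(n-1)\Omega_J\le K_T$;
\item use that each violating extension has $n_i$ internal edges, disjoint across pieces, to get $M-1<\sqrt{10T\Omega/K_T}$;
\item bound the drift part by Cauchy--Schwarz, $\sum_i10(n_i-1)\Omega_{J_i}\le\sqrt{10K_T}\sum_i\sqrt{(n_i-1)\Omega_{J_i}}\le\sqrt{10K_TT\Omega}$.
\end{itemize}
With your threshold $\ell\Omega_J\le K_T$, or by charging each piece $2K_T$, you get the same rate but not the constants in $\Fast$.

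Three further points are needed for the stated constants.
\begin{itemize}
\item The master's second moment has no factor $1/m$. Jensen over the batch gives $\E_{\mathrm{new}}d_{j,t}^2\le(8/B)[\sum_kp_{k,t}Q_t(v_{k,t})+Q_t(v_{j,t})]$, and the squared population score difference does not average out. At $\eta=1/32$ this is exactly the coefficient $1/4$ that is absorbed.
\item Proposition~\ref{thm:historical} bounds the historical excess $r_H$, not $F_t$, and its leading term is $36B\Gamma/(mq)$. The $120$ appears only after the harmonic factor $2$ and the master's factor $5/3$.
\item The drift coefficient $10=\tfrac53\cdot6$ uses both halves of Lemma~\ref{lem:drift}. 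In a drifting stretch the common comparator's residual $2\bar a_H$ is not an average of the $a_s$; it carries $4\Omega_{a:t}$, via $\bar a_H\le q^{-1}\sum_{s\in H}a_s+2\Omega_{a:t}$. The current transfer must be taken as $F_t(v)\le r_H(v)+\osc(L_t-\bar L_H)\le r_H(v)+2\Omega_{a:t}$. Transporting $F$ round by round with $\sup_v\abs{F_{t+1}-F_t}\le2\omega_t$ instead gives $4\Omega_{a:t}$ in that step and a final coefficient $40/3$.
\end{itemize}
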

The learner adapts to the smaller cost of count $N_\omega$ and variation $\Omega$. Approximation costs $H_T\E\App$, while $Te_f$ collects fitting, covering, and gating errors. Section~\ref{sec:drift} outlines the proof, detailed in Appendix~\ref{app:regret}. For the variation branch with fixed positive $\Omega$, take $\eopt=O(T^{-1/2})$.

For target-law tracking, base smoothing gives sharper risk transfer. Define the base-translation and label-shift moduli $h(r)=\sqrt{\iota/8}\,|\beta(r)|/\alpha(r)$ and $k(r)=|J(r)|/\alpha(r)$, and the label threshold $M_z=c_Z(1+M)$. Using the moments $m_h,m_k$ of Assumption~\ref{ass:smooth}, set
\begin{align}
 \Lsm&=2\sqrt{m_k}+4\sqrt3\,(B_v+M_z)\sqrt{\iota m_h/8},\nonumber\\
 \esm&=16\sqrt3\,B_v\sqrt{\iota m_h/8}\,\sqrt{\delta_M/2},\nonumber\\
 c_2&=8\sqrt3\,B_v\E_\lambda[hk],\quad \bar c=99\Lsm^2+37c_2.\label{eq:smoothing-constants}
\end{align}
In Lemma~\ref{lem:smoothing}, $\Lsm$ controls smoothing, $\esm$ label tails, and $c_2$ the quadratic remainder. Their interval coefficient is $\bar c$.

\begin{theorem}[Target-law tracking]\label{thm:target}
Under the predictive protocol, Assumptions~\ref{ass:tails},~\ref{ass:network},~\ref{ass:fit},~\ref{ass:smooth}, and the caps in Equation~\eqref{eq:targetvariation}, Algorithm~\ref{alg:nsfm} satisfies $\E\Reg\le\Targ$, where
\begin{align}
 \Targ={}&\min\Big\{K_T(N_\mu+1)+H_T\E\App+Te_f,\nonumber\\
 &\quad 2K_T^++2(2\bar c)^{1/3}(K_T^+)^{2/3}T^{1/3}V_\mu^{2/3}\nonumber\\
 &\quad+37\esm TV_\mu+H_T^+\E\App+Te_f^+\Big\}.\label{eq:target}
\end{align}
The variation branch uses the interval cost, approximation multiplier, and per-round residual
\begin{align}
 H_T^+&=300\log(eT)+138,\nonumber\\
 K_T^+&=D_T\left[44B\Lambda+\frac{360B\Gamma}{m}\log(eT)\right],\label{eq:constants-target}\\
 e_f^+&=10\eopt+25\zeta+55\delta.\nonumber
\end{align}
\end{theorem}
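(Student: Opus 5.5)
The proof has two branches, matching the minimum in \eqref{eq:target}, and I would prove each separately.

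\textbf{Count branch.} Since $\mu_{t+1}=\mu_t$ forces $L_{t+1}=L_t$, we have $N_T^\omega\le N_T^\mu\le N_\mu$. The first branch is then exactly the count branch of Theorem~\ref{thm:main}, applied under the flow-time law of Assumption~\ref{ass:smooth}. This is legitimate by Remark~\ref{rem:timelaw}, because the relative-risk proofs use $\lambda$ only through the law of $(U,Z)$.

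\textbf{Variation branch: per-interval bound.} I would reuse the interval machinery behind Theorem~\ref{thm:main}. That machinery rests on three ingredients: the historical estimation bound of Proposition~\ref{thm:historical}, the sleeping-Prod fast-selection bound, and the dyadic cover of any interval by at most $D_T$ lifetimes. Together they give, for an arbitrary interval $J$ of length $|J|$, a per-interval regret bound of the form
\[
\sum_{t\in J}F_t(v_t)\lesssim K_T^+ + |J|\cdot(\text{drift transfer on }J) + H\sum_{t\in J}a_t+|J|e_f^+ .
\]
The key change from Theorem~\ref{thm:main} is the drift-transfer term. A field fitted on a prefix inside $J$ nearly minimizes the averaged historical risk, and I must convert its excess risk at a later round $t$ into target drift. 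Here I would invoke Lemma~\ref{lem:smoothing} through the base-shift identity \eqref{eq:baseshift}. For a field optimal (up to approximation) under an earlier target $\mu_s$, the drift contribution to $F_t$ is at most $\bar c\,W_2(\mu_s,\mu_t)^2$, plus a first-order tail term $\esm W_2$ and approximation and smoothing remainders. The squared distance is bounded by $(\sum_{u\in J}W_2(\mu_{u+1},\mu_u))^2=:V_J^2$. The per-interval bound therefore becomes
\[
K_T^+ + \bar c\,|J|V_J^2 + \esm|J|V_J + O(\log T)\sum_{t\in J}a_t+|J|e_f^+ .
\]
The enlarged constants ($360$ instead of $120$, and $H_T^+$, $e_f^+$) would come from averaging the heterogeneous historical comparator against the approximate quadratic-growth step. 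That step is what lets a near-minimizer for the history be compared with the minimizer for $\mu_s$, paying extra multiples of $a_t$ and $\delta$.

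\textbf{Variation branch: partition.} Next I would partition $[1,T]$ greedily into consecutive intervals $J_1,\dots,J_n$, each with $|J_i|V_{J_i}^2\le K_T^+/(2\bar c)$. The standard Baby–Wang style counting argument then applies: if $n\ge 2$, closing an interval forces $|J|V_J^2$ past the threshold, and Hölder gives
\[
n\le 1+(2\bar c/K_T^+)^{1/3}T^{1/3}V_\mu^{2/3}.
\]
Summing the per-interval bounds gives the leading terms $2K_T^+ + 2(2\bar c)^{1/3}(K_T^+)^{2/3}T^{1/3}V_\mu^{2/3}$. For the linear tail term, $\sum_i\esm|J_i|V_{J_i}\le\esm T V_\mu$, with the constant $37$ absorbing the lemma's factors. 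The approximation and residual sums give $H_T^+\E\App+Te_f^+$.

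\textbf{Main obstacle.} The hard step is obtaining a genuinely second-order transfer for a nonconvex class. The difficulty is that the deployed fit minimizes a mixture of past risks rather than any single $L_s$. I would first apply Lemma~\ref{lem:smoothing} pairwise to bound $|[L_t(v)-L_t(u)]-[L_s(v)-L_s(u)]|$ by a linear term in $W_2$ times $\norm{v-u}$, plus a quadratic remainder. Then I would use approximate quadratic growth, $F_s(v)\gtrsim\norm{v-v_s^*}^2-O(a_s)$, which follows from the regression identity \eqref{eq:regression}, and absorb the linear cross term by AM–GM. That absorption leaves $\Lsm^2W_2^2$ plus an approximation multiple, which explains the $99\Lsm^2$ in $\bar c$.
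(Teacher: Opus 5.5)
Your proposal matches the paper's proof. The count branch is Theorem~\ref{thm:main} under Remark~\ref{rem:timelaw} together with $N_T^\omega\le N_T^\mu$. The variation branch combines Lemma~\ref{lem:smoothing}, approximate quadratic growth and AM--GM absorption into the transfer of Lemma~\ref{lem:secondorder}, then the fast master, the dyadic cover and the Baby--Wang greedy partition. The mixture-of-past-risks obstacle you flag is handled there by averaging part~(ii), $F_t(u)\le 3F_s(u)+\dots$ for every $u$, over the prefix. Part~(i), applied to $v_t^\star$, then bounds both $\bar a_H$ and the common-comparator gap at second order.

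One harmless slip. With your threshold $K_T^+/(2\bar c)$, the counting step loses a factor two via $n_i+1\le 2n_i$, so the number of pieces is at most $1+(4\bar c\,TV_\mu^2/K_T^+)^{1/3}$, not $1+(2\bar c\,TV_\mu^2/K_T^+)^{1/3}$. The stated bound still follows, because each piece then costs only $\tfrac32K_T^+$ and $\tfrac32\cdot 4^{1/3}\le 2\cdot 2^{1/3}$. The paper instead uses threshold $K_T^+/\bar c$ with cost $2K_T^+$ per piece.
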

The bound $\Targ$ adapts to count $N_\mu$ or Wasserstein movement $V_\mu$. Quadratic transfer gives the $T^{1/3}V_\mu^{2/3}$ cost and remainder $37\esm TV_\mu$, alongside approximation and fitting errors. Section~\ref{sec:architecture} outlines the proof, detailed in Appendix~\ref{app:targetproof}. For the variation branch with fixed positive $V_\mu$, take $\eopt=O(T^{-2/3})$.

The constants are valid at $T=m=1$, and both theorems apply to one run under their joint assumptions. Choosing
\begin{equation}\label{eq:gatechoice}
 M=4\kappa\sqrt{\log(emT)}
\end{equation}
gives the cumulative proxy bias
\begin{equation}\label{eq:tailrate}
 T\delta_M\le
 \frac{2C_\ell T[1+16\kappa^2\log(emT)+4\kappa^2]}{(emT)^4},
\end{equation}
and $\esm TV_\mu=O(V_\mu\sqrt{\log(emT)}/(m^2T))$. At fixed dimension, batch size, architecture, and regularity scales, $B,\Gamma,\Lambda,\Lsm$ grow only polylogarithmically in $T$, giving the dynamic rates in the abstract. A persistent approximation error $a_t=a>0$ contributes $H_TaT$ or $H_T^+aT$, depending on the branch. Appendix~\ref{app:lower} accounts for approximation when matching the lower bounds.

\paragraph{Lower bounds.} Appendix~\ref{app:lower} complements Theorems~\ref{thm:main} and~\ref{thm:target} with lower bounds for fixed classes and problem constants. Only the target sequence varies. Gaussian targets with at most $N<T$ mean changes force regret of order $N+1$ in both variations (Proposition~\ref{prop:lb-count}). A rare-mixture family, the block version of the two-point construction of \citet{sahoo2026}, forces $\sqrt{T\Omega/m}$ under uniform flow time for one fixed class, whose trainable slope lets its cutoff follow the far target component (Proposition~\ref{prop:lb-omega}). Gaussian targets with slowly moving means force $T^{1/3}V_\mu^{2/3}m^{-2/3}$ for every flow-time law of Assumption~\ref{ass:smooth} (Proposition~\ref{prop:lb-target}). Theorems~\ref{thm:main} and~\ref{thm:target} match these bounds up to polylogarithmic factors in $mT$, factors of $m$, and additive fitting costs.

\section{Proof Outline of the Main Results}\label{sec:analysis}

Our flow-matching analysis contributes fast estimation from heterogeneous histories, a risk-dependent second moment for proper selection, and quadratic transfer under target-law drift. Standard dyadic covers and Prod potentials assemble these estimates. Following Appendix~\ref{app:common}, we establish the shared results, then complete Theorems~\ref{thm:main} and~\ref{thm:target} as in Appendices~\ref{app:regret} and~\ref{app:targetproof}.

\subsection{Historical Estimation for Changing Targets}\label{sec:hist}
For a deterministic historical interval $H$ of $q$ rounds, let $n=mq$ be its number of examples. Define its average population loss $\bar L_H$, best common field $v_H^\star$, common-field approximation error $\bar a_H$, and excess loss $r_H$ by
\begin{align}
 \bar L_H(v)&=\frac1q\sum_{s\in H}L_s(v),\quad
 v_H^\star\in\argmin_{v\in\V}\bar L_H(v),\nonumber\\
 \bar a_H&=\frac1q\sum_{s\in H}Q_s(v_H^\star),\quad
 r_H(v)=\bar L_H(v)-\bar L_H(v_H^\star).\label{eq:historical}
\end{align}
One field must serve all historical laws, so $\bar a_H$ can exceed the average of the $a_s$ and remain positive even when every round is realizable. This drift-induced residual extends fixed-target estimation: pooling reduces sampling error but can increase the common comparator's cost.

Index the independent historical triples by $\chi_i$, $i=1,\ldots,n$, and define $g_i(v)=\ell_M(v;\chi_i)-\ell_M(v_H^\star;\chi_i)$. Lemma~\ref{lem:secondmoment} in Appendix~\ref{sec:shared-statistics} gives the key second-moment bound
\begin{equation}\label{eq:bernstein}
 \frac1n\sum_i\operatorname{Var}(g_i(v))
 \le8B\{r_H(v)+2\bar a_H\}.
\end{equation}
For $u,v\in\V$, the absolute difference of their gated squared residuals is at most $2\sqrt B\norm{u(U)-v(U)}$. Squaring and comparing both fields with the round-specific ideal velocity proves \eqref{eq:bernstein}. This approximate Bernstein condition in function space comes from the regression identity in Equation~\eqref{eq:regression}, not from parameter-space quadratic growth or a Polyak--\L{}ojasiewicz inequality. Its approximation term cannot be deleted for nonconvex classes (Appendix~\ref{app:statistics}).

\begin{proposition}[Historical estimation]\label{thm:historical}
Under Assumptions~\ref{ass:tails}--\ref{ass:fit}, fix a deterministic history of $q\ge1$ rounds, condition on $E$, and let $\widehat v$ satisfy \eqref{eq:fit}. Then, with $n=mq$,
\begin{equation}\label{eq:histfast}
 \E[r_H(\widehat v)\mid E]
 \le\frac{36B\Gamma}{mq}+2\bar a_H
 +2\eopt+5\zeta+10\delta,
\end{equation}
and the high-probability bound \eqref{eq:historicalhp} of Appendix~\ref{app:statistics} holds.
\end{proposition}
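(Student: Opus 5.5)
Our plan is the standard fast-rate argument for empirical risk minimization: a Bernstein inequality with a union bound over a finite cover, driven by the second-moment bound \eqref{eq:bernstein}, and adapted to independent but non-identically distributed history rounds.

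\emph{Step 1: pass to the proxy problem.} Write $\bar L_{H,M}(v)=\frac1q\sum_{s\in H}\E_s\ell_M(v)$ for the averaged proxy population loss, and set $r_{H,M}(v)=\bar L_{H,M}(v)-\bar L_{H,M}(v_H^\star)$. By Proposition~\ref{prop:tail}, the proxy and original losses differ by at most $\delta$ uniformly over $\V$, so $r_H(v)\le r_{H,M}(v)+2\delta$. The second-moment bound \eqref{eq:bernstein} is stated for the gated increments $g_i$. We would restate it as $\frac1n\sum_i\operatorname{Var}(g_i(v))\le 8B\{r_{H,M}(v)+2\bar a_H+c\delta\}$, keeping the proxy-versus-original bookkeeping explicit. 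This bookkeeping is what produces the $10\delta$ term in \eqref{eq:histfast}.

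\emph{Step 2: discretize the class.} Take the deterministic uniform $\zeta$-cover of proxy losses of size $K_M$ from Appendix~\ref{app:architecture}. Each $v\in\V$ has a cover element $v'$ whose gated loss is within $\zeta$ pointwise. Hence its empirical objective, its population objective, and its variance proxy all move by $O(\zeta)$, up to a factor of order $\sqrt{B\zeta}$ in the variance, which is absorbed after using $\zeta=B/(mT)^2$. For a fixed cover element, the increments $g_i$ are independent and satisfy $|g_i|\le B$. Bernstein's inequality for non-identical summands then gives, with probability at least $1-e^{-x}$,
\[
 r_{H,M}(v')-\big[\widehat L_{H,M}(v')-\widehat L_{H,M}(v_H^\star)\big]\le\sqrt{\frac{2x}{n}\cdot 8B\{r_{H,M}(v')+2\bar a_H\}}+\frac{2Bx}{3n}.
\]
We take a union bound over the $K_M$ cover elements, with $x=\log(2K_M)+u$. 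This step explains where $\Gamma=1+\log(2K_M)$ comes from.

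\emph{Step 3: plug in the fitted field and integrate.} For $\widehat v$, the fitting interface \eqref{eq:fit} gives $\widehat L_{H,M}(\widehat v)-\widehat L_{H,M}(v_H^\star)\le\eopt$. Combined with Step 2, this yields an inequality of the form $r\le\sqrt{16Bx(r+2\bar a_H)/n}+\frac{2Bx}{3n}+\eopt+O(\zeta)$ for $r=r_{H,M}(\widehat v)$. Using $\sqrt{ab}\le a/2+b/2$, we split $\sqrt{16Bxr/n}\le r/2+8Bx/n$ and $\sqrt{32Bx\bar a_H/n}\le\bar a_H+8Bx/n$. Solving for $r$ gives $r\le 2\bar a_H+cBx/n+2\eopt+O(\zeta)$. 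This is the high-probability bound \eqref{eq:historicalhp}. Integrating the tail in $u$ turns $x$ into $\Gamma$ and produces the $36B\Gamma/(mq)$ term. Finally, we add back $2\delta$ from Step 1.

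\emph{Main obstacle.} The delicate point is the approximation term inside the variance. Because $v_H^\star$ is not the conditional mean $b_s$ of any round, the Bernstein condition only holds with the additive term $2\bar a_H$ and cannot be removed. We must therefore make sure this term enters the final bound linearly, with the constant $2$, rather than as a square root $\sqrt{\bar a_H/n}$. We also need the cover and gating errors to propagate consistently through the variance proxy; we would carry them as the explicit additive slack described in Step 1.
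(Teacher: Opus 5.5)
Your proposal is correct and follows essentially the paper's route. The paper uses the same ingredients: Bernstein's inequality for independent, non-identically distributed increments with variance controlled by Lemma~\ref{lem:secondmoment}, a union bound over the deterministic $\zeta$-cover, transfer to the data-dependent fit through \eqref{eq:fit}, absorption of the square-root risk term by AM--GM, and tail integration that turns $\log K_M+z$ into $\Gamma$. It differs only in bookkeeping: it keeps the original excess $r_H$ in the variance bound and pays $2\delta$ slack in the mean comparisons rather than passing to a proxy excess, and it enlarges the Bernstein remainder to $2Bu/n$.

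One small correction: the paper's \eqref{eq:historicalhp} is the version that retains the interaction term $8\sqrt{(B(\log K_M+z)/n)(2\bar a_H+\zeta+2\delta)}$. You obtain it by stopping before you split the $\bar a_H$ square root. Your fully affine inequality is the paper's \eqref{eq:affinetail}, which it derives from \eqref{eq:historicalhp}.
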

\paragraph{Proof of Proposition~\ref{thm:historical}.} Lemma~\ref{lem:secondmoment} makes Bernstein's deviation depend on excess risk. A union bound over the loss cover gives a uniform inequality that applies to the data-dependent fit. Assumption~\ref{ass:fit} bounds its empirical excess by $\eopt$. Absorbing the square-root risk term and integrating the tail yields Equation~\eqref{eq:histfast}. Appendix~\ref{sec:proof-historical} proves this extension of \citet{sahoo2026} to heterogeneous histories. The common comparator's residual $\bar a_H$ remains explicit while estimation scales as $1/(mq)$. We next show that selection preserves this fast rate.

\subsection{Aggregation over Dyadic Lifetimes}\label{sec:aggregation}
To control selection among historical fits, we use Appendix~\ref{sec:shared-aggregation}. Index lifetimes by $j$, with current field $v_{j,t}$, normalized batch score $x_{j,t}\in[0,1]$, and pre-batch selection probability $p_{j,t}$. Sums over $j$ or $k$ below run over active lifetimes. Set $\bar x_t=\sum_jp_{j,t}x_{j,t}$ and fix a lifetime $j=I$. With $d_{j,t}=\bar x_t-x_{j,t}$, weight conservation and $\log(1+u)\ge u-u^2$ for $|u|\le1/2$ yield the pathwise inequality
\begin{equation}\label{eq:master}
 \sum_{t\in I}d_{j,t}\le\frac{\log M_e}{\eta}
 +\eta\sum_{t\in I}d_{j,t}^2,
\end{equation}
proved pathwise in Lemma~\ref{lem:potential} by the second-order Prod potential argument \citep{gaillard2014}. The fast rate needs a population calculation. Conditional on the pre-round information and $E$, denote expectation over the new batch and selection by $\E_{\mathrm{new}}$. Jensen and the gated squared-loss identity give
\begin{align}
 \E_{\mathrm{new}}d_{j,t}^2
 \le\frac8B\bigg[\sum_kp_{k,t}Q_t(v_{k,t})
 +Q_t(v_{j,t})\bigg].\label{eq:mastervariance}
\end{align}
The first term is the selected field's expected absolute velocity risk. This flow-matching-specific estimate extends the standard Prod argument: one fixed rate preserves fast estimation while selecting a represented field. Let $S_I=\sum_{t\in I}\E[F_t(v_t)\mid E]$, $J_I=\sum_{t\in I}\E[F_t(v_{I,t})\mid E]$, and $\mathcal A_I=\sum_{t\in I}a_t$. Inserting \eqref{eq:mastervariance} into \eqref{eq:master} at rate $1/32$ gives
\begin{equation}\label{eq:absorb}
 S_I-J_I\le32B\log M_e+\tfrac14(S_I+J_I+2\mathcal A_I)
 +2|I|\delta,
\end{equation}
and absorbing $S_I/4$ gives $S_I\le\frac53J_I+\frac{128}3B\log M_e+\frac23\mathcal A_I+\frac83|I|\delta$ (Lemma~\ref{lem:fastmaster}), without averaging represented fields. Appendix~\ref{sec:shared-aggregation} gives both detailed proofs. The selection overhead is thus $O(B\log M_e)$ per lifetime.

It remains to bound $J_I$, the current risks of a lifetime's historical fits, under each drift measure.

\subsection{Relative-Risk Transfer and Proof of Theorem 1}\label{sec:drift}
We now transfer Proposition~\ref{thm:historical}'s historical excess to the current law. For a history $H=[a,a+q-1]$ with $q\ge1$ and $a+q\le t$, let $\Omega_{a:t}=\sum_{s=a}^{t-1}\omega_s$. Lemma~\ref{lem:drift}, proved in Appendix~\ref{sec:shared-statistics}, gives
\begin{align}
 \bar a_H&\le\frac1q\sum_{s\in H}a_s+2\Omega_{a:t},\label{eq:histdrift}\\
 F_t(\widehat v)&\le r_H(\widehat v)+2\Omega_{a:t}.\label{eq:currentdrift}
\end{align}
Both are first order in drift. Insert them into Proposition~\ref{thm:historical}. At age $k=t-a\ge1$, the prefix length $q\ge k/2$ makes the $1/q$ errors sum harmonically. Lemma~\ref{lem:base} (Appendix~\ref{sec:shared-aggregation}) gives the resulting lifetime bound on $J_I$, including approximation, fitting, and drift.

For a consecutive interval $J$, write $n=|J|$, let $\Omega_J$ sum its internal relative-risk increments, and put $\mathcal A_J=\sum_{t\in J}a_t$. Lemma~\ref{lem:dyadiccover} partitions $J$ into at most $D_T$ disjoint lifetimes. Combining their base bounds with Lemma~\ref{lem:fastmaster} gives

\begin{equation}\label{eq:intervalfast}
 \sum_{t\in J}\E[F_t(v_t)\mid E]
 \le K_T+H_T\mathcal A_J+ne_f+10(n-1)\Omega_J.
\end{equation}
\paragraph{Complete Theorem~\ref{thm:main}.} Splitting at relative-risk changes eliminates internal variation, giving $K_T(N_\omega+1)$. Alternatively, blocks of length at most $h\in\{1,\ldots,T\}$ cost $K_T\lceil T/h\rceil+10(h-1)\Omega$. Appendix~\ref{app:regret} chooses integer $h$ to bound this by $K_T+2\sqrt{10K_TT\Omega}$. These are proof-only partitions. Taking the smaller cost, adding $H_T\E\App+Te_f$, and integrating over $E$ gives $\E\Reg\le\Fast$, proving Theorem~\ref{thm:main}. Target-law tracking uses the same estimates with the sharper transfer below.

\subsection{Base Smoothing and Target-Law Transfer}\label{sec:link}
A target-law bound must convert Wasserstein movement into changes in velocity risk. For $u\in\V$, write $D_{s,t}(u)=L_t(u)-L_s(u)$ and define $\osc(f)=\sup_{v\in\V}f(v)-\inf_{v\in\V}f(v)$. Then $\omega_t=\frac12\osc(D_{t,t+1})$. We must control how $D_{s,t}$ separates an earlier fit from the current optimum at second order.

\paragraph{A target move is a base-noise shift plus a label shift.}
Let $(X_1,X_1')$ be a $W_2$-optimal coupling of $(\mu_s,\mu_t)$, let both rounds use the same $X_0\sim\nu$ and $r\sim\lambda$, and write $\Delta X=X_1'-X_1$. Writing $\alpha=\alpha(r)$, $\beta=\beta(r)$, and $J=J(r)=\alpha\beta'-\alpha'\beta$, the substitution $\tilde X_0=X_0+(\beta/\alpha)\Delta X$ gives, for $\alpha>0$, the exact identity
\begin{equation}\label{eq:baseshift}
\begin{split}
 &\big(I_r(X_0,X_1'),Z_r(X_0,X_1')\big)\\
 &\quad=\Big(I_r(\tilde X_0,X_1),\,Z_r(\tilde X_0,X_1)+\frac{J}{\alpha}\Delta X\Big).
\end{split}
\end{equation}
Round $t$ is round $s$ with its base noise translated by $(\beta/\alpha)\Delta X$ and its label translated by $(J/\alpha)\Delta X$ at the same input. The key target-law ingredient is that base regularity controls risk comparisons without spatially differencing fields or adding Lipschitz or norm-equivalence assumptions.

\begin{samepage}
\begin{lemma}[Second-order smoothing transfer]\label{lem:smoothing}
Under Assumptions~\ref{ass:tails},~\ref{ass:network} and~\ref{ass:smooth}, for all rounds $s,t$ and all $u,w\in\V$, with $W=W_2(\mu_s,\mu_t)$,
\begin{align*}
 |D_{s,t}(u)-D_{s,t}(w)|\le{}&\Lsm W\big(\norm{u-w}_s+\norm{u-w}_t\big)\\
 &+\esm W+c_2W^2.
\end{align*}
With bounded endpoints and $M_z$ an almost-sure label bound, the same holds with $\esm=0$.
\end{lemma}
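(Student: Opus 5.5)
Fix $u,w\in\V$ and put $f=u-w$. Expanding the squares, the field-dependent part of $L_s(u)-L_s(w)$ is
\[
 L_s(u)-L_s(w)=\E_s\big[\langle f(U),\,u(U)+w(U)-2Z\rangle\big],
\]
so $D_{s,t}(u)-D_{s,t}(w)$ is a difference of two such expectations. The plan is to realize both rounds on one probability space through the coupling behind \eqref{eq:baseshift}. Take a $W_2$-optimal pair $(X_1,X_1')$, a shared $X_0\sim\nu$, and a shared $r\sim\lambda$. By \eqref{eq:baseshift}, the round-$t$ quantity equals
\[
 \E\big\langle f(\tilde U),\,u(\tilde U)+w(\tilde U)-2\tilde Z-2(J/\alpha)\Delta X\big\rangle,
\]
where $\tilde U=I_r(\tilde X_0,X_1)$, $\tilde Z=Z_r(\tilde X_0,X_1)$, and $\tilde X_0=X_0+(\beta/\alpha)\Delta X$. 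The difference then splits into two pieces.

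The first is a \emph{label-shift term}, $-2\E\langle f(\tilde U),(J/\alpha)\Delta X\rangle$. By Cauchy--Schwarz it is at most $2\norm f_t\,(\E k(r)^2\norm{\Delta X}^2)^{1/2}$. Because $r$ is independent of the coupling, $\E k(r)^2\norm{\Delta X}^2=m_kW^2$, so this term is at most $2\sqrt{m_k}\,W\norm f_t$. That accounts for the $2\sqrt{m_k}$ part of $\Lsm$.

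The second is a \emph{base-shift term}: the expectation of $g(X_0,X_1,r)=\langle f(U),u(U)+w(U)-2Z\rangle$ evaluated at base point $\tilde X_0$ minus at $X_0$. Conditional on $(X_1,X_1',r)$, this is a comparison of the same function under $\nu$ and under $\nu$ translated by $\delta=(\beta/\alpha)\Delta X$. I would control it with a Hellinger estimate rather than by differentiating $f$. Write $p_\nu(x-\delta)-p_\nu(x)=(\sqrt{p_\nu(x-\delta)}-\sqrt{p_\nu(x)})(\sqrt{p_\nu(x-\delta)}+\sqrt{p_\nu(x)})$. Then Cauchy--Schwarz gives
\[
 \Big|\int g\,(p_\nu(\cdot-\delta)-p_\nu)\Big|\le \Hel(\nu,\nu_\delta)\Big(\int g^2(p_\nu(\cdot-\delta)+p_\nu)\Big)^{1/2}\sqrt2 .
\]
The $H^1$ translation bound gives $\Hel^2\le\tfrac14\delta^\top I_\nu\delta\le\iota\norm\delta^2/4$, which is the source of $h(r)\norm{\Delta X}$ up to the stated normalization $\sqrt{\iota/8}$. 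The second factor is bounded by $|g|\le\norm{f}\,(2B_v+2\norm Z)$. On the gate $\norm Z\le M_z$, which produces the factor $(B_v+M_z)(\norm f_s^2+\norm f_t^2)^{1/2}$, since the two densities are exactly the round-$s$ and round-$t$ input laws. Averaging over $r$ and the coupling with Cauchy--Schwarz turns $\E h\norm{\Delta X}$ into $\sqrt{\iota m_h/8}\,W$. This yields the $4\sqrt3(B_v+M_z)\sqrt{\iota m_h/8}$ part of $\Lsm$, where the $\sqrt3$ comes from splitting the label into gated and ungated parts.

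Off the gate, bound $|Z|\,\ind\{\text{gate fails}\}$ in $L^2$ by $\sqrt{\delta_M/2}$ using Proposition~\ref{prop:tail}, and $\norm f\le2B_v$. This gives $\esm W$, which is absent when endpoints are bounded. The quadratic remainder $c_2W^2$ comes from a cross effect in the base-shift step. The label in $g$ evaluated at $\tilde X_0$ versus the shifted round-$t$ label differs by $(J/\alpha)\Delta X$, so when the base-shift Hellinger factor $h\norm{\Delta X}$ meets that label shift $k\norm{\Delta X}$, the product $hk\norm{\Delta X}^2$ appears with coefficient $\norm f\le2B_v$. Taking expectations gives $8\sqrt3B_v\E_\lambda[hk]W^2$.

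The main obstacle is arranging the base-shift step so that only field norms $\norm f_s,\norm f_t$ appear, with no Lipschitz constant of $f$. This requires that the square-root decomposition be applied to the joint input density conditional on $(X_1,r)$. It also requires care with the $\alpha\to0$ region, where $\delta$ blows up; there I would rely only on the integrability $m_h,m_k<\infty$ and on the $\lambda$-a.e. positivity of $\alpha$.
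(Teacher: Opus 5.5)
Your proposal is correct and follows the paper's proof essentially step for step. It uses the same coupling via \eqref{eq:baseshift} and Cauchy--Schwarz on the label shift for the $2\sqrt{m_k}$ term. It applies a Hellinger change of measure for the translated base, conditional on $(X_1,X_1',r)$, together with the $H^1$ translation bound. It splits the label at $M_z$, and the leftover shift $(J/\alpha)\Delta X$ inside the shifted evaluation produces the $c_2W^2$ cross term.

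Only two bookkeeping points need tightening:
\begin{itemize}
\item The $\sqrt3$ comes from a three-way split, not a two-way one: the bounded part, the tail of the true round-$s$ or round-$t$ label, and the shift $(J/\alpha)\Delta X$.
\item The threshold must be the label event $\{\norm{Z}\le M_z\}$ rather than the endpoint gate $G_M$. On $G_M$ one only has $\norm{Z}\le 2c_ZM$, which exceeds $M_z=c_Z(1+M)$ when $M>1$. The tail is then controlled through the inclusion $\{\norm{Z}>M_z\}\subseteq\{\norm{X_0}+\norm{X_1}>M\}$ and the tail integration in the proof of Proposition~\ref{prop:tail}, not by the proposition's statement directly.
\end{itemize}
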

\end{samepage}
Appendix~\ref{sec:proof-smoothing} proves this using Hellinger change of measure for the base, Cauchy--Schwarz for the label shift, and label tails for $\esm$. Lemma~\ref{lem:aqg} bounds squared prediction distance by excess risk plus approximation error. Lemma~\ref{lem:secondorder} absorbs the resulting square-root risk terms to give the next proposition.

\begin{proposition}[Link between the two variations]\label{prop:link}
Under the assumptions of Lemma~\ref{lem:smoothing}, let $\omega_{s,t}=\frac12\osc(D_{s,t})$ and let $v_s^\star$ minimize $L_s$ over $\V$. For all rounds $s,t$, with $W=W_2(\mu_s,\mu_t)$,
\begin{align}
 \omega_{s,t}&\le2B_v\Lsm W+\tfrac12(\esm W+c_2W^2),\label{eq:link-first}\\
 F_t(v^\star_s)&\le5\Lsm^2W^2+8(a_s+a_t)+2(\esm W+c_2W^2).\label{eq:link-second}
\end{align}
\end{proposition}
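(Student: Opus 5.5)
Both bounds will come from Lemma~\ref{lem:smoothing}. The first needs only a uniform bound on field distances. The second also uses approximate quadratic growth (Lemma~\ref{lem:aqg}), followed by an absorption step (Lemma~\ref{lem:secondorder}).

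\emph{First bound \eqref{eq:link-first}.} Since $\omega_{s,t}=\frac12\sup_{u,w\in\V}|D_{s,t}(u)-D_{s,t}(w)|$, we apply Lemma~\ref{lem:smoothing} to an arbitrary pair $u,w$. The only new input is that every field is bounded by $B_v$ (Proposition~\ref{prop:architecture}), so $\norm{u-w}_s,\norm{u-w}_t\le 2B_v$. The lemma then gives $|D_{s,t}(u)-D_{s,t}(w)|\le 4B_v\Lsm W+\esm W+c_2W^2$, and halving yields \eqref{eq:link-first}.

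\emph{Second bound \eqref{eq:link-second}.} Write $F_t(v_s^\star)=L_t(v_s^\star)-L_t(v_t^\star)$ with $v_t^\star$ a minimizer of $L_t$. Decompose it as
\[
F_t(v_s^\star)=\big[L_s(v_s^\star)-L_s(v_t^\star)\big]+\big[D_{s,t}(v_s^\star)-D_{s,t}(v_t^\star)\big].
\]
The first bracket is $\le0$ because $v_s^\star$ minimizes $L_s$. Lemma~\ref{lem:smoothing} with $u=v_s^\star$ and $w=v_t^\star$ then gives
\[
F_t(v_s^\star)\le \Lsm W\,(\norm{v_s^\star-v_t^\star}_s+\norm{v_s^\star-v_t^\star}_t)+\esm W+c_2W^2.
\]
Next we control the two distances by risks using the regression identity \eqref{eq:regression}, in the form of Lemma~\ref{lem:aqg}. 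Under law $t$, $\norm{v_s^\star-v_t^\star}_t^2\le 2Q_t(v_s^\star)+2Q_t(v_t^\star)=2F_t(v_s^\star)+4a_t$. Under law $s$, $\norm{v_s^\star-v_t^\star}_s^2\le 2Q_s(v_s^\star)+2Q_s(v_t^\star)=4a_s+2F_s(v_t^\star)$.

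\emph{Main obstacle.} The $s$-norm brings in $F_s(v_t^\star)$, which is not what we are bounding. I would resolve this by symmetry. The same decomposition with $s$ and $t$ swapped bounds $F_s(v_t^\star)$ by the identical right-hand side, because the lemma is symmetric in $(s,t)$ and $W$ is symmetric. Set $X=F_t(v_s^\star)+F_s(v_t^\star)$, which is also $\ge$ the quantity $D_{s,t}(v_s^\star)-D_{s,t}(v_t^\star)$ rearranged. Then
\[
X\le 2\Lsm W\big(\sqrt{2X+4a_s}+\sqrt{2X+4a_t}\big)+2(\esm W+c_2W^2).
\]
Using $\sqrt{x+y}\le\sqrt x+\sqrt y$ and $2ab\le \tfrac{1}{2}a^2+2b^2$-type absorption (the content of Lemma~\ref{lem:secondorder}), the square-root terms in $X$ are absorbed into the left side. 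This leaves $X\lesssim \Lsm^2W^2+(a_s+a_t)+(\esm W+c_2W^2)$, and since $F_t(v_s^\star)\le X$ the bound \eqref{eq:link-second} follows.

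The only remaining work is bookkeeping: choosing the Young weights so the constants come out as $5$, $8$, and $2$. I expect that tracking these exact constants is the fiddly part, not the structure of the argument.
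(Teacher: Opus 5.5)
Your argument for \eqref{eq:link-first} is exactly the paper's. For \eqref{eq:link-second}, your ingredients match those of Lemma~\ref{lem:secondorder}(i): the sandwich $F_t(v_s^\star)\le D_{s,t}(v_s^\star)-D_{s,t}(v_t^\star)$, its $s\leftrightarrow t$ mirror, approximate quadratic growth, and absorption. The paper simply invokes that lemma with $\hbar=\Lsm W$ and $\rho=\esm W+c_2W^2$.

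The gap is your closing claim that only Young-weight bookkeeping separates your displayed inequality from the constants $5,8,2$, which are part of the statement. They cannot be recovered from it. With $a_s=a_t=0$ and $\rho=0$ it reads $X\le4\sqrt2\,\hbar\sqrt X$, which $X=32\hbar^2$ satisfies with equality. So the best it can give is $32\Lsm^2W^2$, not $5\Lsm^2W^2$. Two losses cause this:
\begin{enumerate}
\item You add the two one-sided bounds, which doubles both $\hbar$ and $\rho$.
\item You replace each of $F_s(v_t^\star)$ and $F_t(v_s^\star)$ by the whole sum $X$ under its own square root.
\end{enumerate}

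The repair is short. Expanding the two excesses gives $X=F_t(v_s^\star)+F_s(v_t^\star)=D_{s,t}(v_s^\star)-D_{s,t}(v_t^\star)$ exactly; this is an equality, not merely $\ge$. So Lemma~\ref{lem:smoothing} bounds $X$ itself, with no factor $2$. Lemma~\ref{lem:aqg} at rounds $s$ and $t$, with $A=2\sqrt{a_s}+2\sqrt{a_t}$, gives
\[
\norm{v_s^\star-v_t^\star}_s+\norm{v_s^\star-v_t^\star}_t\le\sqrt{F_s(v_t^\star)}+\sqrt{F_t(v_s^\star)}+A\le\sqrt{2X}+A .
\]
Hence $X\le\hbar\sqrt{2X}+\hbar A+\rho$. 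Using $\hbar\sqrt{2X}\le X/2+\hbar^2$ yields $X\le2\hbar^2+2\hbar A+2\rho\le3\hbar^2+8(a_s+a_t)+2\rho$. Since $F_t(v_s^\star)\le X$, this is even stronger than \eqref{eq:link-second}.

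Your squared-form distance bounds also work once both losses are removed. They give $X\le\hbar(2\sqrt X+A)+\rho$, which leads to exactly $5,8,2$.

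The paper takes a slightly different path. It sets $z=\max\{\sqrt{F_t(v_s^\star)},\sqrt{F_s(v_t^\star)}\}$ and bounds each one-sided excess separately, not their sum, by $\hbar(2z+A)+\rho$. Solving $z^2\le\hbar(2z+A)+\rho$ then gives $z^2\le4\hbar^2+2\hbar A+2\rho\le5\hbar^2+A^2+2\rho$.
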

Appendix~\ref{sec:proof-link} gives the detailed proof. Equation~\eqref{eq:link-first} bounds $\Omega_T$ by a multiple of $V_T^\mu$ for bounded fields, yielding only $\sqrt{TV_\mu}$ through Theorem~\ref{thm:main}. Equation~\eqref{eq:link-second} bounds an earlier optimum's excess risk by $W^2$ terms plus approximation and smoothing remainders. Section~\ref{sec:architecture} uses this quadratic dependence to obtain $T^{1/3}V_\mu^{2/3}$.

\begin{proposition}[Sharpness of the smoothing modulus]\label{prop:necessity}
Let $\nu=N(0,\sigma_0^2I_d)$ with $\sigma_0>0$ and $I_d$ the identity matrix, let Assumption~\ref{ass:smooth}(a) hold, and let $q_{r,x_1}$ be the law of $I_r(X_0,x_1)$. Then for $\alpha(r)>0$,
\[
 \sup_{x_1\ne x_1'}\frac{\Hel(q_{r,x_1},q_{r,x_1'})}{\norm{x_1-x_1'}}=\frac{|\beta(r)|}{2\sqrt2\,\alpha(r)\sigma_0}=h(r),
\]
where $\Hel(P,Q)=[\frac12\int(\sqrt p-\sqrt q)^2]^{1/2}$ is the Hellinger distance for laws with densities $p,q$. For every affine schedule, $\E_{\mathrm{Unif}[0,1]}[\beta^2/\alpha^2]=\infty$.
\end{proposition}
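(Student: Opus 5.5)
The proof is a direct computation in two parts.

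\emph{Part 1: the Hellinger ratio.} For fixed $r$ with $\alpha=\alpha(r)>0$, the law $q_{r,x_1}$ of $\alpha X_0+\beta x_1$ with $X_0\sim N(0,\sigma_0^2I_d)$ is $N(\beta x_1,\alpha^2\sigma_0^2I_d)$. Two Gaussians with a common covariance $\Sigma$ and means differing by $\Delta$ have Bhattacharyya coefficient $\int\sqrt{pq}=\exp(-\Delta^\top\Sigma^{-1}\Delta/8)$. With the paper's normalization $\Hel^2=\frac12\int(\sqrt p-\sqrt q)^2=1-\int\sqrt{pq}$, this gives
\[
 \Hel(q_{r,x_1},q_{r,x_1'})^2=1-\exp\!\Big(-\frac{\beta^2\norm{x_1-x_1'}^2}{8\alpha^2\sigma_0^2}\Big).
\]
Write $c=\beta^2/(8\alpha^2\sigma_0^2)$ and $s=\norm{x_1-x_1'}^2$. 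The squared ratio is $(1-e^{-cs})/s$.
\begin{itemize}
\item Since $1-e^{-y}\le y$, the ratio is at most $c$.
\item Since $(1-e^{-cs})/s$ decreases in $s$ and tends to $c$ as $s\to0$, the supremum equals $c$ and is not attained; it is approached by $x_1'\to x_1$.
\end{itemize}
Taking square roots gives $\sqrt c=|\beta|/(2\sqrt2\,\alpha\sigma_0)$. Since $\iota=1/\sigma_0^2$ for this base (Remark~\ref{rem:smooth}), this equals $\sqrt{\iota/8}\,|\beta|/\alpha=h(r)$. When $\beta(r)=0$, both sides vanish, so the identity still holds.

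\emph{Part 2: divergence under uniform flow time.} Assumption~\ref{ass:smooth}(a) gives $\alpha\in C^1$ with $\alpha(1)=0$, so $\alpha(r)\le\norm{\alpha'}_\infty(1-r)$. It also gives $\beta(1)=1$ with $\beta$ continuous, so $|\beta(r)|\ge1/2$ on some interval $[1-\epsilon,1]$.

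Note that $\norm{\alpha'}_\infty>0$, because $\alpha(0)=1\ne\alpha(1)=0$. On the set where $\alpha>0$,
\[
 \frac{\beta^2}{\alpha^2}\ge\frac{1}{4\norm{\alpha'}_\infty^2(1-r)^2},
\]
and $\int_{1-\epsilon}^1(1-r)^{-2}\,dr=\infty$. Wherever $\alpha(r)=0$ on a set of positive measure, the integrand is $+\infty$ by convention, which only strengthens the conclusion. Hence $\E_{\mathrm{Unif}[0,1]}[\beta^2/\alpha^2]=\infty$.

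The only step requiring care is the supremum in Part 1. It must be identified as the small-displacement limit through the monotonicity of $(1-e^{-cs})/s$, rather than through the large-displacement behavior of the ratio.
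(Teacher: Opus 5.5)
Your proof is correct and follows the paper's argument essentially line for line. For the supremum, both use the Gaussian Bhattacharyya coefficient together with $1-e^{-y}\le y$ and the small-displacement limit; for the divergence, both use $\alpha(r)\le\norm{\alpha'}_\infty(1-r)$ and $\beta\ge1/2$ near $r=1$ (this bound in fact holds for $|\alpha(r)|=|\int_r^1\alpha'|$, which also covers any points where $\alpha<0$).
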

Smoothing vanishes near the target endpoint, excluding uniform flow time from this argument. Appendix~\ref{sec:proof-necessity} proves Proposition~\ref{prop:necessity} and motivates Assumption~\ref{ass:smooth}'s flow-time condition. We now apply the quadratic transfer to historical fits.

\subsection{Proof of Theorem 2 from Target-Law Transfer}\label{sec:architecture}
Under Assumption~\ref{ass:smooth}, Lemma~\ref{lem:targettransfer} applies Lemma~\ref{lem:secondorder} across a historical prefix (Appendix~\ref{sec:support-theorem2}). For $H=[a,a+q-1]$ with $q\ge1$ and $a+q\le t$, set $V^\mu_{a:t}=\sum_{s=a}^{t-1}W_2(\mu_{s+1},\mu_s)$ and $\bar\rho_{a:t}=\esm V^\mu_{a:t}+c_2(V^\mu_{a:t})^2$. For every $v\in\V$, the resulting transfer is
\begin{align}
 \bar a_H\le{}&\frac9q\sum_{s\in H}a_s+8a_t+5\Lsm^2(V^\mu_{a:t})^2+2\bar\rho_{a:t},\label{eq:histdrift-target}\\
 F_t(v)\le{}&3r_H(v)+29\Lsm^2(V^\mu_{a:t})^2+\frac{36}q\sum_{s\in H}a_s\nonumber\\
 &+34a_t+10\bar\rho_{a:t}.\label{eq:currentdrift-target}
\end{align}
Proposition~\ref{thm:historical} and harmonic summation give the lifetime bound of Lemma~\ref{lem:basetarget}. Combining it with Lemmas~\ref{lem:fastmaster} and~\ref{lem:dyadiccover} gives Lemma~\ref{lem:intervaltarget}. For a consecutive interval $J$, set $n=|J|$, $\mathcal A_J=\sum_{t\in J}a_t$, and $V_J^\mu=\sum_{s,s+1\in J}W_2(\mu_{s+1},\mu_s)$. The result is

\begin{align}\label{eq:intervaltarget}
 \sum_{t\in J}\E[F_t(v_t)\mid E]
 \le{}&K_T^++H_T^+\mathcal A_J+ne_f^+\nonumber\\
 &+(n-1)\big[\bar c(V_J^\mu)^2+37\esm V_J^\mu\big].
\end{align}
Since $N_T^\omega\le N_T^\mu\le N_\mu$, Theorem~\ref{thm:main} gives the count branch under the same flow-time law (Remark~\ref{rem:timelaw}).

For the variation branch, maximal pieces satisfying $n(V_J^\mu)^2\le K_T^+/\bar c$ each cost at most $2K_T^+$ plus approximation, fitting, and smoothing terms. Appendix~\ref{sec:proof-theorem2}, following \citet{baby2019}, bounds their number by $1+(2\bar cTV_\mu^2/K_T^+)^{1/3}$. Summing \eqref{eq:intervaltarget} gives the variation branch, including $37\esm TV_\mu$. Taking the smaller branch and integrating over $E$ gives $\E\Reg\le\Targ$, proving Theorem~\ref{thm:target}.

\section{Conclusion}\label{sec:conclusion}
One proper, budget-free learner attains the rates $\sqrt{T\Omega}$ and $T^{1/3}V_\mu^{2/3}$ for predictive flow matching, which are minimax for fixed classes up to logarithmic factors. Flow matching supplies the curvature that convexity would otherwise provide. Nonaffine interpolations, target-law rates under uniform flow time, the dependence on $m$, and computational training guarantees remain open.

\clearpage
\section*{AI Use Statement}
The authors sought the help of large language models to improve the grammar and presentation of the manuscript. Additionally, LLMs were used for literature survey and formatting. The authors take complete responsibility for this work.

\clearpage
\input{appendices/setup}
\section{Extended Related Work}\label{app:related}
This appendix expands Section~\ref{sec:related}. It places each line of work relative to the three ingredients of this paper: drifting targets in flow matching, budget-free dynamic regret for a nonconvex class, and the two measures of nonstationarity.

\subsection{Flow Matching, Rectified Flow, and Stochastic Interpolants}
\citet{lipman2023} introduce flow matching, which learns a transport velocity by regressing on conditional velocities along a prescribed probability path. Rectified flow \citep{liu2023} uses linear interpolation between base and target samples and favors nearly straight transport paths. Stochastic interpolants \citep{albergo2023,albergo2025} formulate flows and diffusions through interpolation maps, possibly with additional latent noise, and characterize the drift by the same kind of conditional-mean regression. The guide of \citet{lipman2024guide} surveys paths, schedules, and implementations.

All of these frameworks use the regression identity that underlies Equation~\eqref{eq:regression}. Assumption~\ref{ass:path} covers the linear and sine--cosine paths, and the affine schedules of Assumption~\ref{ass:smooth} include both. Interpolants with additional latent noise, nonaffine interpolations, and couplings other than independent pairing are outside the present target-law analysis. The flow-time law is also a modeling choice in practice. \citet{esser2024} sample flow time from a logit-normal law when training rectified-flow transformers at scale, and such laws satisfy Assumption~\ref{ass:smooth}(c) (Appendix~\ref{app:timelaws}).

\subsection{Statistical Theory of Flow Matching}
\citet{benton2024} relate the sampling error of flow-matching methods to the accuracy of the learned velocity, under regularity of the learned flow. \citet{zhou2025} give an error analysis of flow matching for deep generative modeling with fixed targets, including bounded-support distributions and stronger score-regularity regimes. \citet{gaur2025} study the sample complexity of flow matching and separate approximation, estimation, and optimization errors. These results motivate the error decomposition used here, with regret, approximation error, and fitting error kept separate. We retain an explicit approximate-global fitting interface (Assumption~\ref{ass:fit}) rather than proving convergence of a generic neural optimizer.

\citet{sahoo2026} obtain order-optimal sample complexity for rectified flows. Their analysis rests on Bernstein-type variance control for the squared flow-matching loss, a localized Rademacher bound, and a rare-mixture lower bound. Our Assumptions~\ref{ass:tails}--\ref{ass:fit} follow the same setting. The difference is the protocol. These works study one fixed target and i.i.d.\ samples, whereas we study a drifting target, predictive deployment before each fresh batch, heterogeneous historical pools, and dynamic regret against the best represented field of every round. Our question is therefore different: how much excess current velocity error accumulates when deployment precedes each fresh batch and the target changes? Appendix~\ref{sec:localrad} compares our direct historical estimation with the localized analysis.

\subsection{Nonstationary Online Learning and Dynamic Regret}
Tracking the best expert under switches \citep{herbster1998} and dynamic regret for online convex optimization against moving comparators \citep{zinkevich2003} are the classical starting points. \citet{besbes2015} study non-stationary stochastic optimization under a budget on the temporal variation of the loss functions. Our relative velocity risk variation is a class-relative budget of the same functional type, measured by the half-oscillation of risk increments so that changes common to all represented fields are free. The target-law variation is different in kind: it is a budget on the targets themselves, not on the losses or on the comparators.

Strongly adaptive methods control regret on every interval. \citet{daniely2015} use geometric covering intervals, and \citet{jun2017} improve strongly adaptive rates through coin betting. \citet{zhang2018} show how interval guarantees yield dynamic regret by partitioning the horizon according to functional variation. Our dyadic lifetimes follow the geometric-covering idea, and our analysis also partitions the horizon only in the proof. \citet{wei2016} track the best expert in non-stationary stochastic environments and show how statistical variance interacts with switching and variation budgets. This stochastic-expert view is close to our fresh-batch protocol, in which one batch scores every active candidate. We adapt these ingredients to stochastic population risks of data-dependent historical fits, and we select a single field at every round so that the deployed field stays in the represented class.

The master of DyProd-FM is the Prod algorithm with sleeping experts. \citet{gaillard2014} prove second-order regret bounds in terms of excess losses for multiple-rate Prod and give the sleeping-expert reduction. With one rate, the selection probabilities are normalized weights (Appendix~\ref{sec:algremarks}). Our contribution at this step is the population analysis: the squared flow-matching loss bounds the second moment of the score differences by velocity risks, which turns the potential inequality into the fast master bound of Lemma~\ref{lem:fastmaster}.

\subsection{Fast Dynamic Regret and Total-Variation Forecasting}
\citet{baby2019} study online forecasting of total-variation-bounded sequences under squared loss and establish the minimax rate $n^{1/3}C_n^{2/3}$, with a matching lower bound and a bin-counting argument. \citet{baby2021} obtain optimal dynamic regret for exp-concave losses, and \citet{baby2022} obtain such rates with proper learners for strongly convex losses and beyond. Their budgets are path variations of comparator sequences in convex decision sets. Their optimality conclusions therefore do not directly identify the minimax rate for our relative-risk variation and endpoint-feedback model.

Theorem~\ref{thm:target} attains the exponents $T^{1/3}V_\mu^{2/3}$ of total-variation forecasting, and its proof uses the bin-counting partition (Appendix~\ref{app:targetproof}). The decision set, however, is a nonconvex neural class, and the learner is proper. The curvature that the convex theory obtains from strong convexity or exp-concavity comes here from the flow-matching regression identity and from base smoothing (Lemma~\ref{lem:smoothing}).

\subsection{Fast Statistical Rates and Localization}
Variance control by excess risk underlies fast rates and localized complexity methods \citep{bartlett2005,srebro2010,blm2013}. In a convex class, squared loss gives an exact Bernstein condition. In a nonconvex represented class it does not, because the approximation error remains in the comparison between risk and distance (Lemma~\ref{lem:secondmoment}). We use one-sided Bernstein inequalities for independent, non-identically distributed observations together with a deterministic cover of the class, which gives explicit constants for the expectation bounds used online (Appendix~\ref{app:statistics}). For this parametric class the localized route gives the same order (Appendix~\ref{sec:localrad}).

\subsection{Minimax Lower Bounds}
Two-point and multiple-hypothesis reductions are the standard tools of nonparametric minimax theory \citep{tsybakov2009}. \citet{sahoo2026} use a rare-mixture pair to show that their fixed-target sample complexity cannot be improved. Our lower bounds apply such reductions block by block in time (Appendix~\ref{app:lower}). Every hard instance keeps the class, the envelope, and the flow-time law fixed, and only the target sequence varies with $T$ and the budget. In the rare-mixture construction a trainable slope lets one fixed class follow a far target component, whose location grows like $\sqrt{\log(1/\eta)}$ as in \citet{sahoo2026}. This rules out constructions that shrink the hard class as the budget changes, which would change the constants that the upper bounds depend on.

%
\supppage

\section{Algorithm and Consequences of Assumptions}\label{app:consequences}
\subsection{Notation}\label{sec:notation}
\begin{longtable}{@{}p{0.2\textwidth}p{0.75\textwidth}@{}}
\toprule
Symbol & Meaning in Theorems~\ref{thm:main}--\ref{thm:target}\\
\midrule
\endfirsthead
\toprule Symbol & Meaning\\\midrule\endhead
$T,m,d$ & Horizon, triples per round, dimension.\\
$\nu,\mu_t,\lambda$ & Base law, round-$t$ target law, flow-time law.\\
$\V,B_v,L_x$ & Represented class, its output bound, and its spatial Lipschitz constant (Equation~\eqref{eq:architecturebounds}).\\
$Q_t,a_t,F_t$ & Absolute velocity risk, class approximation error, excess velocity risk (Equation~\eqref{eq:excess}).\\
$\Reg,\App$ & Dynamic velocity regret and cumulative approximation error (Equation~\eqref{eq:regret}).\\
$\Omega,N_\omega$ & Caps on the relative velocity risk variation and its change count (Equation~\eqref{eq:caps}).\\
$V_\mu,N_\mu$ & Caps on the target-law variation and its change count (Equation~\eqref{eq:targetvariation}).\\
$\kappa,c_Z$ & Endpoint and label envelopes (Assumptions~\ref{ass:tails}--\ref{ass:path}).\\
$\eopt$ & Accuracy of the fitting routine (Assumption~\ref{ass:fit}).\\
$\iota,m_h,m_k$ & Base Fisher-information bound and flow-time moments (Equation~\eqref{eq:moments}).\\
$M,B,\delta$ & Gate radius, proxy loss bound, original-to-proxy discrepancy (Equations~\eqref{eq:gate}, \eqref{eq:delta}).\\
$\zeta,K_M,\Gamma$ & Cover accuracy $B/(mT)^2$, cover size, and $\Gamma=1+\log(2K_M)$.\\
$M_e,\Lambda$ & Number of lifetimes and $\log(eM_e)$.\\
$D_T,K_T,H_T,e_f$ & Constants of Theorem~\ref{thm:main} (Equation~\eqref{eq:constants}).\\
$\Lsm,\esm,c_2,\bar c$ & Smoothing constants of Theorem~\ref{thm:target} (Equation~\eqref{eq:smoothing-constants}).\\
$K_T^+,H_T^+,e_f^+$ & Remaining constants of Theorem~\ref{thm:target} (Equation~\eqref{eq:constants-target}).\\
$\Fast,\Targ$ & The bounds in Equations~\eqref{eq:fast} and~\eqref{eq:target}.\\
\bottomrule
\end{longtable}
The same letter can carry a distinct role only with its specified index and type: for example, $b_t(x,r)$ is a vector field whereas $b_j$ in a network is a finite-dimensional layer bias. Empty sums equal zero and empty products equal one. All logarithms without a subscript are natural logarithms. A singleton interval has no internal variation edges.

\subsection{Algorithm and Implementation Details}\label{sec:algorithm-details}
\medskip
\noindent\fbox{\begin{minipage}{0.97\textwidth}
\refstepcounter{algorithmctr}\label{alg:fit}
\centerline{\textbf{Algorithm~\thealgorithmctr:} $\textsc{PrefixFM}$, gated flow-matching training on a historical prefix}
\vspace{4pt}
\textbf{Given:} the stored triples $\xi_{s,i}=(X_{0,s,i},X_{1,s,i},r_{s,i})$ of the prefix $H=[a,a+q-1]$, $i\le m$, formed in line~5 of Algorithm~\ref{alg:nsfm}, and the gate radius $M$.\\
\textbf{Design parameters:} the network of Assumption~\ref{ass:network}, an optimizer such as stochastic gradient descent or Adam, a number of iterations $N$, and the initial parameters $\theta_{\mathrm{start}}$: $\theta_a^{(0)}$ of Algorithm~\ref{alg:nsfm} for the prefix $[a,a]$, and the completed fit for $[a,a+q/2-1]$ for the prefix $[a,a+q-1]$ with $q\ge2$ (Appendix~\ref{sec:algremarks}).
\begin{enumerate}[label=\arabic*:,leftmargin=1.8em,itemsep=1pt,topsep=3pt]
\item For every stored triple, form $U_{s,i}=(I_{r_{s,i}}(X_{0,s,i},X_{1,s,i}),r_{s,i})$, $Z_{s,i}=Z_{r_{s,i}}(X_{0,s,i},X_{1,s,i})$, and $G_{s,i}=\ind\{\norm{X_{0,s,i}}\le M,\norm{X_{1,s,i}}\le M\}$.
\item Set $\theta\gets\theta_{\mathrm{start}}$ and reset the optimizer state.
\item \textbf{for} $j=1,\ldots,N$ \textbf{do}
\item\quad Update $\theta$ to decrease $\widehat L_{H,M}(\theta)=\frac1{mq}\sum_{s\in H}\sum_{i\le m}G_{s,i}\norm{v_\theta(U_{s,i})-Z_{s,i}}^2$, and project $\theta$ onto $\Theta$.
\item \textbf{end for}
\item \textbf{return} $v_\theta$.
\end{enumerate}
\end{minipage}}
\medskip

Algorithm~\ref{alg:fit} is the standard flow-matching training loop, as in \citet{sahoo2026}, applied to the gated loss of one historical prefix. Assumption~\ref{ass:fit} asks only that the returned network be within $\eopt$ of the minimum of $\widehat L_{H,M}$ over $\V$. The optimizer and $N$ determine $\eopt$, which the analysis treats as given. Accuracy guarantees for particular training routines are outside the scope of this work.

\paragraph{Flow-time laws.} The flow time of every triple is drawn from $\lambda$ when the triple is formed. Theorem~\ref{thm:main} uses $\lambda=\mathrm{Unif}[0,1]$, the standard choice. Theorem~\ref{thm:target} uses any law satisfying Assumption~\ref{ass:smooth}(c), for example $\mathrm{Unif}[0,1-\eta_0]$, $\mathrm{Beta}(a,b)$ with $b>2$, or a logit-normal law \citep{esser2024}. By Remark~\ref{rem:timelaw}, Theorem~\ref{thm:main} also holds under the laws of Assumption~\ref{ass:smooth}, so with such a law one run satisfies both theorems.
\subsubsection{Initialization and Predictability}\label{sec:algremarks}
Each expert is a dyadic lifetime with its current fitted field. Algorithm~\ref{alg:nsfm} uses the design parameters of Section~\ref{sec:learner}. The fitting accuracy $\eopt$ is determined by the training routine and is not an input.

\paragraph{Initialization.} For the compact feasible parameter set $\Theta\subset\R^P$ of Assumption~\ref{ass:network}, draw independently across starts and independently of the observations and selection seeds
\begin{equation}\label{eq:init-law}
 g_a\sim N(0,P^{-1}I_P),\qquad \theta_a^{(0)}=\Pi_\Theta(g_a),\qquad \Pi_\Theta(g)=\operatorname{lexmin}\argmin_{\theta\in\Theta}\norm{\theta-g}^2 .
\end{equation}
Order trainable coordinates by layer, with row-major weights followed by biases, and retain prescribed fixed entries. Compactness and the lexicographic tie rule define a deterministic measurable projection, so every initial field belongs to $\V$.

\paragraph{Prefixes and warm starts.} Process the root first. At age zero, a lifetime uses the latest completed root fit, or the fixed startup field $v_{\theta_1^{(0)}}$ if none exists. At age $k\ge1$, it uses its own prefix of length $q=2^{\lfloor\log_2k\rfloor}$: the $q=1$ fit starts from $\theta_a^{(0)}$, and each longer fit starts from the completed $q/2$ fit with optimizer buffers and step counter reset. Lifetimes with the same start share initialization and fits. Nested prefixes use only their own data. The root fallback never initializes a fit. The analysis depends only on the empirical gap in Assumption~\ref{ass:fit}, so projected shrink-and-perturb or fresh starts are also valid when they use only prefix data and independent randomness \citep{ash2020}.

\paragraph{Predictability and weights.} Every prefix ends by $t-1$, refits occur only when its length doubles, and scoring and weight updates use the pre-batch fields. The root is always active. All conceptual weights start at $1/M_e$ and receive no new mass on waking. Lemma~\ref{lem:potential} ensures positivity, and the common rate $\eta=1/32$ makes Prod probabilities normalized weights \citep{gaillard2014}. The predictable fallback satisfies $0\le F_a(v)\le B+\delta$ by Equation~\eqref{eq:proxybias}, which is all the lifetime proof needs at age zero. These rules specify initialization and data use. Optimizer convergence is covered only by Assumption~\ref{ass:fit}.

\subsection{Probability Model and Regression Identity}\label{app:setup}
\subsubsection{Conditioning and Predictability}
The environment takes values in a standard Borel space, and the target distribution is a jointly measurable probability kernel of $(E,t)$. The base is fixed. Given the environment, generate all triples with the product law in \eqref{eq:protocol}. Independent fitting and selection seeds may be sampled on an additional product probability space. This construction permits a random environment while ensuring that the targets cannot adapt to sample noise or the learner's choices.

For analysis, fix $E$. Let $\mathcal F_{t-1}$ contain all earlier batches, earlier selection seeds, and the fitting randomness used to form the round-$t$ candidates. Conditional on $E$ and $\mathcal F_{t-1}$, candidate fields and probabilities are fixed. The new selection seed and new batch are independent of this information and of each other. If $f_t(v)$ is any integrable population functional, proper random selection consequently satisfies
\begin{equation}\label{eq:conditional-selection}
 \E[f_t(v_t)\mid E,\mathcal F_{t-1}]=\sum_{j\ \mathrm{active}}p_{j,t}f_t(v_{j,t}).
\end{equation}
For a current empirical proxy score, the same conditioning gives its population mean $L_{t,M}(v_{j,t})/B$. The fields may depend arbitrarily on their earlier training observations, while the evaluation batch is fresh. Conditioning additionally on the selected index is unnecessary and would obscure why every active model can be compared on one batch.

A historical prefix attached to a deterministic lifetime and a deterministic age has deterministic endpoints. Its fit is random, but the concentration theorem is uniform in the candidate parameter after the history is fixed. We do not apply a fixed-sample theorem to a stopping-time history. Population-dependent partitions used later are deterministic once $E$ is fixed. These are the exact reasons that the historical and online arguments are compatible.

\subsubsection{Finiteness, Measurability, and the Regression Identity}
Assumption~\ref{ass:tails} and Jensen imply finite endpoint second moments. Write $S=\norm{X_0}+\norm{X_1}$ in this paragraph. Assumption~\ref{ass:path}, or Assumption~\ref{ass:smooth}(a), yields $\norm Z\le c_Z(S+1)$, and Proposition~\ref{prop:architecture} gives $\norm{v_\theta}\le B_v$. Hence $[B_v+c_Z(S+1)]^2$ is an integrable common envelope for every original loss. Conditional expectation is well defined in $L^2$, and Jensen gives $\E_t\norm{b_t(U)}^2\le\E_t\norm Z^2$.

The standard Borel construction permits disintegration to obtain a jointly measurable version of the conditional mean as a function of the environment and input. To verify Equation~\eqref{eq:regression}, write
\begin{equation}
 v(U)-Z=[v(U)-b_t(U)]+[b_t(U)-Z].
\end{equation}
Expand its squared norm. The cross term has conditional expectation zero because $v(U)-b_t(U)$ is measurable in $U$ and $\E_t[b_t(U)-Z\mid U]=0$. Both factors are square integrable, so this orthogonality is legitimate. Taking expectations gives Equation~\eqref{eq:regression}, and subtracting the minimum gives Equation~\eqref{eq:excess} and summing gives Equation~\eqref{eq:regret}.

For fixed data, the empirical proxy objective is continuous in $\theta$. For a fixed environment, dominated convergence gives continuity of $L_t(v_\theta)$ and $Q_t(v_\theta)$. Compactness therefore gives all stated minima. A countable dense subset of $\Theta$ yields the same extrema of continuous functions, proving measurability of minimum risks and the suprema defining $\omega_t$. Measurable minimizers follow from the measurable maximum theorem applied to the compact-valued argmin correspondence \citep{aliprantis2006}. The existence of a measurable optimizer is a mathematical statement, not a computational complexity bound.

\subsubsection{Risk Oscillation and an Offset Example}
For a bounded real function $f$ on $\V$, define $\osc(f)=\sup_{v\in\V}f(v)-\inf_{v\in\V}f(v)$. Then $2\omega_t=\osc(L_{t+1}-L_t)$. The elementary inequality $\osc(f+g)\le\osc(f)+\osc(g)$ proves that for $s<t$,
\begin{equation}\label{eq:oscillation-telescope}
 \osc(L_t-L_s)\le2\sum_{j=s}^{t-1}\omega_j.
\end{equation}
Moreover, adding any constant to $f$ leaves its oscillation unchanged. In fact, $\frac12\osc(f)=\inf_{c\in\R}\sup_{v\in\V}|f(v)-c|$, since the midpoint of its range attains the infimum. Thus the budget ignores precisely the common component of a risk increment. This quotient interpretation is elementary and is not a new seminorm construction.

For a concrete flow-matching example, let $d=1$, take a centered Gaussian base, and use linear interpolation. Let the represented class be constant fields $v_c(x,r)=c$ for $|c|\le R$ with $R>0$. This is an admissible constrained network subclass with zero output weights and trainable bounded output bias. Let the target alternate between $N(0,1)$ and $N(0,4)$. The observed label is $Z=X_1-X_0$ and has mean zero. Consequently $L_t(v_c)=c^2+\E_t Z^2$, so every relative risk is unchanged and $\omega_t=0$, whereas $\sup_c|L_{t+1}(v_c)-L_t(v_c)|=3$ at each alternation. The endpoints satisfy a common finite sub-Gaussian envelope. This example illustrates what the regret budget measures. It does not assert that the constant-field class accurately generates both targets.

\subsection{Neural-Class Consequences and Covering Numbers}\label{app:architecture}
\subsubsection{Compactness and Network Bounds}
\begin{proposition}\label{prop:architecture}
Under Assumption~\ref{ass:network}, the parameter set is compact and the field is globally bounded by $B_v$ and globally spatially $L_x$-Lipschitz, with the constants in Equation~\eqref{eq:architecturebounds}. It is differentiable in its input. For each finite radius $R$, define $k_j=L_\phi$ for $j<D$ and $k_D=1$, and define hidden-state envelopes by $H_0(R)=\sqrt{R^2+1}$ and $H_j(R)=b\sqrt{n_j}$ for $1\le j<D$. Then a parameter-sensitivity constant is
\begin{equation}\label{eq:paramconstant}
 C_R=\sum_{j=1}^D k_j[H_{j-1}(R)+1]\prod_{i=j+1}^D(k_iV_i),
\end{equation}
and it satisfies
\begin{equation}\label{eq:paramsensitivity}
 \sup_{\norm{x}\le R,\ r\in[0,1]}\norm{v_\theta(x,r)-v_{\theta'}(x,r)}
 \le C_R\norm{\theta-\theta'}.
\end{equation}
A valid radius containing the parameter set is
\begin{equation}\label{eq:paramradius}
 R_\Theta^2=\sum_{j=1}^D\big[\min(n_j,n_{j-1})V_j^2+c_j^2\big].
\end{equation}
\end{proposition}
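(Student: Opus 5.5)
The proof proceeds directly from the layer structure in Equation~\eqref{eq:network}, handling compactness, the output bound, the spatial Lipschitz constant, and parameter sensitivity in turn.

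\textbf{Compactness.} $\Theta$ is closed by assumption. Each trainable entry of $W_j$ is bounded by $\norm{W_j}_F$, and $\norm{W_j}_F^2\le\operatorname{rank}(W_j)\norm{W_j}_{\mathrm{op}}^2\le\min(n_j,n_{j-1})V_j^2$. Adding $\norm{b_j}^2\le c_j^2$ and summing over layers bounds $\norm\theta^2$ by $R_\Theta^2$ in \eqref{eq:paramradius}. Hence $\Theta$ is closed and bounded, so compact.

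\textbf{Output bound.} For $1\le j<D$, every hidden coordinate lies in $[-b,b]$, so $\norm{h_j}\le b\sqrt{n_j}=H_j(R)$. Then $\norm{v_\theta}\le\norm{W_D}_{\mathrm{op}}\norm{h_{D-1}}+\norm{b_D}\le V_Db\sqrt{n_{D-1}}+c_D=B_v$. This uses $D\ge2$, so that $h_{D-1}$ is a bounded hidden state.

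\textbf{Spatial Lipschitz constant and differentiability.} Each map $h_{j-1}\mapsto\phi_j(W_jh_{j-1}+b_j)$ is $L_\phi V_j$-Lipschitz, since a coordinatewise $L_\phi$-Lipschitz map is $L_\phi$-Lipschitz in Euclidean norm. The input map $x\mapsto(x,r)$ is $1$-Lipschitz in $x$, and the output layer is $V_D$-Lipschitz. Composing gives $L_x=V_D\prod_{j<D}L_\phi V_j$. Differentiability in the input follows from the chain rule, since the activations are differentiable and the layers are affine.

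\textbf{Parameter sensitivity.} This is the only step requiring care. Fix $\norm x\le R$ and $r\in[0,1]$, so $\norm{h_0}\le H_0(R)=\sqrt{R^2+1}$. Let $h_j,h_j'$ be the hidden states under $\theta,\theta'$. Write
\[
h_j-h_j'=\phi_j(W_jh_{j-1}+b_j)-\phi_j(W_j'h_{j-1}'+b_j'),
\]
and insert the intermediate term $\phi_j(W_jh_{j-1}'+b_j)$. The Lipschitz property then gives the one-step recursion
\[
\norm{h_j-h_j'}\le k_j\big[V_j\norm{h_{j-1}-h_{j-1}'}+\norm{W_j-W_j'}_{\mathrm{op}}H_{j-1}(R)+\norm{b_j-b_j'}\big],
\]
where $k_D=1$ because the output layer has no activation. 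The envelope $H_{j-1}(R)$ must bound $\norm{h'_{j-1}}$, and it does for every parameter in $\Theta$.

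Bound $\norm{W_j-W_j'}_{\mathrm{op}}\le\norm{W_j-W_j'}_F\le\norm{\theta-\theta'}$ and $\norm{b_j-b_j'}\le\norm{\theta-\theta'}$; fixed entries contribute zero difference. Unrolling the recursion from $j=1$ to $D$ with $h_0=h_0'$ shows that the layer-$j$ perturbation is amplified by $\prod_{i>j}k_iV_i$. The output discrepancy is therefore at most
\[
\sum_{j=1}^D k_j[H_{j-1}(R)+1]\prod_{i=j+1}^D k_iV_i\,\norm{\theta-\theta'},
\]
which is \eqref{eq:paramsensitivity} with the constant $C_R$ of \eqref{eq:paramconstant}.
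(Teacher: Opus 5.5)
Your proposal is correct and follows the paper's proof essentially step for step. It uses the rank bound $\norm{W_j}_F^2\le\min(n_j,n_{j-1})V_j^2$ for $R_\Theta$, the coordinatewise envelope $b\sqrt{n_{D-1}}$ for $B_v$, and layerwise composition for $L_x$. It then unrolls the same one-step recursion from $h_0=h_0'$ to obtain $C_R$; your intermediate term placed outside the activation gives the same inequality as the paper's add-and-subtract inside it.
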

\begin{proof}
Every matrix entry is bounded by the operator-norm envelope and every bias coordinate by its Euclidean envelope. Closedness and finite dimension therefore imply compactness. More quantitatively, the number of nonzero singular values of $W_j$ is at most $\min(n_j,n_{j-1})$, so $\norm{W_j}_F^2\le\min(n_j,n_{j-1})V_j^2$. Sum these bounds and the squared bias bounds. Removing fixed entries can only reduce the trainable-vector norm, which proves Equation~\eqref{eq:paramradius}.

Each coordinate of the final hidden vector is bounded by $b$, hence $\norm{h_{D-1}}\le b\sqrt{n_{D-1}}$. Applying the output affine map gives $\norm{v_\theta}\le V_Db\sqrt{n_{D-1}}+c_D=B_v$. For two inputs with the same flow time, every hidden layer multiplies their distance by at most $L_\phi V_j$ and the output layer by at most $V_D$. Composing gives $L_x$. The same reasoning applies to the full input $(x,r)$. Composition of differentiable maps gives differentiability, and continuous derivatives are not needed for these estimates.

To prove parameter sensitivity, compare two networks at the same input. Let $d_j=\norm{h_j-h'_j}$ for hidden layers and let $d_D$ denote the final output difference. Adding and subtracting $W_jh'_{j-1}+b_j$ inside the layer difference gives
\begin{align}
 d_j&\le k_j\norm{W_j(h_{j-1}-h'_{j-1})+(W_j-W'_j)h'_{j-1}+b_j-b'_j}\nonumber\\
 &\le k_jV_jd_{j-1}
 +k_j\big[\norm{W_j-W'_j}_{\mathrm{op}}H_{j-1}(R)+\norm{b_j-b'_j}\big].\label{eq:layerrecursion}
\end{align}
The first step for $j=D$ uses $k_D=1$. Each matrix operator-norm difference is at most its Frobenius difference, which in turn is at most the norm of the vector of all changed entries, and the same is true of each bias block. Both are at most $\norm{\theta-\theta'}$. Since $d_0=0$, expand the recursion: the perturbation introduced at layer $j$ is multiplied by the Lipschitz constants of layers $j+1,\ldots,D$. Summing these contributions is exactly Equation~\eqref{eq:paramconstant}. This proves \eqref{eq:paramsensitivity}.
\end{proof}
The product-sum expression is valid if a weight bound is zero, if a product equals one, and if different layers have different bounds. It never divides by $L_\phi V-1$. For a fixed architecture, $C_R=O(1+R)$. The first-layer monotonicity retained in Assumption~\ref{ass:network} is not used in this proof.

\subsubsection{Gated State Radius and Joint Loss Cover}
Absolute continuity of the interpolation and Assumption~\ref{ass:path} imply, for every endpoint pair,
\begin{equation}\label{eq:stateradiusproof}
 \norm{I_r(x_0,x_1)}
 \le\norm{x_0}+\int_0^r\norm{Z_s(x_0,x_1)}\,ds
 \le\norm{x_0}+c_Z(\norm{x_0}+\norm{x_1}+1).
\end{equation}
On the gate, define the resulting spatial radius as $R_M=M+c_Z(2M+1)$. A linear-endpoint interpolation $I_r=\alpha(r)x_0+\beta(r)x_1$ permits the sharper radius $M\sup_r(|\alpha(r)|+|\beta(r)|)$, where $\alpha,\beta$ are its scalar schedules. The radius need not equal $M$: the sine--cosine path can have radius $\sqrt2M$.

\begin{corollary}\label{cor:cover}
Let $L_M=2C_{R_M}\sqrt B$ be the parameter-Lipschitz constant for the proxy loss. Write $N(\varepsilon,\mathcal C,\norm{\cdot}_\infty)$ for the covering number of a function class $\mathcal C$ at accuracy $\varepsilon$ in supremum norm over all endpoint-time triples. For any accuracy $\varepsilon>0$, the proxy-loss class has a uniform cover with centers in $\V$ and log cardinality at most
\begin{equation}\label{eq:entropy}
 \log N(\varepsilon,\{\ell_M(v_\theta):\theta\in\Theta\},\norm{\cdot}_\infty)
 \le P\log\left(1+\frac{2R_\Theta L_M}{\varepsilon}\right).
\end{equation}
If the loss class is constant, one center suffices. The same cover approximates proxy excess losses against any one fixed field.
\end{corollary}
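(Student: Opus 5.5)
The plan has two parts. First, a Lipschitz bound for the proxy loss in the parameter. Second, a volumetric cover of the parameter ball, pushed forward to the loss class.

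\emph{Lipschitz bound.} Fix two parameters $\theta,\theta'\in\Theta$ and any endpoint-time triple $(x_0,x_1,r)$. Off the gate, both proxy losses vanish, so their difference is zero. On the gate, $\norm{x_0},\norm{x_1}\le M$, so Equation~\eqref{eq:stateradiusproof} places the input $I_r(x_0,x_1)$ inside the spatial ball of radius $R_M$. The label satisfies $\norm{Z}\le c_Z(2M+1)$, and every field is bounded by $B_v$, so each residual $v(U)-Z$ has norm at most $\sqrt B$. Using $\abs{\norm a^2-\norm b^2}\le(\norm a+\norm b)\norm{a-b}$, we get
\[
 \abs{\ell_M(v_\theta)-\ell_M(v_{\theta'})}\le2\sqrt B\,\norm{v_\theta(U)-v_{\theta'}(U)}.
\]
Proposition~\ref{prop:architecture}, Equation~\eqref{eq:paramsensitivity}, at radius $R_M$ bounds the right side by $2\sqrt B\,C_{R_M}\norm{\theta-\theta'}=L_M\norm{\theta-\theta'}$. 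This bound holds uniformly over all triples, so $\theta\mapsto\ell_M(v_\theta)$ is $L_M$-Lipschitz into the supremum norm.

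\emph{Cover.} By Equation~\eqref{eq:paramradius}, $\Theta$ lies in the Euclidean ball of radius $R_\Theta$ in $\R^P$. Take a maximal $\varepsilon/L_M$-separated subset of $\Theta$ itself. It is an $\varepsilon/L_M$-cover, and its centers lie in $\Theta$, so the corresponding fields lie in $\V$. The standard volume argument compares disjoint balls of radius $\varepsilon/(2L_M)$ around the centers with the enlarged ball of radius $R_\Theta+\varepsilon/(2L_M)$. This bounds the cardinality by $(1+2R_\Theta L_M/\varepsilon)^P$. The Lipschitz step transfers the parameter cover to an $\varepsilon$-cover of the loss class in supremum norm, which gives Equation~\eqref{eq:entropy}.

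\emph{Edge cases.} If the loss class is constant, one center suffices. This also covers the degenerate cases $L_M=0$ and $R_\Theta=0$, where the bound is trivially at least $0$. For a fixed field $u$, each excess loss $\ell_M(v)-\ell_M(u)$ differs from $\ell_M(v_c)-\ell_M(u)$ by exactly $\ell_M(v)-\ell_M(v_c)$, so the same centers work for excess losses.

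The main obstacle is bookkeeping rather than an idea: the centers must lie in $\Theta$ rather than the ambient ball, which the maximal-packing construction handles, and $R_M$ must be used as the spatial radius instead of $M$.
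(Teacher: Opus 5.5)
Your proof is correct and follows the paper's argument essentially step for step. Both use the gated difference-of-squares bound with parameter sensitivity at spatial radius $R_M$, then a maximal separated net inside $\Theta$ counted by volume comparison and pushed forward through the Lipschitz map. The only point the paper makes explicit that you use implicitly is that the chosen Borel version of $Z_r$ satisfies its envelope at every triple, not just for almost every $r$, so the supremum-norm cover holds literally over all endpoint-time triples.
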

\begin{proof}
For represented predictions $u,v$, the difference-of-squares identity on the gate gives
\begin{align}
 \abs{\norm{v-Z}^2-\norm{u-Z}^2}
 &=\abs{\langle v-u,v+u-2Z\rangle}\nonumber\\
 &\le\norm{v-u}(\norm{v-Z}+\norm{u-Z})
 \le2\sqrt B\norm{v-u}.
\end{align}
Outside the gate both losses are zero. Inequality~\eqref{eq:paramsensitivity}, with radius $R_M$, makes this difference at most $L_M\norm{\theta-\theta'}$ for every triple. The Borel derivative version specified in Assumption~\ref{ass:path} satisfies its envelope everywhere, so the supremum assertion is literal and not restricted to a high-probability set of samples.

For a parameter-net radius $\gamma>0$, choose a maximal $\gamma$-separated set of points in $\Theta$. The open radius-$\gamma/2$ balls around them are disjoint and contained in a radius-$(R_\Theta+\gamma/2)$ ball in $\R^P$. Comparing volumes bounds their number by $(1+2R_\Theta/\gamma)^P$. Maximality makes this a $\gamma$-net. Take $\gamma=\varepsilon/L_M$ if $L_M>0$. If $L_M=0$, every parameter has the same loss. This proves \eqref{eq:entropy}. Subtracting a fixed comparator does not change the covering distances. A supremum-norm cover is also a cover in every empirical $L^2$ norm, under heterogeneous as well as identically distributed samples.
\end{proof}
The cover is for the complete vector-output network, not separate scalar-coordinate networks. The parameter count already contains the output dimension. Dimension may nevertheless enter $B_v$, $R_\Theta$, $L_x$, and the endpoint envelope. These constants are not all logarithmic in architecture size.

\subsubsection{Learned Dynamics and Approximation Boundaries}
For a represented field, global input Lipschitzness and continuity imply existence and uniqueness of the learned ODE on $[0,1]$. Its bounded speed gives
\begin{equation}\label{eq:boundedtrajectory}
 \sup_{r\in[0,1]}\norm{Y_r}\le\norm{Y_0}+B_v.
\end{equation}
There is thus no finite-time escape. A measurable solution map follows from Picard iteration, and \eqref{eq:boundedtrajectory} gives finite second moments when the base has finite second moment. None of these statements grants uniqueness for the ideal field.

Bounded outputs do impose an approximation restriction. Write $x_+=\max\{x,0\}$ for the positive part. The reverse triangle inequality gives, for every represented $v$,
\begin{equation}\label{eq:approx-floor}
 Q_t(v)\ge\E_t(\norm{b_t(U)}-B_v)_+^2,
\end{equation}
The right side may be positive even for infinite data. Thus Assumption~\ref{ass:network} alone does not imply realizability. Two further limits of the class are worth stating. The cover complexity in \eqref{eq:entropy} is linear in the number $P$ of trainable parameters, as is the localized Rademacher bound of \citet{sahoo2026} (Appendix~\ref{sec:localrad}), so the guarantees can be loose for heavily overparameterized networks. Bounded hidden activations exclude unclipped ReLU, GELU, and SiLU networks, a restriction shared with \citet{sahoo2026}. If widths or norm bounds increase with $T$ to reduce this floor, their effects on the cover and the flow-stability constant must be included. Parameter sensitivity is local in the input radius: the family $v_w(x)=\tanh(wx)$, $w\in[0,1]$, has $\sup_x|v_\varepsilon(x)-v_0(x)|=1$ for every $\varepsilon>0$, precluding a uniform global parameter-Lipschitz bound. The gated radius is therefore necessary in the entropy calculation.

\subsection{Endpoint Gating and Original-Risk Transfer}\label{app:proxy}
\begin{proposition}\label{prop:tail}
Under Assumptions~\ref{ass:tails}--\ref{ass:network}, the proxy in Equation~\eqref{eq:gate} satisfies $0\le\ell_M(v)\le B$. Its population version $L_{t,M}(v)=\E_t\ell_M(v)$ obeys, for every round and represented field,
\begin{equation}\label{eq:proxybias}
 0\le L_t(v)-L_{t,M}(v)\le\delta_M,
\end{equation}
with $\delta_M$ given by Equation~\eqref{eq:delta}. The calibration in Equation~\eqref{eq:gatechoice} gives \eqref{eq:tailrate}.
\end{proposition}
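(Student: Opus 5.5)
Three things need proof: the bound $0\le\ell_M\le B$, the two-sided proxy bias \eqref{eq:proxybias}, and the calibrated rate \eqref{eq:tailrate}.

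\textbf{Boundedness of the proxy.} Off the gate $\ell_M(v)=0$. On the gate, Proposition~\ref{prop:architecture} gives $\norm{v(U)}\le B_v$. The label envelope \eqref{eq:schedule} gives $\norm Z\le c_Z(2M+1)$. Hence $\norm{v(U)-Z}\le B_v+c_Z(2M+1)$, so $\ell_M\le B$. The zeroed derivative version of Assumption~\ref{ass:path} makes this hold pointwise, not only almost surely.

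\textbf{Lower bias bound.} Write
\[
L_t(v)-L_{t,M}(v)=\E_t\big[(1-G_M)\norm{v(U)-Z}^2\big].
\]
This is nonnegative, which gives the left inequality.

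\textbf{Upper bias bound.} Put $S=\norm{X_0}+\norm{X_1}$. By Proposition~\ref{prop:architecture} and \eqref{eq:schedule},
\[
\norm{v(U)-Z}\le B_v+c_Z+c_ZS,
\]
so $(a+b)^2\le2a^2+2b^2$ gives $\norm{v(U)-Z}^2\le2(B_v+c_Z)^2+2c_Z^2S^2$. Both coefficients are at most $C_\ell$, hence $\norm{v(U)-Z}^2\le C_\ell(1+S^2)$. Next, $S^2\le2(\norm{X_0}^2+\norm{X_1}^2)$. Also $1-G_M\le\ind\{\norm{X_0}>M\}+\ind\{\norm{X_1}>M\}$. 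Expanding the product leaves terms of two kinds:
\begin{itemize}
\item diagonal terms $\E_t[(1+\norm{X_j}^2)\ind\{\norm{X_j}>M\}]$;
\item cross terms $\E_t[\norm{X_k}^2\ind\{\norm{X_j}>M\}]$ with $k\ne j$.
\end{itemize}
Independence of $X_0$ and $X_1$ under \eqref{eq:protocol} factors each cross term as $\E_t\norm{X_k}^2\,\Prob(\norm{X_j}>M)$.

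\textbf{Tail estimates.} Assumption~\ref{ass:tails} with Markov gives $\Prob(\norm{X_j}>M)\le2e^{-M^2/\kappa^2}$. For the truncated second moment, split the exponent in half: on $\{\norm X>M\}$,
\[
\norm X^2\le\norm X^2e^{(\norm X^2-M^2)/(2\kappa^2)}.
\]
Then use $x\,e^{x/(2\kappa^2)}\le2\kappa^2e^{x/\kappa^2}$, which follows from $ye^{-y}\le1/e\le1$. This gives
\[
\E_t\big[\norm X^2\ind\{\norm X>M\}\big]\le4\kappa^2e^{-M^2/(2\kappa^2)}.
\]
Also $\E_t\norm X^2\le\kappa^2\log2$. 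Collecting terms yields $C_\ell(1+M^2+4\kappa^2)e^{-M^2/(4\kappa^2)}$ times a small constant. The weaker exponent $M^2/(4\kappa^2)$ leaves slack, so the prefactor $2$ is attainable.

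This bookkeeping is the main obstacle: the constants must land exactly inside $\delta_M$. My first attempt would bound $e^{-M^2/(2\kappa^2)}$ and $e^{-M^2/\kappa^2}$ by $e^{-M^2/(4\kappa^2)}$ and absorb all numeric factors into the $M^2$ and $4\kappa^2$ terms. If the constants do not close this way, I would instead apply Cauchy--Schwarz,
\[
\E_t\big[(1+S^2)(1-G_M)\big]\le\big[\E_t(1+S^2)^2\big]^{1/2}\,\Prob(G_M=0)^{1/2}.
\]
The square root of $\Prob(G_M=0)$ naturally produces the exponent $M^2/(2\kappa^2)$, which also explains the quartered exponent in \eqref{eq:delta}.

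\textbf{Calibration.} Substitute $M=4\kappa\sqrt{\log(emT)}$. Then $M^2/(4\kappa^2)=4\log(emT)$, so $e^{-M^2/(4\kappa^2)}=(emT)^{-4}$. Also $M^2=16\kappa^2\log(emT)$. Multiplying $\delta_M$ by $T$ gives \eqref{eq:tailrate}.
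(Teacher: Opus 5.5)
\textbf{There is a gap in the upper bias bound.} Your arguments for $0\le\ell_M\le B$, for the lower inequality, and for the calibration are correct. The upper inequality, however, is the real content of the proposition, and you assert it rather than prove it. The route you describe does not reach $\delta_M$ for all $M>0$.

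Carry out your expansion, keeping the factor $2$ from $S^2\le2(\norm{X_0}^2+\norm{X_1}^2)$, and insert your three estimates. This gives
\[
 \E_t[(1+S^2)(1-G_M)]\le4e^{-M^2/\kappa^2}+16\kappa^2e^{-M^2/(2\kappa^2)}+8\log 2\,\kappa^2e^{-M^2/\kappa^2}.
\]
As $M\downarrow0$ the right side tends to $4+(16+8\log 2)\kappa^2$, whereas $\delta_M/C_\ell\to2+8\kappa^2$.

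The ``slack'' from the weaker exponent exists only when $M^2/\kappa^2$ is large. If you replace every exponential by $e^{-M^2/(4\kappa^2)}$, absorbing the excess into the $M^2$ and $4\kappa^2$ terms requires $M^2\ge1+(4+4\log 2)\kappa^2$. Below that threshold you would need a separate argument with trivial bounds, namely $\Prob(G_M=0)\le1$ and $\E_t[\norm{X_j}^2\ind\{\norm{X_j}>M\}]\le\kappa^2\log 2$. Even then, the constant term must not be split by the union bound. The sum $\Prob(\norm{X_0}>M)+\Prob(\norm{X_1}>M)$ can be close to $2$, which exceeds $2e^{-M^2/(4\kappa^2)}$ when $\kappa$ is small. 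Your Cauchy--Schwarz fallback leaves the same small-$M$ comparison unchecked, and it needs a fourth-moment bound that you do not supply.

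\textbf{The fix is to treat $S$ jointly rather than splitting by endpoint.} This is the paper's argument.
\begin{enumerate}
\item Since $S^2\le2(\norm{X_0}^2+\norm{X_1}^2)$, Cauchy--Schwarz gives $\E_te^{S^2/(4\kappa^2)}\le(\E_te^{\norm{X_0}^2/\kappa^2})^{1/2}(\E_te^{\norm{X_1}^2/\kappa^2})^{1/2}\le2$. This step needs no independence, and it is the source of the factor $4$ in the exponent, not a square root of $\Prob(G_M=0)$.
\item Markov's inequality then gives $\Prob(S^2>u)\le2e^{-u/(4\kappa^2)}$ for every $u\ge0$.
\item Rejection implies $S>M$, so the omitted loss is at most $C_\ell\E_t[(1+S^2)\ind\{S^2>M^2\}]$.
\item Apply the identity $\E[(1+Y)\ind\{Y>a\}]=(1+a)\Prob(Y>a)+\int_a^\infty\Prob(Y>u)\,du$ with $Y=S^2$ and $a=M^2$.
\end{enumerate}
The result is exactly $2C_\ell(1+M^2+4\kappa^2)e^{-M^2/(4\kappa^2)}=\delta_M$, uniformly in $M>0$ and with no case split.
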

\begin{proof}
Fix the environment and round. Let $S=\norm{X_0}+\norm{X_1}$. Since the represented output is bounded,
\begin{align}
 \norm{v(U)-Z}^2
 &\le[B_v+c_Z+c_ZS]^2\nonumber\\
 &\le2(B_v+c_Z)^2+2c_Z^2S^2\le C_\ell(1+S^2).
\end{align}
On the gate $S\le2M$, giving the proxy bound $B$. To bound the omitted expected loss, use $S^2\le2(\norm{X_0}^2+\norm{X_1}^2)$ and Cauchy--Schwarz:
\begin{align}
 \E_t e^{S^2/(4\kappa^2)}
 &\le\E_t e^{(\norm{X_0}^2+\norm{X_1}^2)/(2\kappa^2)}\nonumber\\
 &\le\big(\E_te^{\norm{X_0}^2/\kappa^2}\big)^{1/2}
       \big(\E_te^{\norm{X_1}^2/\kappa^2}\big)^{1/2}\le2.
\end{align}
This step does not require independence of the endpoints. Markov's inequality gives $\Prob(S^2>u\mid E)\le2e^{-u/(4\kappa^2)}$. Rejection implies $S>M$. For any nonnegative random variable $Y$ and threshold $a$, tail integration gives
\begin{equation}
 \E[(1+Y)\ind\{Y>a\}]=(1+a)\Prob(Y>a)+\int_a^\infty\Prob(Y>u)\,du.
\end{equation}
Apply it with $Y=S^2$ and $a=M^2$. The omitted loss is at most
\begin{align}
 C_\ell\E_t[(1+S^2)\ind\{G_M=0\}]
 &\le2C_\ell\left[(1+M^2)e^{-M^2/(4\kappa^2)}
 +\int_{M^2}^\infty e^{-u/(4\kappa^2)}du\right]\nonumber\\
 &=2C_\ell(1+M^2+4\kappa^2)e^{-M^2/(4\kappa^2)}.
\end{align}
The loss is nonnegative, so the discrepancy also has the correct lower sign. Substituting $M^2=16\kappa^2\log(emT)$ makes the exponential $(emT)^{-4}$, which proves \eqref{eq:tailrate}.
\end{proof}

\subsubsection{Consequences Used by the Statistical Proof}
Define the nonnegative bias function $\beta_t(v)=L_t(v)-L_{t,M}(v)$. Inequality~\eqref{eq:proxybias} states $\beta_t(v)\in[0,\delta]$. Thus a proxy difference and its original counterpart differ by at most $\delta$, and the later estimates conservatively use $2\delta$. If proxy losses of two fields differ uniformly by at most $\zeta$, their original risks differ by at most $\zeta+2\delta$. These inequalities also hold after averaging over any deterministic historical interval. They are uniform over $v$, so they apply to data-dependent fits.

The gate is shared by all models, independent of their predictions. A rejected example remains in the denominator of the empirical average. This makes the objective exactly the empirical mean of $\ell_M$ and the score expectation exactly $L_{t,M}$. Dividing instead by the number accepted would define a different random-denominator objective. Resampling until acceptance would change the data law. Neither operation is part of the learner.

The gated conditional regression target need not equal $b_t$: wherever $\E_t[G_M\mid U]>0$, the gated minimizer among arbitrary measurable fields is $\E_t[G_MZ\mid U]/\E_t[G_M\mid U]$. We do not substitute it for the original conditional mean. Instead, we bound gated second moments by ungated velocity risks and use \eqref{eq:proxybias} only for mean-risk transfer. This avoids a hidden assumption that truncation preserves the regression target.

\subsubsection{Tail Formulations and Bounded-Support Recovery}
To relate Assumption~\ref{ass:tails} to directional tails, suppose for a finite $K>0$ that $\Prob(|\langle u,X\rangle|\ge z\mid E)\le2e^{-z^2/K}$ for every unit vector $u$ and $z\ge0$. Each coordinate then satisfies
\begin{equation}
 \E[e^{X_i^2/(3K)}\mid E]
 =1+\frac1{3K}\int_0^\infty e^{s/(3K)}\Prob(X_i^2>s\mid E)\,ds
 \le1+\frac2{3K}\int_0^\infty e^{-2s/(3K)}ds=2.
\end{equation}
Convexity of the exponential gives $\E[e^{\norm X^2/(3Kd)}\mid E]\le d^{-1}\sum_i\E[e^{X_i^2/(3K)}\mid E]\le2$. Thus $\kappa^2=3Kd$ is sufficient without coordinate independence. Conversely, Assumption~\ref{ass:tails} gives $\Prob(\norm X\ge R\mid E)\le2e^{-R^2/\kappa^2}$ and hence directional control. This comparison retains dimension dependence and requires a uniform envelope across rounds.

If endpoints already have known bounded support, choose the proxy to be the original loss, use its positive envelope $B$, and set $\delta=0$. The remaining proofs apply unchanged. If the loss is identically zero, regret vanishes and normalization can be omitted. Under the stated sub-Gaussian model, $c_Z>0$ and the chosen $M$ ensure $B>0$ without such a special case.

\supppage

\section{Useful Results to Prove Theorems 1 and 2}\label{app:common}\label{app:statistics}\label{app:algorithm}
The historical-estimation and aggregation arguments below apply to both theorems. The first-order transfer and lifetime bounds also supply the change-count branch of Theorem~\ref{thm:target} through Theorem~\ref{thm:main}. The final lemma provides the dyadic cover used in both proofs.

\subsection{Historical Statistical Estimation}\label{sec:shared-statistics}
Fix the environment throughout this appendix. Let $H$ be a deterministic interval of $q\ge1$ rounds. It contains $n=mq$ observations, denoted by $\xi_1,\ldots,\xi_n$ in an arbitrary deterministic order preserving each observation's round label. Write $s(i)$ for observation $i$'s round, and $P_i$ for its conditional law. Independence holds across $i$, but the laws $P_i$ may differ. The comparator $v_H^\star$, error $\bar a_H$, and excess $r_H$ are defined in Equation~\eqref{eq:historical}. The population comparator is fixed under the conditioned environment and is not fitted to these observations.

\subsubsection{A Derived Approximate Bernstein Inequality}
\begin{lemma}\label{lem:secondmoment}
For a fixed represented field $v$, let $g_i(v)=\ell_M(v;\xi_i)-\ell_M(v_H^\star;\xi_i)$. Then $|g_i(v)|\le B$ and
\begin{equation}\label{eq:secondmoment}
 \frac1n\sum_i\E g_i(v)^2\le8B\{r_H(v)+2\bar a_H\}.
\end{equation}
Consequently the same right-hand side bounds the average of their variances.
\end{lemma}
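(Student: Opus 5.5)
Both claims are pointwise-then-average arguments. The pointwise inequality comes from the gated difference-of-squares bound of Corollary~\ref{cor:cover}; averaging converts prediction distances into velocity risks through the regression identity.

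\textbf{Step 1 (the range bound).} Proposition~\ref{prop:tail} gives $0\le\ell_M(v)\le B$ for every represented field. Hence each $g_i(v)$ is a difference of two numbers in $[0,B]$, so $|g_i(v)|\le B$.

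\textbf{Step 2 (a pointwise bound).} On the gate, the difference-of-squares computation in the proof of Corollary~\ref{cor:cover} gives
\[
|g_i(v)|\le2\sqrt B\,\norm{v(U_i)-v_H^\star(U_i)}.
\]
The factor $\norm{v-Z}+\norm{v_H^\star-Z}$ there is at most $2\sqrt B$, because on the gate each residual is at most $B_v+c_Z(2M+1)=\sqrt B$. Off the gate $g_i=0$. Squaring gives $g_i^2\le4B\norm{v(U_i)-v_H^\star(U_i)}^2$ for every triple. This step needs no gate-adjusted regression target, which matters because the gated conditional mean differs from $b_t$.

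\textbf{Step 3 (from distances to risks).} Take expectations under $P_i$, whose round is $s=s(i)$. Insert the round-specific ideal field $b_s$ and use $\norm{a+c}^2\le2\norm a^2+2\norm c^2$:
\[
\E g_i^2\le8B\,[Q_s(v)+Q_s(v_H^\star)].
\]
Average over $i$. Each round contributes $m$ observations, so the average equals $\frac1q\sum_{s\in H}$ of the round terms. The second term averages to $\bar a_H$ by definition. For the first, the regression identity \eqref{eq:regression} gives $Q_s(v)-Q_s(v_H^\star)=L_s(v)-L_s(v_H^\star)$ in each round. Averaging gives
\[
\frac1q\sum_{s\in H}Q_s(v)=r_H(v)+\bar a_H.
\]
Together these yield $8B\{r_H(v)+2\bar a_H\}$. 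Since a variance never exceeds the corresponding second moment, the same bound holds for the average of the variances.

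\textbf{Main obstacle.} The only delicate point is Step~3. The comparator $v_H^\star$ is common to all rounds, while the ideal fields $b_s$ differ across rounds. The triangle inequality must therefore be applied round by round with $b_{s(i)}$, and only afterwards averaged. This is exactly why the residual $\bar a_H$, rather than the average of the $a_s$, appears on the right-hand side.
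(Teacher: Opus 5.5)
Your proposal is correct and follows the paper's proof essentially step for step. You use the gated difference-of-squares bound $g_i(v)^2\le4BG_M\norm{v(U_i)-v_H^\star(U_i)}^2$, insert the round-specific ideal field $b_{s(i)}$ to get $8B\{Q_{s(i)}(v)+Q_{s(i)}(v_H^\star)\}$, average with $m$ observations per round, and apply the regression identity $q^{-1}\sum_{s\in H}Q_s(v)=r_H(v)+\bar a_H$, with the variance claim following from $\operatorname{Var}\le$ second moment exactly as in the paper.
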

For a single target law this is the Bernstein-type variance control of squared-loss flow matching used by \citet{sahoo2026}, in the approximate form that follows from the regression identity. For a heterogeneous history the comparator is the best common field $v_H^\star$, and the residual $\bar a_H$ can be positive even when every round is realizable.
\begin{proof}
Both proxy losses lie in $[0,B]$. On an accepted observation both ungated residuals have norm at most $\sqrt B$, so the difference-of-squares identity yields
\begin{equation}
 g_i(v)^2\le4B G_M(\xi_i)\norm{v(U_i)-v_H^\star(U_i)}^2.
\end{equation}
Insert the ideal field $b_{s(i)}$ appropriate to this observation's law and use $\norm{x+y}^2\le2\norm{x}^2+2\norm{y}^2$:
\begin{equation}
 \E g_i(v)^2\le8B\{Q_{s(i)}(v)+Q_{s(i)}(v_H^\star)\}.
\end{equation}
Dropping the gate is valid because the squared quantities are nonnegative. Each round occurs exactly $m$ times. Averaging over $i$ therefore gives $8B[q^{-1}\sum_sQ_s(v)+\bar a_H]$. The regression identity implies $q^{-1}\sum_sQ_s(v)=r_H(v)+\bar a_H$, since the irreducible regression-noise term cancels separately at each round. Substituting proves \eqref{eq:secondmoment}. Finally $\operatorname{Var}(g_i)\le\E g_i^2$ for each individual law.
\end{proof}
The intermediate function-space inequality is
\begin{equation}\label{eq:fielddistance}
 \frac1q\sum_{s\in H}\E_s\norm{v(U)-v_H^\star(U)}^2
 \le2r_H(v)+4\bar a_H.
\end{equation}
It follows from the same insertion of $b_s$. This is not a parameter-space curvature assumption, and $\bar a_H$ cannot in general be removed. At a tie between two separated class minimizers, excess risk may be zero while their prediction separation is positive. For example, take a scalar base concentrated at zero, a target uniform on $\{-1,1\}$, linear interpolation, and only the two constant fields $v_\pm=\pm c$. Both minimize the loss, yet their sampled loss difference $g=-4cX_1$ has
\begin{equation}\label{eq:misspecified-example}
 L(v_+)-L(v_-)=0,\qquad\operatorname{Var}(g)=16c^2.
\end{equation}
No constant bounds this variance by their zero excess risk, so the approximation term in \eqref{eq:secondmoment} is substantive for nonconvex classes. The example explains a proof boundary and is not a lower bound on achievable regret.

\subsubsection{Uniform Concentration for a Fixed History}
For a fixed $v$, center each observation separately by defining $Y_i(v)=\E g_i(v)-g_i(v)$. The variables are independent and mean zero, with $|Y_i(v)|\le2B$. Their total variance is at most $8Bn[r_H(v)+2\bar a_H]$. The one-sided Bernstein inequality for independent bounded variables \citep{blm2013} then implies, for any $u\ge0$, except on an event of probability at most $e^{-u}$,
\begin{equation}\label{eq:bernstein-fixed}
 \frac1n\sum_i\E g_i(v)-\frac1n\sum_i g_i(v)
 \le4\sqrt{\frac{B[r_H(v)+2\bar a_H]u}{n}}+\frac{2Bu}{n}.
\end{equation}
The standard linear Bernstein remainder $2Bu/(3n)$ is enlarged to $2Bu/n$. No identical-distribution reduction is used.

Choose a deterministic $\zeta$-cover of the proxy-loss class with $K_M$ centers, as in Corollary~\ref{cor:cover}. For a desired $z\ge0$, set $u=\log K_M+z$. Apply \eqref{eq:bernstein-fixed} to each center and sum the failure probabilities: $K_Me^{-u}=e^{-z}$. On the resulting event the inequalities hold at every center simultaneously.

For a general field $v$, let $v_k$ be a center whose proxy loss is within $\zeta$ uniformly. Define $H_u=Bu/n$ for this proof. Empirical comparator differences satisfy
\begin{equation}
 \abs{\frac1n\sum_i g_i(v)-\frac1n\sum_i g_i(v_k)}\le\zeta,
\end{equation}
and the original population excesses satisfy $|r_H(v)-r_H(v_k)|\le\zeta+2\delta$. In addition, the average proxy excess against $v_H^\star$ differs from $r_H(v_k)$ by at most $2\delta$. Combining these three comparisons with \eqref{eq:bernstein-fixed} at $v_k$ gives
\begin{align}
 r_H(v)&\le\frac1n\sum_i g_i(v)+2\zeta+4\delta
 +4\sqrt{H_u[r_H(v_k)+2\bar a_H]}+2H_u\nonumber\\
 &\le\frac1n\sum_i g_i(v)+2\zeta+4\delta
 +4\sqrt{H_u[r_H(v)+2\bar a_H+\zeta+2\delta]}+2H_u.\label{eq:uniform-selfbound}
\end{align}
This holds for every field on one event. A data-dependent approximate minimizer may therefore be substituted without assuming it is independent of the data. The cover centers themselves are fixed before observing the history.

\subsubsection{Proof of Proposition~\ref{thm:historical}}\label{sec:proof-historical}
The original population minimizer $v_H^\star$ is an admissible empirical competitor. By Assumption~\ref{ass:fit},
\begin{equation}
 \frac1n\sum_i g_i(\widehat v)
 =\widehat L_{H,M}(\widehat v)-\widehat L_{H,M}(v_H^\star)
 \le\eopt.
\end{equation}
Let $r=r_H(\widehat v)$ and $x=2\bar a_H+\zeta+2\delta$, both nonnegative. Use $\sqrt{r+x}\le\sqrt r+\sqrt x$ in \eqref{eq:uniform-selfbound}, and then $4\sqrt{H_ur}\le r/2+8H_u$. This yields
\begin{align}
 r&\le\eopt+2\zeta+4\delta+\tfrac12r+10H_u+4\sqrt{H_ux},\nonumber\\
 r&\le2\eopt+4\zeta+8\delta+20H_u+8\sqrt{H_ux}.\label{eq:hp-solved}
\end{align}
Substituting $H_u=B(\log K_M+z)/n$ gives, with probability at least $1-e^{-z}$,
\begin{align}
 r_H(\widehat v)\le{}&2\eopt+4\zeta+8\delta+\frac{20B(\log K_M+z)}n\nonumber\\
 &+8\sqrt{\frac{B(\log K_M+z)}n(2\bar a_H+\zeta+2\delta)}.\label{eq:historicalhp}
\end{align}
This is the high-probability part of Proposition~\ref{thm:historical}. Applying the elementary inequality $8\sqrt{H_ux}\le16H_u+x$ to the final term instead gives
\begin{equation}\label{eq:affinetail}
 r_H(\widehat v)\le2\eopt+36H_u+2\bar a_H+5\zeta+10\delta.
\end{equation}
This inequality also fails with probability at most $e^{-z}$ for every $z\ge0$. Integrating the upper tail is legitimate because the risk is bounded by a finite original-loss envelope. For clarity, if a nonnegative random variable $R$ obeys $\Prob(R>a+cz)\le e^{-z}$ for every $z\ge0$, then
\begin{equation}
 \E R\le a+\int_a^\infty\Prob(R>s)ds
 \le a+c\int_0^\infty e^{-z}dz=a+c.
\end{equation}
Apply this fact to \eqref{eq:affinetail}. It replaces $\log K_M+z$ by $\log K_M+1$, which is at most $\Gamma=1+\log(2K_M)$, and proves \eqref{eq:histfast}.

The square-root interaction can also be retained in expectation. Inequality~\eqref{eq:hp-solved} bounds the upper quantiles by its increasing right side evaluated at a mean-one exponential random variable. Integrate those quantiles and use concavity of the square root. The resulting sharper inequality is
\begin{equation}\label{eq:sharpexpectation}
 \E r_H(\widehat v)\le2\eopt+4\zeta+8\delta
 +\frac{20B\Gamma}{n}
 +8\sqrt{\frac{B\Gamma}{n}(2\bar a_H+\zeta+2\delta)}.
\end{equation}
This auxiliary exponential variable is only a tail-integration device. It is not an additional sampling assumption. In a stationary realizable problem, $\bar a_H=0$ and the leading statistical contribution is $B\Gamma/n$. In a stationary misspecified problem, \eqref{eq:sharpexpectation} retains an interaction of order $\sqrt{B\Gamma\bar a_H/n}$, precluding a blanket $1/n$ proper-selection claim for arbitrary nonconvex classes.

\subsubsection{First-Order Historical-to-Current Transfer}
\begin{lemma}\label{lem:drift}
Let $H=[a,a+q-1]$ precede round $t>a+q-1$. With $\Omega_{a:t}=\sum_{s=a}^{t-1}\omega_s$, the bounds \eqref{eq:histdrift} and \eqref{eq:currentdrift} hold for every represented field used as a historical fit. They hold pointwise, before expectation over its data.
\end{lemma}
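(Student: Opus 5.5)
The plan is to use only the oscillation telescope \eqref{eq:oscillation-telescope} together with the minimizing properties of $v_H^\star$ and of the current class optimum. Since both bounds are deterministic statements about population risks, no data appear and the inequalities hold pointwise for any represented field $\widehat v$.

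For every $s\in H$ we have $a\le s\le a+q-1<t$, and we will need the estimate
\[
 \osc(L_t-L_s)\le2\sum_{j=s}^{t-1}\omega_j\le2\Omega_{a:t}.
\]
By the half-oscillation characterization, there is a constant $c_s$ with $|L_t(v)-L_s(v)-c_s|\le\Omega_{a:t}$ for all $v\in\V$. Consequently, for any pair $u,v$,
\[
 \big|[L_s(v)-L_s(u)]-[L_t(v)-L_t(u)]\big|\le2\Omega_{a:t}.
\]
Because $L-Q$ is field-independent at each round, the same holds with $Q$ in place of $L$. Fix a minimizer $v_s^\circ$ of $Q_s$, so $a_s=Q_s(v_s^\circ)$, and $v_t^\circ$ of $Q_t$.

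For \eqref{eq:histdrift}, we bound $\bar a_H=\frac1q\sum_s Q_s(v_H^\star)$ by comparing $v_H^\star$ with the current optimum $v_t^\circ$. By optimality of $v_H^\star$ for $\bar L_H$, we have $\frac1q\sum_s[Q_s(v_H^\star)-Q_s(v_t^\circ)]\le0$. Each bracket differs from $Q_t(v_H^\star)-Q_t(v_t^\circ)\ge0$ by at most $2\Omega_{a:t}$, but the naive route then only yields $\bar a_H\le\frac1q\sum_sQ_s(v_t^\circ)$, which is not directly $\frac1q\sum_s a_s$. The step is instead to compare $v_H^\star$ with each $v_s^\circ$ via round $t$. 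First,
\[
 Q_s(v_t^\circ)-a_s=[Q_s(v_t^\circ)-Q_s(v_s^\circ)]\le[Q_t(v_t^\circ)-Q_t(v_s^\circ)]+2\Omega_{a:t}\le2\Omega_{a:t},
\]
since $v_t^\circ$ minimizes $Q_t$. Hence
\[
 \bar a_H\le\frac1q\sum_sQ_s(v_t^\circ)\le\frac1q\sum_sa_s+2\Omega_{a:t}.
\]

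For \eqref{eq:currentdrift}, write $F_t(\widehat v)=Q_t(\widehat v)-Q_t(v_t^\circ)$. For each $s\in H$ this is at most $Q_s(\widehat v)-Q_s(v_t^\circ)+2\Omega_{a:t}$. Averaging over $s$ and using $L-Q$ field-independence gives
\[
 F_t(\widehat v)\le\bar L_H(\widehat v)-\bar L_H(v_t^\circ)+2\Omega_{a:t}\le r_H(\widehat v)+2\Omega_{a:t},
\]
because $\bar L_H(v_t^\circ)\ge\bar L_H(v_H^\star)$.

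The main obstacle is spotting that the intermediate comparator in \eqref{eq:histdrift} must be the current optimum $v_t^\circ$, and not the per-round optima, so that a single drift penalty $2\Omega_{a:t}$ appears rather than a sum of pairwise terms. Measurability and existence of the minimizers follow from the compactness established earlier.
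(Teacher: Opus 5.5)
Your proof is correct and is essentially the paper's argument: the paper also takes the current optimum $v_t^\star$ as the single comparator and bounds $F_s(v_t^\star)\le\osc(L_s-L_t)\le2\Omega_{a:t}$ to get \eqref{eq:histdrift}. It then obtains \eqref{eq:currentdrift} from $\osc(L_t-\bar L_H)\le\frac1q\sum_{s\in H}\osc(L_t-L_s)$, which is exactly your per-round averaging. One wording slip: in the historical step you compare $v_t^\circ$ (not $v_H^\star$) with each $v_s^\circ$, and the bound $\bar a_H\le\frac1q\sum_sQ_s(v_t^\circ)$ that you call the ``naive route'' is the first link of your chain, not something you replace.
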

\begin{proof}
Choose $v_t^\star\in\argmin_vL_t(v)$. For any $s\in H$, let $v_s^\star$ be a round-$s$ minimizer. The optimality of $v_t^\star$ at $t$ and the telescoping oscillation bound give
\begin{align}
 F_s(v_t^\star)
 &=L_s(v_t^\star)-L_s(v_s^\star)\nonumber\\
 &\le[L_s(v_t^\star)-L_t(v_t^\star)]-[L_s(v_s^\star)-L_t(v_s^\star)]\nonumber\\
 &\le\osc(L_s-L_t)\le2\Omega_{a:t}.
\end{align}
The historical minimizer also minimizes the average of $Q_s$, because the regression-noise offsets are independent of the field. Thus
\begin{equation}
 \bar a_H\le\frac1q\sum_{s\in H}Q_s(v_t^\star)
 =\frac1q\sum_{s\in H}[a_s+F_s(v_t^\star)]
 \le\frac1q\sum_{s\in H}a_s+2\Omega_{a:t}.
\end{equation}
For any field $v$, expand its current excess against $v_t^\star$ around historical average losses:
\begin{align}
 F_t(v)
 &=\bar L_H(v)-\bar L_H(v_t^\star)
 +(L_t-\bar L_H)(v)-(L_t-\bar L_H)(v_t^\star)\nonumber\\
 &\le r_H(v)+\osc(L_t-\bar L_H)
 \le r_H(v)+\frac1q\sum_{s\in H}\osc(L_t-L_s)\nonumber\\
 &\le r_H(v)+2\Omega_{a:t}.
\end{align}
The first inequality uses $\bar L_H(v_H^\star)\le\bar L_H(v_t^\star)$. This proves both claims and displays exactly where common offsets cancel.
\end{proof}
Both bounds are first order in the oscillation budget and hold for every flow-time law. Lemma~\ref{lem:targettransfer} gives second-order counterparts in the target-law budget under Assumption~\ref{ass:smooth}.

\subsubsection{A Localized Rademacher Route}\label{sec:localrad}
\citet{sahoo2026} obtain the fixed-target rate by localized Rademacher complexity \citep{bartlett2005}: they cover the loss class through the parameter-Lipschitz property of the network, bound the local Rademacher complexity by Dudley's entropy integral, and solve a sub-root fixed point. This subsubsection runs the same route for a heterogeneous history and compares the result with Proposition~\ref{thm:historical}. Both routes give a statistical term of order $BP\log(\cdot)/n$, with the same approximation residual.

\paragraph{Setting.} Fix $H=[a,a+q-1]$ and $n=mq$. The examples $\xi_1,\ldots,\xi_n$ are independent, and $\xi_i$ has the law $P_i$ of its round. For a function $g$ of one example let $\bar Pg=n^{-1}\sum_i\E_{P_i}g(\xi_i)$, $P_ng=n^{-1}\sum_ig(\xi_i)$, and $T(g)=n^{-1}\sum_i\E_{P_i}g(\xi_i)^2$. Let $g_v=\ell_M(v;\cdot)-\ell_M(v_H^\star;\cdot)$, $\mathcal G=\{g_v:v\in\V\}$, and $\mathcal G(r)=\{g\in\mathcal G:T(g)\le r\}$. Each $g_v$ takes values in $[-B,B]$, and $g_{v_H^\star}=0$ lies in every $\mathcal G(r)$. For a class $\mathcal F$ write $\mathcal R_n(\mathcal F)=\E\sup_{f\in\mathcal F}n^{-1}\sum_i\sigma_if(\xi_i)$ with independent Rademacher signs $\sigma_i$. Let $L=P\log(1+4R_\Theta L_Mn/B)$.

\paragraph{Step 1: Bernstein condition with a residual.} By \eqref{eq:proxybias}, $r_H(v)\le\bar Pg_v+\delta$. Lemma~\ref{lem:secondmoment} then gives, with $B_0=8B$ and $\rho_0=\delta+2\bar a_H$,
\begin{equation}\label{eq:lr-bernstein}
 T(g_v)\le8B\{r_H(v)+2\bar a_H\}\le B_0(\bar Pg_v+\rho_0)\qquad(v\in\V).
\end{equation}

\paragraph{Step 2: local complexity from the cover.}
\begin{lemma}\label{lem:lr-complexity}
For every $r>0$, $\mathcal R_n(\mathcal G(r))\le24\sqrt{rL/n}+(576L+8)B/n$.
\end{lemma}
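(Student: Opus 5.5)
The plan is to bound $\mathcal R_n(\mathcal G(r))$ conditionally on the sample by Dudley's entropy integral, using the supremum-norm cover of Corollary~\ref{cor:cover}. Then I will control the random empirical radius of $\mathcal G(r)$ through the population second moment $T(g)\le r$. That control costs a term involving $\mathcal R_n(\mathcal G(r))$ itself, and the resulting self-bounding inequality is solved at the end.

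\emph{Step 1 (chaining).} Fix the examples $\xi_1,\ldots,\xi_n$, write $P_ng^2=n^{-1}\sum_ig(\xi_i)^2$, and let $\hat D=\sup_{g\in\mathcal G(r)}(P_ng^2)^{1/2}$ be the empirical $L^2$ radius. Since $0\in\mathcal G(r)$, the Rademacher process indexed by $\mathcal G(r)$ is sub-Gaussian in the empirical $L^2(P_n)$ metric. The standard truncated Dudley bound then gives
\[
\E_\sigma\sup_{g\in\mathcal G(r)}\frac1n\sum_i\sigma_ig(\xi_i)\le 4\alpha+\frac{12}{\sqrt n}\int_\alpha^{\hat D}\sqrt{\log N(\varepsilon,\mathcal G(r),L^2(P_n))}\,d\varepsilon .
\]
A supremum-norm cover of the proxy losses is also a cover of the differences $g_v$ against the fixed field $v_H^\star$, and it dominates every empirical $L^2$ cover. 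Corollary~\ref{cor:cover} supplies such a cover, but its centers need not lie in $\mathcal G(r)$. Passing to an internal cover costs a factor $2$ in $\varepsilon$, so $\log N(\varepsilon,\mathcal G(r),L^2(P_n))\le P\log(1+4R_\Theta L_M/\varepsilon)$. I choose $\alpha=B/n$. Then for every $\varepsilon\ge\alpha$ the entropy is at most $L$, and the integral is at most $\hat D\sqrt L$. Taking expectation over the sample yields
\[
\mathcal R_n(\mathcal G(r))\le\frac{4B}{n}+12\sqrt{\frac Ln}\,\E\hat D .
\]

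\emph{Step 2 (radius control).} By Jensen, $\E\hat D\le(\E\hat D^2)^{1/2}$. I also have the decomposition $\hat D^2\le r+\sup_{g\in\mathcal G(r)}\big(P_ng^2-T(g)\big)$. Symmetrization applies to independent, non-identically distributed observations without change, because $T(g)$ is exactly the average of the individual means of $g(\xi_i)^2$. So the expected supremum is at most $2\,\mathcal R_n(\{g^2:g\in\mathcal G(r)\})$. The map $y\mapsto y^2$ is $2B$-Lipschitz on $[-B,B]$ and vanishes at $0$. The Ledoux--Talagrand contraction principle then bounds this by $4B\,\mathcal R_n(\mathcal G(r))$. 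Writing $R=\mathcal R_n(\mathcal G(r))$, Steps 1 and 2 combine to
\[
R\le\frac{4B}n+12\sqrt{\frac Ln}\sqrt{r+4BR}.
\]

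\emph{Step 3 (solving the fixed point).} I split the square root with $\sqrt{r+4BR}\le\sqrt r+\sqrt{4BR}$. For the second piece I use AM--GM in the form $24\sqrt{BLR/n}\le R/2+288BL/n$. Absorbing $R/2$ on the left and doubling gives
\[
R\le 24\sqrt{\frac{rL}n}+\frac{(576L+8)B}n,
\]
which is the claimed bound.

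The step I expect to be delicate is Step 1: matching the truncation level $\alpha$ and the internal-cover factor with the definition $L=P\log(1+4R_\Theta L_M n/B)$, so that the entropy is uniformly at most $L$ on $[\alpha,\hat D]$. If the factor $2$ from the internal cover does not fit exactly, I would take $\alpha=B/(2n)$ instead. That choice absorbs the factor into the logarithm, and the change in the additive constant stays within the slack of the term $8B/n$.
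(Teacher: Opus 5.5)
Your proposal is correct and follows the paper's proof essentially step for step: the truncated Dudley bound with cutoff $\alpha=B/n$, the external-to-internal cover conversion (which matches the definition of $L$ exactly), Jensen plus symmetrization and contraction to get $\E\hat D^2\le r+4B\,\mathcal R_n(\mathcal G(r))$, and the same AM--GM absorption. Your fallback $\alpha=B/(2n)$ is unnecessary and would in fact enlarge the logarithm rather than absorb the factor. The one detail the paper adds is the case $\hat D<\alpha$, where the chaining integral is vacuous and Cauchy--Schwarz bounds the empirical Rademacher average directly by $\hat D<\alpha$.
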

\begin{proof}
Subtracting the fixed function $\ell_M(v_H^\star;\cdot)$ preserves supremum-norm distances, so Corollary~\ref{cor:cover} gives a supremum-norm $\varepsilon$-cover of $\mathcal G$ of log size at most $P\log(1+2R_\Theta L_M/\varepsilon)$. A supremum-norm cover is an $L^2(P_n)$ cover, and an external $\varepsilon/2$-cover of the subset $\mathcal G(r)$ yields an internal $\varepsilon$-cover. Hence $\log N(\varepsilon,\mathcal G(r),L^2(P_n))\le P\log(1+4R_\Theta L_M/\varepsilon)$. Conditionally on the sample, Dudley's entropy integral \citep[Lemma~A.3]{srebro2010}, with $\hat\rho^2=\sup_{g\in\mathcal G(r)}P_ng^2$ and the cutoff $\alpha=B/n$, gives
\[
 \E_\sigma\sup_{g\in\mathcal G(r)}\frac1n\sum_i\sigma_ig(\xi_i)\le4\alpha+\frac{12}{\sqrt n}\int_\alpha^{\hat\rho}\sqrt{\log N(\varepsilon,\mathcal G(r),L^2(P_n))}\,d\varepsilon\le\frac{4B}n+12\hat\rho\sqrt{\frac Ln},
\]
because the covering number decreases in $\varepsilon$. If $\hat\rho<\alpha$, Cauchy--Schwarz gives $\E_\sigma\sup_{g\in\mathcal G(r)}n^{-1}\sum_i\sigma_ig(\xi_i)\le\hat\rho<\alpha$, so the same bound holds. Take expectations and apply Jensen's inequality, $\E\hat\rho\le(\E\hat\rho^2)^{1/2}$. Symmetrization for independent observations \citep{blm2013} and the contraction principle \citep{ledoux1991} for $x\mapsto x^2$, which is $2B$-Lipschitz on $[-B,B]$, give
\[
 \E\hat\rho^2\le\sup_{g\in\mathcal G(r)}T(g)+\E\sup_{g\in\mathcal G(r)}(P_n-\bar P)g^2\le r+4B\,\mathcal R_n(\mathcal G(r)).
\]
With $R=\mathcal R_n(\mathcal G(r))$, this yields $R\le4B/n+12\sqrt{Lr/n}+\sqrt{576BLR/n}$. The last term is at most $R/2+288BL/n$, and rearranging proves the lemma.
\end{proof}

\paragraph{Step 3: the fixed point.} The function $\psi(r)=B_0[24\sqrt{rL/n}+(576L+8)B/n]$ is nonnegative, nondecreasing, and $\psi(r)/\sqrt r$ is nonincreasing, so it is sub-root, and $\psi(r)\ge B_0\mathcal R_n(\mathcal G(r))$ by Lemma~\ref{lem:lr-complexity}. Its fixed point solves $r=c\sqrt r+b$ with $c=24B_0\sqrt{L/n}$ and $b=B_0(576L+8)B/n$, so $r^*\le c^2+2b$. With $B_0=8B$,
\begin{equation}\label{eq:lr-fixedpoint}
 \frac{r^*}{B_0}\le\frac{(5760L+16)B}n.
\end{equation}

\paragraph{Step 4: the localized inequality.} Part~1 of Theorem~3.3 of \citet{bartlett2005}, which allows a general functional $T$, is stated for i.i.d.\ samples. Its proof uses the sampling law only through symmetrization and Talagrand's concentration inequality, and both hold for independent, non-identically distributed observations once the law $P$ is replaced by the average $\bar P$ and the variance by $n^{-1}\sum_i\operatorname{Var}_{P_i}$ \citep{kleinrio2005,blm2013}. In this form, for a class $\mathcal F$ with ranges in an interval of length $\ell$ and a functional with $n^{-1}\sum_i\operatorname{Var}_{P_i}f\le T(f)\le B_0\bar Pf$, every $K>1$ and $x>0$ give, with probability at least $1-e^{-x}$ and for all $f\in\mathcal F$,
\begin{equation}\label{eq:lr-bbm}
 \bar Pf\le\frac K{K-1}P_nf+c_1K\frac{r^*}{B_0}+\frac{x(c_2\ell+c_3B_0K)}n,
\end{equation}
where $c_1,c_2,c_3$ are absolute constants and $r^*$ is the fixed point of a sub-root $\psi\ge B_0\mathcal R_n\{f\in\mathcal F:T(f)\le r\}$.

\paragraph{Why the product-sample version holds.} The proof of Part~1 has three ingredients. First, for a star-shaped normalization of $\mathcal F$, it bounds $Z=\sup_f\sum_i\{\E_{P_i}f(\xi_i)-f(\xi_i)\}$ above its mean by a Talagrand-type inequality whose inputs are the range $\ell$ and the variance proxy $\sup_fn^{-1}\sum_i\operatorname{Var}_{P_i}f$. Each summand is centered under its own law $P_i$, and the inequality of \citet{kleinrio2005} gives exactly this bound for independent, non-identically distributed coordinates, with absolute constants. Second, it bounds $\E Z$ by $2n\mathcal R_n$, and symmetrization needs only independence of the coordinates. Third, it combines these bounds over the shells $\{T(f)\le r\}$ through the sub-root property and the fixed point, which is deterministic algebra and does not involve the sampling law. Replacing $P$ by $\bar P$ throughout therefore gives \eqref{eq:lr-bbm}, with constants $c_1,c_2,c_3$ that may differ from those of \citet{bartlett2005}.

Apply \eqref{eq:lr-bbm} to the shifted class $f_v=g_v+\rho_0$ with the functional $\widetilde T(f_v)=T(g_v)$, which is the second moment of $g_v$ rather than of $f_v$. Its range has length $\ell=2B$, its variances equal those of $g_v$ and are at most $\widetilde T(f_v)$, and \eqref{eq:lr-bernstein} reads $\widetilde T(f_v)\le B_0\bar Pf_v$. A constant shift does not change the Rademacher complexity, since $\E\sum_i\sigma_i\rho_0=0$, so the same $\psi$ and $r^*$ apply. Assumption~\ref{ass:fit} gives $P_ng_{\widehat v}\le\eopt$, hence $P_nf_{\widehat v}\le\eopt+\rho_0$. With $K=2$ and $r_H(\widehat v)\le\bar Pg_{\widehat v}+\delta$, with probability at least $1-e^{-x}$,
\begin{equation}\label{eq:lr-final}
 r_H(\widehat v)\le2\eopt+2\bar a_H+2\delta+2c_1\frac{(5760L+16)B}n+\frac{2x(c_2+8c_3)B}n.
\end{equation}
Integrating the tail over $x$ gives the same bound in expectation with $x$ replaced by $1$.

\paragraph{Comparison.} Proposition~\ref{thm:historical} gives $\E r_H(\widehat v)\le36B\Gamma/n+2\bar a_H+2\eopt+5\zeta+10\delta$ with $\Gamma=1+\log(2K_M)$ and, by \eqref{eq:entropy}, $\log K_M\le P\log(1+2R_\Theta L_M(mT)^2/B)$. The localized bound \eqref{eq:lr-final} has statistical term of order $BP\log(1+4R_\Theta L_Mn/B)/n$. The two routes therefore share
\begin{itemize}
\item the factor $B$ from the loss range,
\item the linear factor $P$ from the parametric cover,
\item the depth factor, which enters both logarithms through $C_{R_M}$ in $L_M$, which in general is the product-sum expression Equation~\eqref{eq:paramconstant} with layer-specific bounds $V_j$, of order $D\log(L_\phi V)$ when all layers share one bound $V$,
\item the approximation residual $2\bar a_H$ and the fitting term $2\eopt$.
\end{itemize}
They differ inside the logarithm, where $n$ replaces $(mT)^2$, and in the constants: the cover route has explicit constants, while the localized route inherits the unspecified absolute constants of the concentration inequality. The two routes therefore give the same rate up to logarithmic factors. For a short history, with $n$ much smaller than $mT$, the cover route pays a larger logarithm. This is the same order as the fixed-target bound $\widetilde O(BP/n)$ of \citet{sahoo2026}. Localization therefore does not remove the dependence on $P$ for this class. A width-free bound would need norm-based complexity, which neither analysis assumes. The online analysis uses Proposition~\ref{thm:historical} because its constants are explicit.

\subsection{Aggregation Analysis of DyProd-FM}\label{sec:shared-aggregation}
The algorithm and its initialization rules are given in Appendix~\ref{sec:algorithm-details}. Here we establish the potential, population, and lifetime bounds used by both main theorems.

\subsubsection{Pathwise Potential Bound}
\begin{lemma}\label{lem:potential}
For a lifetime $j=I$, Algorithm~\ref{alg:nsfm} obeys \eqref{eq:master} on every sequence of bounded scores. No stochastic hypothesis is needed.
\end{lemma}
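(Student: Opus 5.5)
We follow the standard second-order Prod potential argument of \citet{gaillard2014}, specialized to one common rate $\eta=1/32$ and scores in $[0,1]$. First, we check that every weight stays positive. The scores satisfy $x_{I,t}\in[0,1]$ because $0\le\ell_M\le B$ (Proposition~\ref{prop:tail}), and $\bar x_t$ is a convex combination of them, so $|d_{j,t}|\le1$. Hence $\eta d_{j,t}\in[-1/32,1/32]$, each multiplicative factor $1+\eta d_{j,t}$ lies in $[31/32,33/32]$, and by induction all weights remain positive. This also justifies the normalized probabilities $p_{I,t}=w_I/\sum_{K\ni t}w_K$.

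Next, we show that the total weight $W_t=\sum_{I\in\mathcal D_T}w_{I,t}$, which includes sleeping lifetimes, never increases. Inactive weights are unchanged. For the active set $\mathcal A_t$, put $S_t=\sum_{K\in\mathcal A_t}w_{K,t}$ and $p_{K,t}=w_{K,t}/S_t$. The change in total weight is
\[
 \sum_{K\in\mathcal A_t}w_{K,t}\,\eta\,(\bar x_t-x_{K,t})=\eta S_t\Big(\bar x_t-\sum_{K\in\mathcal A_t}p_{K,t}x_{K,t}\Big)=0,
\]
by the definition of $\bar x_t$. Since waking lifetimes receive no new mass, $W_t=W_1=1$ for all $t$. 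In particular, for every lifetime $j$ and every round $t$, $w_{j,t+1}\le1$.

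Now fix $j=I=[a,b]$. Its weight stays at $1/M_e$ until round $a$ and then changes only on rounds $t\in I$. So
\[
 \log w_{j,b+1}=-\log M_e+\sum_{t\in I}\log(1+\eta d_{j,t})\le0.
\]
Apply the inequality $\log(1+u)\ge u-u^2$ for $|u|\le1/2$ with $u=\eta d_{j,t}$, which is allowed because $|u|\le1/32$. This gives $\eta\sum_{t\in I}d_{j,t}-\eta^2\sum_{t\in I}d_{j,t}^2\le\log M_e$. Dividing by $\eta$ yields \eqref{eq:master}.

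Every step is deterministic, so the bound holds pathwise for any bounded score sequence. The only delicate point is the conservation of $W_t$ under sleeping: if mass were redistributed to waking experts, the total would no longer be $1$. This is why the algorithm keeps the conceptual weights of sleeping lifetimes frozen and never renormalizes the stored weights, only the selection probabilities. The elementary logarithm inequality holds on $[-1/2,1/2]$ by a one-variable check: $f(u)=\log(1+u)-u+u^2$ has $f(0)=0$ and $f'(u)=u(1+2u)/(1+u)$, which has the sign of $u$ there, so $f\ge0$.
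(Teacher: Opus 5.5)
Your proof is correct and follows the paper's argument essentially step for step. You show the factors $1+\eta d_{j,t}$ stay positive, you show the total conceptual weight is exactly conserved with sleeping weights frozen (so $w_{j,\mathrm{end}}\le 1$), and you then sum $\log(1+u)\ge u-u^2$ over the lifetime and divide by $\eta$; your derivative check of the logarithmic inequality is a small detail the paper leaves implicit.
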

\begin{proof}
The normalized scores lie in $[0,1]$. Their weighted average also lies in $[0,1]$, so $|d_{j,t}|\le1$ and $1+\eta d_{j,t}\ge1/2>0$. The total conceptual weight is initially one. The change in total weight at a round equals
\begin{align}
 \sum_{j\in\mathcal J_t}\eta w_j(\bar x_t-x_{j,t})
 &=\eta\Big[\bar x_t\sum_{j\in\mathcal J_t}w_j
 -\sum_{j\in\mathcal J_t}w_jx_{j,t}\Big]=0,
\end{align}
by the definition of $p_{j,t}$. Sleeping weights do not change, so the total remains one, including weights for future and expired lifetimes. Every individual weight is therefore at most one.

The inequality $\log(1+u)\ge u-u^2$ holds on $[-1/2,1/2]$. Apply it to each update of $j$ and sum over its lifetime:
\begin{equation}
 0\ge\log w_{j,\mathrm{end}}
 \ge-\log M_e+\eta\sum_{t\in I}d_{j,t}-\eta^2\sum_{t\in I}d_{j,t}^2.
\end{equation}
Rearrange and divide by $\eta>0$ to obtain \eqref{eq:master}. Only updates during $I$ enter this product, because the weight is unchanged when the lifetime sleeps.
\end{proof}

\subsubsection{Population Transfer and the Fast Master}
All expectations in this subsubsection condition on the fixed environment. For a lifetime $I$, define $S_I,J_I$ and $\mathcal A_I$ as before \eqref{eq:absorb}.
\begin{lemma}\label{lem:fastmaster}
Every lifetime satisfies
\begin{equation}\label{eq:fastmasterbound}
 S_I\le\frac53J_I+\frac{128}{3}B\log M_e+\frac23\mathcal A_I+\frac83|I|\delta.
\end{equation}
\end{lemma}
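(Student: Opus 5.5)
The plan is to take conditional expectations of the pathwise inequality \eqref{eq:master} of Lemma~\ref{lem:potential} at $\eta=1/32$, round by round, with respect to $E$ and $\mathcal F_{t-1}$. I will then convert both sides into velocity risks, and finally absorb the $S_I$ term that reappears on the right. Since $|d_{j,t}|\le1$, every quantity is bounded, and towering over $t\in I$ is legitimate. At each round the candidate fields $v_{k,t}$ and the probabilities $p_{k,t}$ are $\mathcal F_{t-1}$-measurable, while the batch is fresh.

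\textbf{Left side.} Conditionally on $\mathcal F_{t-1}$, each score $x_{k,t}$ has mean $L_{t,M}(v_{k,t})/B$. Hence
\[
B\,\E_{\mathrm{new}}d_{j,t}=\sum_kp_{k,t}L_{t,M}(v_{k,t})-L_{t,M}(v_{j,t}).
\]
By \eqref{eq:proxybias}, this is at least $\sum_kp_{k,t}L_t(v_{k,t})-L_t(v_{j,t})-\delta$. Subtracting $\min_{u\in\V} L_t(u)$ from both losses turns this difference into $\sum_kp_{k,t}F_t(v_{k,t})-F_t(v_{j,t})$. By \eqref{eq:conditional-selection}, the first term is $\E[F_t(v_t)\mid E,\mathcal F_{t-1}]$. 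Summing over $t\in I$ and taking expectations gives
\[
S_I-J_I\le B\,\E\sum_{t\in I}d_{j,t}+|I|\delta.
\]
I will allow the conservative $2|I|\delta$, as in \eqref{eq:absorb}.

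\textbf{Second moment.} This is the only flow-matching-specific step, namely \eqref{eq:mastervariance}. Jensen over $p_{\cdot,t}$ gives $d_{j,t}^2\le\sum_kp_{k,t}(x_{k,t}-x_{j,t})^2$. Each $x_{k,t}-x_{j,t}$ is an average of $m$ i.i.d.\ per-example differences, so Jensen again bounds its square by the average of the squared per-example differences. On the gate both residuals have norm at most $\sqrt B$, so the difference-of-squares identity from the proof of Lemma~\ref{lem:secondmoment} gives
\[
\frac{(\ell_M(v_k)-\ell_M(v_j))^2}{B^2}\le\frac4B\norm{v_k(U)-v_j(U)}^2.
\]
Off the gate the difference is zero. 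Inserting $b_t$, using $\norm{x+y}^2\le2\norm x^2+2\norm y^2$, and dropping the gate, the expectation is at most $(8/B)[Q_t(v_{k,t})+Q_t(v_{j,t})]$. Averaging with weights $p_{k,t}$ gives \eqref{eq:mastervariance}. I then write $Q=F+a_t$, so that
\[
B\eta\,\E_{\mathrm{new}}d_{j,t}^2\le\tfrac14\Big[\sum_kp_{k,t}F_t(v_{k,t})+F_t(v_{j,t})+2a_t\Big].
\]

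\textbf{Combining and absorbing.} Multiplying \eqref{eq:master} by $B$ and inserting both bounds yields \eqref{eq:absorb}:
\[
S_I-J_I\le32B\log M_e+\tfrac14(S_I+J_I+2\mathcal A_I)+2|I|\delta.
\]
Since $S_I$ is finite (every $F_t\le B+\delta$), I can move $\tfrac14S_I$ to the left to get
\[
\tfrac34S_I\le\tfrac54J_I+32B\log M_e+\tfrac12\mathcal A_I+2|I|\delta.
\]
Multiplying by $4/3$ gives exactly \eqref{eq:fastmasterbound}.

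I expect the main obstacle to be the conditioning bookkeeping rather than the algebra. The batch that scores every active field must be fresh relative to those fields, and the selection seed must be independent of the batch, so that \eqref{eq:conditional-selection} and the score means hold simultaneously. The age-zero fallback field and the weights of sleeping lifetimes need no separate treatment, because Lemma~\ref{lem:potential} is pathwise and only rounds $t\in I$ enter the sums.
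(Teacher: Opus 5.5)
Your proposal is correct and follows the paper's proof essentially step for step. It uses the same Jensen-plus-difference-of-squares bound \eqref{eq:mastervariance}, the same population transfer of the mean score difference via \eqref{eq:conditional-selection} and \eqref{eq:proxybias}, and the same absorption of $\tfrac14 S_I$ at $\eta=1/32$. Your observation that the proxy transfer costs only $|I|\delta$, before relaxing to the paper's conservative $2|I|\delta$, is a harmless sharpening.
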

\begin{proof}
First condition on $E$ and $\mathcal F_{t-1}$. Write $j=I$ for the fixed lifetime. For the current batch, subtract its score from the mixture score:
\begin{equation}
 d_{j,t}=\sum_kp_{k,t}\frac1{Bm}\sum_{i=1}^m
 [\ell_M(v_{k,t};\xi_{t,i})-\ell_M(v_{j,t};\xi_{t,i})].
\end{equation}
Jensen's inequality over the probabilities and over the $m$ observations gives
\begin{equation}\label{eq:batch-jensen}
 \E[d_{j,t}^2\mid E,\mathcal F_{t-1}]
 \le\frac1{B^2}\sum_kp_{k,t}\E_t[(\ell_M(v_{k,t})-\ell_M(v_{j,t}))^2].
\end{equation}
The difference-of-squares calculation in Lemma~\ref{lem:secondmoment} gives, for any two represented fields $u,v$ fixed under this conditioning,
\begin{equation}
 \E_t[(\ell_M(u)-\ell_M(v))^2]
 \le4B\E_t\norm{u(U)-v(U)}^2
 \le8B[Q_t(u)+Q_t(v)].
\end{equation}
Substitution into \eqref{eq:batch-jensen} proves \eqref{eq:mastervariance}. It bounds a second moment, not just the batch-noise variance: the squared population score difference remains even for very large $m$.

The conditional mean score difference is exactly the difference of population proxy risks, divided by $B$. Proper selection in Equation~\eqref{eq:conditional-selection} and the uniform original-risk transfer give
\begin{equation}\label{eq:score-pop-transfer}
 S_I-J_I\le B\sum_{t\in I}\E d_{j,t}+2|I|\delta.
\end{equation}
The subtraction of $a_t$ cancels between the two compared risks. For $\eta=1/32$, take expectations in the potential bound, then substitute the second-moment estimate. Since $B\eta\cdot8/B=1/4$ and $Q_t=F_t+a_t$,
\begin{equation}
 S_I-J_I\le32B\log M_e+\tfrac14(S_I+J_I+2\mathcal A_I)+2|I|\delta.
\end{equation}
Subtract $S_I/4$ and move $J_I$ to the right. The left becomes $3S_I/4$, and the comparator coefficient becomes $5J_I/4$. Division by $3/4$ proves \eqref{eq:fastmasterbound}.
\end{proof}
Only the potential inequality is pathwise. The population statement additionally uses fresh regression feedback, predictable candidates, and squared loss. The two statements should not be conflated. The second moment in \eqref{eq:mastervariance} does not decay with the batch size: for the per-example normalized score difference $\Delta$ between the mixture and lifetime $j$, with independent batch copies $\Delta_1,\ldots,\Delta_m$,
\begin{equation}\label{eq:batch-second-moment}
 \E_{\mathrm{new}}\Big(\frac1m\sum_i\Delta_i\Big)^2=(\E_{\mathrm{new}}\Delta)^2+\frac{\operatorname{Var}_{\mathrm{new}}(\Delta)}m.
\end{equation}
The first term survives as $m$ grows, which is why the constants separate the $1/m$ gain of base estimation from the $m$-free aggregation cost. A square-root master rate $\eta\asymp|I|^{-1/2}$ is not needed: every bound in this paper is obtained from \eqref{eq:fastmasterbound}.

\subsubsection{Historical Fits over a Lifetime}
Define $e_M=2\eopt+5\zeta+10\delta$ for this subsubsection. For any lifetime $I$, let $n_I=|I|$, and let $\Omega_I$ sum only its internal variation increments.
\begin{lemma}\label{lem:base}
The fitted model of every lifetime satisfies
\begin{equation}\label{eq:basefast}
 J_I\le B+\frac{72B\Gamma}{m}\log(en_I)+4\log(en_I)\mathcal A_I
 +n_Ie_M+6(n_I-1)\Omega_I.
\end{equation}
\end{lemma}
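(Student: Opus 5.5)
We bound $J_I=\sum_{t\in I}\E[F_t(v_{I,t})\mid E]$ round by round according to the age $k=t-a$ of the lifetime $I=[a,b]$, and then sum. At age zero, $v_{I,a}$ is the predictable root fallback or the startup field. By Equation~\eqref{eq:proxybias} and $0\le\ell_M\le B$, its excess risk is at most $B+\delta$, and the $\delta$ is absorbed into $n_Ie_M$. This gives the leading $B$.

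For ages $k\ge1$, the deployed field is the fit on the deterministic prefix $H=[a,a+q-1]$ with $q=2^{\lfloor\log_2k\rfloor}$, so $k/2<q\le k$ and $a+q\le t$. We apply Lemma~\ref{lem:drift} pointwise to this fit and then take expectations using Proposition~\ref{thm:historical}:
\[
\E[F_t(v_{I,t})\mid E]\le\E[r_H(\widehat v)\mid E]+2\Omega_{a:t}
\le\frac{36B\Gamma}{mq}+2\bar a_H+e_M+2\Omega_{a:t}.
\]
Inserting \eqref{eq:histdrift} for $\bar a_H$ gives
\[
\E[F_t(v_{I,t})\mid E]\le\frac{36B\Gamma}{mq}+\frac2q\sum_{s\in H}a_s+e_M+6\Omega_{a:t}.
\]
Since $t\le b$, the drift term satisfies $\Omega_{a:t}\le\Omega_I$. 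Summing over the $n_I-1$ rounds of positive age therefore gives $6(n_I-1)\Omega_I$. The residuals sum to at most $n_Ie_M$.

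Next we sum the statistical term harmonically. Using $1/q<2/k$,
\[
\sum_{k=1}^{n_I-1}\frac{36B\Gamma}{mq}\le\frac{72B\Gamma}{m}\sum_{k=1}^{n_I-1}\frac1k\le\frac{72B\Gamma}{m}\log(en_I),
\]
because $\sum_{k\le n-1}1/k\le1+\log n=\log(en)$.

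The main obstacle is the approximation term, namely bounding $\sum_k\frac2q\sum_{s\in H_k}a_s$ by $4\log(en_I)\mathcal A_I$. I would exchange the order of summation. A fixed round $s=a+j$ lies in $H_k$ exactly when $q(k)>j$. Using $2/q<4/k$, its total coefficient is at most
\[
\sum_{k\ge j+1}\frac{2}{q(k)}\mathbf 1\{q(k)>j\}.
\]
Grouping the ages into dyadic blocks on which $q$ is constant equal to $2^\ell$, each block contributes at most $2^\ell\cdot2/2^\ell=2$. Only $\lceil\log_2 n_I\rceil$ blocks fall within the lifetime, so the coefficient is $O(\log n_I)$. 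Bounding it by exactly $4\log(en_I)$ requires some care: one may need to use $\log_2 n\le 2\log(en)$ together with the factor $2$ per block. If the blocks with $q\le j$ are dropped, the count improves further. I would check this constant last and, if needed, use the crude bound $2\lceil\log_2 n_I\rceil\le4\log(en_I)$, which holds for all $n_I\ge1$. Finally, every $a_s$ with $s\in H\subset I$ is included in $\mathcal A_I$, which completes \eqref{eq:basefast}.
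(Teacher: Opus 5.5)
Your proof is correct and follows the paper's argument: the age-zero bound $B+\delta$ with $\delta$ absorbed into the spare $e_M$, then Proposition~\ref{thm:historical} and both inequalities of Lemma~\ref{lem:drift} giving the per-age bound with drift coefficient six, harmonic summation via $1/q<2/k$, and $\Omega_{a:t}\le\Omega_I$. The only difference is the approximation coefficient: the paper bounds each $a_s$'s coefficient by the full sum $\sum_{k=1}^{n_I-1}2/q\le4\log(en_I)$, the harmonic bound you already used for the statistical term, so your dyadic block count is valid (giving the sharper $2\lceil\log_2 n_I\rceil$) but not needed.
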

\begin{proof}
Write the lifetime's start as $a$. At its first round the deployed field $v_{I,a}$ is a predictable member of $\V$. Nonnegativity of the minimum loss and \eqref{eq:proxybias} give $F_a(v_{I,a})\le L_a(v_{I,a})\le B+\delta$.

At a later round let $k=t-a\ge1$ be the age and $q=2^{\lfloor\log_2k\rfloor}$ its historical prefix length. Then $k/2\le q\le k$. Apply \eqref{eq:histfast} to this deterministic prefix and then both inequalities of Lemma~\ref{lem:drift}. The resulting current excess obeys
\begin{equation}\label{eq:agefast}
 \E F_t(v_{I,t})\le\frac{36B\Gamma}{mq}
 +\frac2q\sum_{s=a}^{a+q-1}a_s+e_M+6\Omega_{a:t}.
\end{equation}
The drift coefficient is six: the factor two multiplying $\bar a_H$ in the statistical bound contributes four through \eqref{eq:histdrift}, and current-risk transfer contributes two.

For the statistical term, $1/q\le2/k$ and the harmonic sum gives
\begin{equation}
 \sum_{k=1}^{n_I-1}\frac1q\le2\sum_{k=1}^{n_I-1}\frac1k\le2\log(en_I).
\end{equation}
For a fixed earlier round $s\in I$, its coefficient in the double approximation sum is at most $\sum_{k=1}^{n_I-1}2/q\le4\log(en_I)$. Summing this bound over $s$ gives the approximation term in \eqref{eq:basefast}. Every internal drift edge can occur in at most $n_I-1$ of the quantities $\Omega_{a:t}$. Thus $\sum_{t>a}\Omega_{a:t}\le(n_I-1)\Omega_I$. There are $n_I-1$ fitting residuals, and the unused one-round allowance $e_M\ge\delta$ absorbs the initial tail term. This proves \eqref{eq:basefast}, including $n_I=1$.
\end{proof}
No union bound over experts is needed: each expectation inequality holds for its deterministic prefix. Linearity of expectation permits summing these bounds even when their training data overlap. Lemma~\ref{lem:basetarget} is the target-law counterpart of this lemma and uses the same prefixes and the same fits.

\subsubsection{Resource Accounting and Computational Scope}\label{sec:resources}
Let $N=2^{\lceil\log_2T\rceil}$ denote the containing power of two. There are $2N-1<4T$ binary-tree nodes, so $M_e<4T$. At each scale, at most one original node contains a given round, and clipping and removing duplicates cannot increase this count. Hence at most $1+\lceil\log_2T\rceil$ lifetime models are active. Storing their parameters costs $O(P\log T)$ scalar entries. A direct implementation stores at most $mT$ triples, or $O(mdT)$ scalars, in addition to $O(T)$ conceptual weights. Future weights may be initialized lazily at their preassigned value, and expired weights never need to be updated again.

An original interval of length $2^k$ triggers at most $k$ nontrivial fitting calls, because prefix lengths are $1,2,\ldots,2^{k-1}$. There are $N/2^k$ original nodes of that length. Therefore the total number of fitting calls is at most $N\sum_{k\ge1}k/2^k\le2N=O(T)$, and clipping and deduplication do not increase it. The sum of prefix lengths within one original node is less than its length, and total original-node lengths per scale equal $N$. Thus the total size of datasets passed to fits, counting repeated historical examples, is $O(mT\log T)$. This is a data-volume count, not an optimizer running time: a fitting routine may revisit its dataset many times. The load is uneven in time. A round at which a long prefix doubles retrains on up to $O(mT)$ examples, while the amortized load is $O(m\log T)$ examples per round.

Scoring uses $O(mT\log T)$ field evaluations. If a field evaluation costs $C_{\mathrm{eval}}$, the corresponding cost is $O(C_{\mathrm{eval}}mT\log T)$, and master arithmetic is $O(T\log T)$. No dimension or minibatch factor is silently removed from these counts. The general computational guarantee remains conditional on the cost of a fitting routine satisfying Assumption~\ref{ass:fit}.

\subsection{Disjoint Dyadic Covers}
\begin{lemma}[Disjoint dyadic covers]\label{lem:dyadiccover}
Every consecutive interval $J\subseteq[1,T]$ admits a partition into at most $D_T=2(1+\lceil\log_2T\rceil)$ disjoint lifetimes in $\mathcal D_T$, with at most two pieces of each length.
\end{lemma}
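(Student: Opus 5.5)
The proof embeds everything in the full binary tree on $[1,N]$ with $N=2^{\lceil\log_2T\rceil}$. First we partition $J$ into unclipped dyadic nodes; then we pass to $\mathcal D_T$ by intersecting with $[1,T]$. Write $J=[x,y]$ with $1\le x\le y\le T\le N$.

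The first step is the greedy canonical decomposition. Let $z$ be the largest integer of the form $c\,2^k$ (with $c\ge1$) lying in $[x-1,y]$, taking $k$ maximal. Split $J$ into a left part $[x,z]$ and a right part $[z+1,y]$, either of which may be empty.

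For the left part, define a sequence of nodes going leftward from $z$. At each stage take the largest dyadic block $[z'-2^j+1,z']$ that ends at the current right endpoint $z'$, is aligned (meaning $2^j\mid z'$), and stays inside $[x,z']$. Because every node boundary is aligned, the successive block lengths strictly decrease. The reason is that after removing a block of length $2^j$, the new endpoint $z'-2^j$ is divisible by $2^j$. A further block of length $2^j$ would then combine with the previous one into an aligned block of length $2^{j+1}$, which contradicts maximality unless that larger block leaves $[x,z']$. Since the left part has length below the power $2^{k+1}$ that defines $z$, the left part therefore uses at most one block of each length $2^0,\dots,2^k$.

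The right part $[z+1,y]$ is handled symmetrically. We peel blocks starting at $z+1$ and going rightward, and their lengths again strictly decrease. So each side contributes at most one piece of each length, giving at most two pieces of each length overall. The possible lengths are $2^0,\dots,2^{\lceil\log_2T\rceil}$, which is $1+\lceil\log_2T\rceil$ values. The number of pieces is therefore at most $D_T$.

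The second step is clipping. Every piece is a tree node contained in $J\subseteq[1,T]$, so intersecting it with $[1,T]$ leaves it unchanged. Each piece is thus a nonempty intersection of a node with $[1,T]$, and hence an element of $\mathcal D_T$. The pieces are disjoint and their union is $J$.

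The main obstacle is the strict-decrease argument. In particular, we must check that the alignment of $z$ makes each peeled block a genuine tree node, and that repeating a length would contradict the choice of $z$ or the maximality of the peeled block. To keep this clean, I would argue through binary expansions. The left blocks correspond to the 1-bits of $z-x+1$, and, since $z$ is divisible by $2^k$ with $z-x+1\le2^{k+1}-1$ bounded accordingly, peeling off these bits keeps each block aligned. Similarly, the right blocks correspond to the 1-bits of $y-z$. Distinct bits give distinct lengths, which yields the "at most two pieces of each length" claim immediately.
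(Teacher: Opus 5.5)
Your proof is correct, but it argues differently from the paper. The paper never constructs the pieces. It takes the aligned dyadic intervals that are maximal among those contained in $J$, and gets the partition from laminarity: every point lies in a singleton node, the ancestors of a node form a chain, and two maximal nodes are disjoint or equal. It then bounds the number of pieces of each length by three observations. The parent of a maximal piece must cross an endpoint of $J$. At each scale at most one node crosses each endpoint. Such a crossing node has at most one child inside $J$.

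You instead build the segment-tree decomposition explicitly around the most-aligned split point $z$, and read the pieces off the binary expansions of $z-x+1$ and $y-z$. The maximal dyadic subintervals form the unique partition of this kind, so your pieces coincide with the paper's. Your route gives an explicit description and an exact count of the pieces, namely the number of $1$-bits of these two integers. The paper's route needs no divisibility bookkeeping at all.

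Two details should be tightened.

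\begin{itemize}
\item Derive the length bounds from the maximality of $k$ rather than asserting them. Write $z=c2^k$ with $c$ odd. Then $z\pm2^k$ are multiples of $2^{k+1}$, so $z+2^k>y$, and either $z-2^k<x-1$ or $z=2^k$. This gives $y-z\le 2^k-1$ and $z-x+1\le 2^k$, which is exactly what makes every bit of these two numbers a divisor of $z$.
\item In your strict-decrease paragraph, two adjacent blocks of length $2^j$ combine into an aligned block only if $2^{j+1}$ divides the current endpoint, and this is not automatic. The ``unless that larger block leaves $[x,z']$'' clause is vacuous, since the union lies in $[x,z']$ by construction. Your binary-expansion version handles this correctly through the invariant that the current endpoint is divisible by a power of two at least as large as the remaining length, so rely on that version.
\end{itemize}
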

\begin{proof}
Fix any consecutive interval $J\subseteq[1,T]$. Use the untruncated aligned dyadic intervals that are maximal among those contained in $J$. They partition $J$: each point belongs to a singleton and hence to a largest admissible containing dyadic interval, and two maximal dyadic intervals are either disjoint or equal. There are at most two such intervals of any length. Indeed, the parent of a maximal interval is not contained in $J$, so it crosses a boundary of $J$. At a fixed scale at most one parent crosses each boundary, and such a parent has at most one child contained in $J$. Each chosen interval is an original tree node contained in $[1,T]$, so it is among the algorithm's lifetimes. The number of pieces is therefore at most $D_T=2(1+\lceil\log_2T\rceil)$.
\end{proof}

\supppage

\section{Proof of Theorem 1}\label{app:regret}\label{sec:proof-theorem1}
We prove Theorem~\ref{thm:main} without an asymptotic optimization over real-valued block lengths. First fix the environment. All expectations until the final step are conditional on it, and $\mathcal A_J$ is consequently deterministic.

\medskip\noindent\textbf{Step 1: Combine the Master and Base Bounds.}
For a dyadic lifetime $I$, combine the historical lifetime bound of Lemma~\ref{lem:base}, Equation~\eqref{eq:basefast}, with the fast master bound of Lemma~\ref{lem:fastmaster}, Equation~\eqref{eq:fastmasterbound}, both proved in Appendix~\ref{app:common}. Lemma~\ref{lem:base} uses Proposition~\ref{thm:historical} and the first-order transfer of Lemma~\ref{lem:drift}. Lemma~\ref{lem:fastmaster} combines the potential inequality of Lemma~\ref{lem:potential} with the second-moment control of Lemma~\ref{lem:secondmoment}. The non-statistical constant is $(5/3)B+(128/3)B\log M_e$, which is at most $44B\Lambda$. The statistical coefficient becomes $(5/3)72=120$. The approximation coefficient is
\begin{equation}
 \frac{20}{3}\log(e|I|)+\frac23\le8\log(eT)=H_T.
\end{equation}
The coefficients of $\eopt,\zeta,\delta$ are $10/3$, $25/3$, and $50/3+8/3=58/3$, respectively. They are at most $4,9,20$. Finally $(5/3)6=10$ for drift. Thus
\begin{equation}\label{eq:dyadicfast}
 S_I\le K_T/D_T+H_T\mathcal A_I+|I|e_f+10(|I|-1)\Omega_I.
\end{equation}
The same fields and selected sequence enter every later bound, including the target-law bound of Appendix~\ref{app:targetproof}. There is no comparison between separate learner runs.

\medskip\noindent\textbf{Step 2: Apply the Disjoint Dyadic Cover.}
Apply Lemma~\ref{lem:dyadiccover} to any consecutive interval $J\subseteq[1,T]$.

Approximation and residual sums over the cover have no overlap. Internal variation edges of its pieces are a subset of those of $J$, and $|I|-1\le|J|-1$. Consequently
\begin{equation}
 \sum_{I\ \mathrm{in\ cover}}(|I|-1)\Omega_I\le(|J|-1)\Omega_J.
\end{equation}
Summing \eqref{eq:dyadicfast} over the cover gives exactly \eqref{eq:intervalfast}.

\medskip\noindent\textbf{Step 3: Change-Count Partition.}
Partition $[1,T]$ immediately after every edge with $\omega_t>0$. Let $k$ be the resulting number of intervals, and let $n_1,\ldots,n_k$ be their lengths. Then $k\le N_\omega+1$, $\sum_jn_j=T$, and each interval has zero internal variation. Applying the interval bound established in Step~2, Equation~\eqref{eq:intervalfast}, to each interval gives
\begin{equation}\label{eq:countfastproof}
 \E[\Reg\mid E]\le K_Tk+H_T\App+Te_f
 \le K_T(N_\omega+1)+H_T\App+Te_f.
\end{equation}
The partition depends only on the conditioned environment, and the algorithm never needs to observe it.

\medskip\noindent\textbf{Step 4: Variation Partition.}
Choose an integer $h\in[1,T]$ for the proof and partition $[1,T]$ into $\lceil T/h\rceil$ consecutive blocks of length at most $h$. Each block has at most $h-1$ internal edges, and distinct blocks have disjoint internal edges, so summing the interval bound from Step~2, Equation~\eqref{eq:intervalfast}, gives, after removing the global approximation and residual terms, the integer optimization
\begin{equation}\label{eq:integer-fast}
 \min_{h\in\{1,\ldots,T\}}
 \{K_T\lceil T/h\rceil+10(h-1)\Omega\}.
\end{equation}
If $\Omega=0$, choose $h=T$ and obtain $K_T$. If $\Omega>0$, define $h_* =\sqrt{K_TT/(10\Omega)}$. There are three cases.
\begin{enumerate}
\item If $1\le h_*\le T$, set $h=\lceil h_*\rceil$. Use $\lceil T/h\rceil\le T/h+1$, $h\ge h_*$, and $h-1\le h_*$ to get
\begin{equation}
 K_T\lceil T/h\rceil+10(h-1)\Omega
 \le K_T+K_TT/h_*+10h_*\Omega
 =K_T+2\sqrt{10K_TT\Omega}.
\end{equation}
\item If $h_*>T$, choose $h=T$. Let $x=10T\Omega$. The condition implies $x<K_T$, since $10\Omega<K_T/T$. Therefore $10(T-1)\Omega\le x\le\sqrt{K_Tx}$, and the cost is at most $K_T+\sqrt{10K_TT\Omega}$.
\item If $h_*<1$, choose $h=1$. Now $K_TT<10\Omega$, which gives $K_TT\le\sqrt{10K_TT\Omega}$. The cost is $K_TT$, again bounded by the desired expression.
\end{enumerate}
Restoring $H_T\App+Te_f$ and comparing with \eqref{eq:countfastproof} proves Equation~\eqref{eq:fast} conditionally on $E$. The proof handles $T=1$ and the transitions between regimes without assuming a real-valued window can be used by the algorithm.

\medskip\noindent\textbf{Step 5: Integrate the Environment.}
The separate count and variation bounds hold for every admissible fixed environment. The deterministic caps and constants are uniform, so integration preserves them and replaces $\App$ with $\E\App$. Both unconditional bounds therefore hold, and their minimum also bounds $\E\Reg$. This is not an interchange of expectation and a random minimum. For completeness, the separate uniform bound follows from $F_t(v)\le L_t(v)\le B+\delta$ for every represented field, giving $\E\Reg\le T(B+\delta)$. The stated theorem keeps the count and variation bounds derived above.

At fixed architectural and endpoint scales, Equation~\eqref{eq:gatechoice} gives $B=O(\log(emT))$, $R_M=O(\sqrt{\log(emT)})$, and $C_{R_M}=O(1+R_M)$. Inequality~\eqref{eq:entropy} with $\varepsilon=\zeta$ then gives $\Gamma=O(1+P\log(emT))$, with fixed constants retained. Also $M_e<4T$, $D_T=O(\log(eT))$, and $T\zeta=B/(m^2T)$. These relations justify the suppressed logarithms in the abstract. If architecture, batch size, or endpoint scale varies with the horizon, the explicit theorem is the relevant statement.

Thus, under the predictive protocol, Assumptions~\ref{ass:tails}--\ref{ass:fit}, and the caps in Equation~\eqref{eq:caps}, Algorithm~\ref{alg:nsfm} satisfies
\begin{align}
 \E\Reg\le{}&\Fast,\nonumber\\
 \Fast={}&\min\big\{K_T(N_\omega+1),\ K_T+2\sqrt{10K_TT\Omega}\big\}\nonumber\\
 &+H_T\E\App+Te_f.\label{eq:proof-main-conclusion}
\end{align}
Hence Theorem~\ref{thm:main} is proved. \hfill$\square$

\supppage

\section{Proof of Theorem 2}\label{app:smoothing}\label{app:targetproof}
\subsection{Supporting Lemmas Needed for Theorem 2}\label{sec:support-theorem2}
This appendix proves Theorem~\ref{thm:target}. We first establish Lemma~\ref{lem:smoothing} and Propositions~\ref{prop:link} and~\ref{prop:necessity}, and compute the flow-time moments of Assumption~\ref{ass:smooth}. We then derive the historical and interval bounds and complete the proof using the total-variation partition. Throughout, Assumptions~\ref{ass:tails},~\ref{ass:network} and~\ref{ass:smooth} hold, and the environment is fixed until the final integration step. All results of Appendices~\ref{app:consequences}--\ref{app:algorithm} used here hold under Assumption~\ref{ass:smooth}'s flow-time law by Remark~\ref{rem:timelaw}. For laws $P,Q$ with densities $p,q$ with respect to a common measure, the Hellinger distance is
\begin{equation}
 \Hel^2(P,Q)=\frac12\int(\sqrt p-\sqrt q)^2=1-\int\sqrt{pq}\in[0,1].
\end{equation}

\subsubsection{Hellinger Tools}
\begin{lemma}[Hellinger change of measure]\label{lem:hellinger-cm}
For every measurable $F$ with $\int F^2d(P+Q)<\infty$,
\begin{equation}
 \Big|\int F\,dP-\int F\,dQ\Big|\le2\Hel(P,Q)\Big(\int F^2dP+\int F^2dQ\Big)^{1/2}.
\end{equation}
\end{lemma}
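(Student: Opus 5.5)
The plan is to write the difference of integrals against a common dominating measure and factor the integrand through $\sqrt p-\sqrt q$. Let $\rho=P+Q$, and let $p,q$ be the densities of $P,Q$ with respect to $\rho$. Then
\[
 \int F\,dP-\int F\,dQ=\int F(p-q)\,d\rho=\int F(\sqrt p-\sqrt q)(\sqrt p+\sqrt q)\,d\rho .
\]
The square integrability hypothesis $\int F^2d(P+Q)<\infty$ makes $F\sqrt p$ and $F\sqrt q$ belong to $L^2(\rho)$. Since $\sqrt p-\sqrt q\in L^2(\rho)$ as well, every integral above is finite, and splitting the difference is legitimate.

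Next apply Cauchy--Schwarz in $L^2(\rho)$, pairing $\sqrt p-\sqrt q$ with $F(\sqrt p+\sqrt q)$:
\[
 \Big|\int F\,dP-\int F\,dQ\Big|\le\Big(\int(\sqrt p-\sqrt q)^2d\rho\Big)^{1/2}\Big(\int F^2(\sqrt p+\sqrt q)^2d\rho\Big)^{1/2}.
\]
By the definition of the Hellinger distance, the first factor equals $\sqrt2\,\Hel(P,Q)$. For the second factor, use $(\sqrt p+\sqrt q)^2\le2(p+q)$, which gives
\[
 \int F^2(\sqrt p+\sqrt q)^2\,d\rho\le2\Big(\int F^2dP+\int F^2dQ\Big).
\]
Multiplying the two bounds gives $\sqrt2\cdot\sqrt2=2$ times $\Hel(P,Q)$ times the square root of the second-moment sum, which is exactly the claimed constant.

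No step here is a genuine obstacle. The only care needed is measure-theoretic: the Hellinger distance does not depend on the choice of dominating measure, so computing it with $\rho=P+Q$ is consistent with the paper's definition, and the integrability remark justifies separating the two integrals. For vector-valued uses later in the paper, $F$ is scalar, for example a loss difference, so no further modification is needed.
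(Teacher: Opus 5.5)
Your proposal is correct and matches the paper's proof: the same factorization $p-q=(\sqrt p-\sqrt q)(\sqrt p+\sqrt q)$, Cauchy--Schwarz, the bound $(\sqrt p+\sqrt q)^2\le2(p+q)$, and the identity $\bigl(\int(\sqrt p-\sqrt q)^2\bigr)^{1/2}=\sqrt2\,\Hel(P,Q)$. Your added remarks on integrability and on the choice of dominating measure are sound details that the paper leaves implicit.
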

\begin{proof}
Write $\int F(p-q)=\int F(\sqrt p+\sqrt q)(\sqrt p-\sqrt q)$ and apply Cauchy--Schwarz:
\[
 \Big|\int F(p-q)\Big|\le\Big(\int F^2(\sqrt p+\sqrt q)^2\Big)^{1/2}\Big(\int(\sqrt p-\sqrt q)^2\Big)^{1/2}.
\]
The first factor is at most $(2\int F^2(p+q))^{1/2}$ because $(\sqrt p+\sqrt q)^2\le2(p+q)$, and the second equals $\sqrt2\,\Hel(P,Q)$.
\end{proof}

\begin{lemma}[Hellinger cost of translating the base]\label{lem:hellinger-loc}
For $\sigma\in\R^d$, let $\nu_\sigma$ be the law of $X_0+\sigma$ with $X_0\sim\nu$. Then $\Hel^2(\nu_\sigma,\nu)\le\iota\norm\sigma^2/8$.
\end{lemma}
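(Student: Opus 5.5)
The plan is to bound the Hellinger affinity through a homotopy in the shift parameter, working with $f=\sqrt{p_\nu}\in H^1(\R^d)$. The law $\nu_\sigma$ has density $p_\nu(\cdot-\sigma)$, so
\[
 \Hel^2(\nu_\sigma,\nu)=\tfrac12\int\big(f(x-\sigma)-f(x)\big)^2\,dx .
\]
It therefore suffices to show $\norm{f(\cdot-\sigma)-f}_{L^2}^2\le \tfrac{\iota}{4}\norm\sigma^2$. Note that $\tfrac{\iota}{4}$ is the largest eigenvalue of $\int\nabla f\nabla f^\top=I_\nu/4$.

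First I will argue for smooth compactly supported $f$ and then pass to general $f\in H^1$ by density. For smooth $f$,
\[
 f(x-\sigma)-f(x)=-\int_0^1\langle\nabla f(x-\tau\sigma),\sigma\rangle\,d\tau .
\]
Jensen's inequality (Cauchy--Schwarz in $\tau$) then gives
\[
 \big(f(x-\sigma)-f(x)\big)^2\le\int_0^1\langle\nabla f(x-\tau\sigma),\sigma\rangle^2\,d\tau .
\]
Integrate in $x$ and use Fubini together with translation invariance of Lebesgue measure. Each $\tau$ contributes
\[
 \int\langle\nabla f,\sigma\rangle^2=\sigma^\top\Big(\int\nabla f\nabla f^\top\Big)\sigma=\tfrac14\sigma^\top I_\nu\sigma\le\tfrac{\iota}{4}\norm\sigma^2 .
\]
Combined with the factor $\tfrac12$, this yields $\Hel^2\le\iota\norm\sigma^2/8$.

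The main obstacle is the approximation step, since $f$ is only in $H^1$ and need not be smooth. I will take $f_k\in C_c^\infty$ with $f_k\to f$ in $H^1$. Both sides of the inequality are continuous in the $H^1$ norm, so the bound passes to the limit:
\begin{itemize}
 \item the left side $\norm{f_k(\cdot-\sigma)-f_k}_{L^2}^2$ converges, because translation is an $L^2$ isometry;
 \item the quadratic form $\int\langle\nabla f_k,\sigma\rangle^2$ converges to $\int\langle\nabla f,\sigma\rangle^2$.
\end{itemize}
No normalization of the $f_k$ is needed, because the inequality is homogeneous and does not use $\int f^2=1$.

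An alternative that avoids approximation altogether is Plancherel. Writing $\hat f$ for the Fourier transform,
\[
 \norm{f(\cdot-\sigma)-f}^2=\int\abs{e^{-i\langle\xi,\sigma\rangle}-1}^2\abs{\hat f(\xi)}^2\,d\xi\le\int\langle\xi,\sigma\rangle^2\abs{\hat f}^2=\int\langle\nabla f,\sigma\rangle^2 ,
\]
using $\abs{e^{i\theta}-1}\le\abs\theta$. I would present this Fourier version, as it is cleaner.
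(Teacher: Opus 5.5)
Your main argument is the paper's proof: the fundamental theorem of calculus along $\tau\mapsto x-\tau\sigma$, Jensen in $\tau$, translation invariance, the identity $\int\langle\nabla f,\sigma\rangle^2=\sigma^\top I_\nu\sigma/4\le\iota\norm\sigma^2/4$, and extension from $C_c^\infty$ to $H^1(\R^d)$ by density with both sides continuous in $H^1$. The Plancherel version you say you would present is also correct, via $|e^{i\theta}-1|\le|\theta|$ and $\widehat{\partial_j f}=i\xi_j\hat f$ for weak derivatives; it only replaces the density step with the Fourier characterization of $H^1$, so either presentation proves the lemma.
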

\begin{proof}
Let $f=\sqrt{p_\nu}\in H^1(\R^d)$. For smooth compactly supported $f$, the fundamental theorem of calculus and Jensen give
\[
 |f(x-\sigma)-f(x)|^2=\Big|\int_0^1\langle\nabla f(x-\theta\sigma),\sigma\rangle d\theta\Big|^2
 \le\int_0^1\langle\nabla f(x-\theta\sigma),\sigma\rangle^2d\theta.
\]
Integrating in $x$ and using translation invariance of Lebesgue measure gives $\int|f(\cdot-\sigma)-f|^2\le\int\langle\nabla f,\sigma\rangle^2$. Both sides are continuous in $H^1$, so the inequality extends to every $f\in H^1(\R^d)$ by density. Since $I_\nu=4\int\nabla f\nabla f^\top$, the right side equals $\sigma^\top I_\nu\sigma/4\le\iota\norm\sigma^2/4$. Divide by two.
\end{proof}
For $\nu=N(0,\sigma_0^2I_d)$ the exact value is $\Hel^2(\nu_\sigma,\nu)=1-\exp(-\norm\sigma^2/(8\sigma_0^2))$, which the lemma bounds by $\norm\sigma^2/(8\sigma_0^2)$ because $1-e^{-x}\le x$.

\subsubsection{The Base-Shift Representation}
\begin{lemma}[A target move is a base-noise shift plus a label shift]\label{lem:baseshift}
Let $(X_1,X_1')$ have marginals $(\mu_s,\mu_t)$, and let $X_0\sim\nu$ and $r\sim\lambda$ be independent of it and of each other. On $\{\alpha(r)>0\}$ put $\Delta X=X_1'-X_1$, $\sigma_r=(\beta(r)/\alpha(r))\Delta X$, $e_r=(J(r)/\alpha(r))\Delta X$, and $\tilde X_0=X_0+\sigma_r$. Then Equation~\eqref{eq:baseshift} holds, and conditional on $(X_1,X_1',r)$ the law of $\tilde X_0$ is $\nu_{\sigma_r}$.
\end{lemma}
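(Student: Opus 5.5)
This is a direct algebraic verification followed by a conditional-law computation, and I will do it in those two steps.

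\emph{Step 1: the identity \eqref{eq:baseshift}.} Fix a realization on $\{\alpha(r)>0\}$ and write $\alpha,\beta,\alpha',\beta',J$ for the schedule values at $r$. By Assumption~\ref{ass:smooth}(a), the left input is $I_r(X_0,X_1')=\alpha X_0+\beta X_1+\beta\Delta X$. Substituting $\tilde X_0=X_0+(\beta/\alpha)\Delta X$ gives $I_r(\tilde X_0,X_1)=\alpha X_0+\beta\Delta X+\beta X_1$, so the input coordinates agree.

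For the label, the left side is $Z_r(X_0,X_1')=\alpha'X_0+\beta'X_1+\beta'\Delta X$. On the right,
\[
Z_r(\tilde X_0,X_1)=\alpha'X_0+(\alpha'\beta/\alpha)\Delta X+\beta'X_1 .
\]
The two labels therefore differ by $(\beta'-\alpha'\beta/\alpha)\Delta X=((\alpha\beta'-\alpha'\beta)/\alpha)\Delta X=(J/\alpha)\Delta X=e_r$. This is exactly the claimed label shift. The flow-time coordinate $r$ is common to both sides, so the full regression input $U$ matches as well.

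One caveat concerns the label version. Assumption~\ref{ass:smooth}(a) fixes the affine formula for $Z_r$, and its envelope \eqref{eq:schedule} holds everywhere with the stated $c_Z$. So the zeroing convention of Assumption~\ref{ass:path} never activates, and the identity holds pointwise rather than only almost everywhere.

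\emph{Step 2: the conditional law of $\tilde X_0$.} By hypothesis $X_0$ is independent of $(X_1,X_1',r)$. The shift $\sigma_r$ is a measurable function of $(X_1,X_1',r)$, restricted to the measurable set $\{\alpha(r)>0\}$ where $\alpha$ is continuous. I will use the standard freezing lemma for independent variables: for bounded measurable $\varphi$,
\[
\E[\varphi(X_0+\sigma_r)\mid X_1,X_1',r]=\int\varphi(x+\sigma_r)\,\nu(dx),
\]
and the right side is $\int\varphi\,d\nu_{\sigma_r}$ by the definition in Lemma~\ref{lem:hellinger-loc}. This is a regular version of the conditional law, so it is $\nu_{\sigma_r}$.

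Neither step is a real obstacle. The only delicate point is measurability and the restriction to $\{\alpha(r)>0\}$, which has full $\lambda$-measure under Assumption~\ref{ass:smooth}(c). I will note that the optimality of the coupling plays no role here: the lemma holds for any coupling, and optimality enters only later, through $\E\norm{\Delta X}^2=W^2$.
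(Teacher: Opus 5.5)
Your proposal is correct and follows the paper's proof essentially line for line: you verify the input identity by direct substitution, the label identity via $J/\alpha=\beta'-\alpha'\beta/\alpha$, and the conditional law from independence of $X_0$ and $(X_1,X_1',r)$ together with $\sigma_r$ being a function of the latter. Your extra remarks are accurate refinements, not a different route: the zeroing convention of Assumption~\ref{ass:path} never activates under Assumption~\ref{ass:smooth}(a), and optimality of the coupling plays no role in this lemma.
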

\begin{proof}
First, $\alpha\tilde X_0+\beta X_1=\alpha X_0+\beta\Delta X+\beta X_1=\alpha X_0+\beta X_1'$. Second, using $J/\alpha=\beta'-\alpha'\beta/\alpha$,
\[
 \alpha'\tilde X_0+\beta'X_1+\frac J\alpha\Delta X
 =\alpha'X_0+\frac{\alpha'\beta}{\alpha}\Delta X+\beta'X_1+\beta'\Delta X-\frac{\alpha'\beta}{\alpha}\Delta X
 =\alpha'X_0+\beta'X_1'.
\]
The conditional law statement holds because $X_0$ is independent of $(X_1,X_1',r)$ and $\sigma_r$ is a function of these variables.
\end{proof}
The pair on the left of Equation~\eqref{eq:baseshift}, together with $r$, has the round-$t$ example law, while $(I_r(X_0,X_1),Z_r(X_0,X_1),r)$ has the round-$s$ law.

\subsubsection{Label Tails}
\begin{lemma}[Label tails]\label{lem:responsetail}
For every round $t$ and $M\ge0$, with $M_z=c_Z(1+M)$,
\begin{equation}
 \E_t\big[\norm Z^2\ind\{\norm Z>M_z\}\big]\le4c_Z^2(1+M^2+4\kappa^2)e^{-M^2/(4\kappa^2)}\le\delta_M/2.
\end{equation}
\end{lemma}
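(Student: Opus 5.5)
The plan is to reduce the label tail to the endpoint tail already controlled in the proof of Proposition~\ref{prop:tail}, and then compare constants. Fix the environment and round $t$, and write $S=\norm{X_0}+\norm{X_1}$ as in that proof.

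First, I would use the label envelope \eqref{eq:schedule}. It holds for the chosen derivative version under Assumption~\ref{ass:path}, and also under Assumption~\ref{ass:smooth}(a) with $c_Z=\max\{\norm{\alpha'}_\infty,\norm{\beta'}_\infty\}$, because $\norm{\alpha'x_0+\beta'x_1}\le c_Z(\norm{x_0}+\norm{x_1})$. It gives $\norm Z\le c_Z(1+S)$ pointwise. Hence, when $c_Z>0$, the event $\{\norm Z>M_z\}=\{\norm Z>c_Z(1+M)\}$ forces $S>M$, so $\ind\{\norm Z>M_z\}\le\ind\{S^2>M^2\}$. If $c_Z=0$ the left side vanishes and there is nothing to prove. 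Combining this with $(1+S)^2\le2(1+S^2)$ yields the pointwise bound
\[
\norm Z^2\ind\{\norm Z>M_z\}\le2c_Z^2(1+S^2)\ind\{S^2>M^2\}.
\]

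Second, I would reuse verbatim the tail computation in the proof of Proposition~\ref{prop:tail}. Cauchy--Schwarz together with Assumption~\ref{ass:tails} gives $\E_te^{S^2/(4\kappa^2)}\le2$, so Markov's inequality gives $\Prob(S^2>u\mid E)\le2e^{-u/(4\kappa^2)}$. The tail-integration identity applied with $Y=S^2$ and $a=M^2$ then gives
\[
\E_t[(1+S^2)\ind\{S^2>M^2\}]\le2(1+M^2+4\kappa^2)e^{-M^2/(4\kappa^2)}.
\]
Multiplying by $2c_Z^2$ yields the first inequality, with constant $4c_Z^2$.

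Finally, for the comparison with $\delta_M/2$, Equation~\eqref{eq:delta} gives $\delta_M/2=C_\ell(1+M^2+4\kappa^2)e^{-M^2/(4\kappa^2)}$ with $C_\ell=2[(B_v+c_Z)^2+c_Z^2]$. Since $B_v\ge0$, we have $(B_v+c_Z)^2\ge c_Z^2$, so $C_\ell\ge4c_Z^2$, which gives the second inequality.

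No step is a real obstacle. The only points needing care are that the envelope holds everywhere, which is guaranteed by the derivative version fixed in Assumption~\ref{ass:path} or by the explicit affine form in Assumption~\ref{ass:smooth}(a), and the degenerate case $c_Z=0$ handled above.
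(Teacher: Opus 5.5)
Your proposal is correct and follows the paper's proof essentially step for step. You use the label envelope to reduce the label tail to the endpoint event $\{S>M\}$, reuse the tail integration from the proof of Proposition~\ref{prop:tail}, and compare constants through $C_\ell\ge4c_Z^2$ to obtain the bound by $\delta_M/2$.
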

\begin{proof}
Let $S=\norm{X_0}+\norm{X_1}$. By Assumption~\ref{ass:smooth}(a), $\norm Z\le c_Z S\le c_Z(1+S)$, so $\{\norm Z>M_z\}\subseteq\{S>M\}$ and $\norm Z^2\le2c_Z^2(1+S^2)$. The tail computation in the proof of Proposition~\ref{prop:tail} gives $\E_t[(1+S^2)\ind\{S^2>M^2\}]\le2(1+M^2+4\kappa^2)e^{-M^2/(4\kappa^2)}$. Multiply by $2c_Z^2$. Finally $C_\ell\ge4c_Z^2$, so the bound is at most $(2c_Z^2/C_\ell)\delta_M\le\delta_M/2$.
\end{proof}

\subsubsection{Proof of Lemma~\ref{lem:smoothing}}\label{sec:proof-smoothing}
Fix rounds $s,t$ and $u,w\in\V$, write $\Delta=u-w$, and let $W=W_2(\mu_s,\mu_t)$, which is finite by Assumption~\ref{ass:tails}. If $W=0$ the two rounds have the same example law and both sides vanish. Otherwise take a $W_2$-optimal coupling $(X_1,X_1')$, which exists on $\R^d$, and the variables of Lemma~\ref{lem:baseshift}. Write $U=(I_r(X_0,X_1),r)$, $Z=Z_r(X_0,X_1)$ for the round-$s$ example and $U',Z'$ for the round-$t$ example built from $X_1'$. Define
\[
 \varphi(U,Z)=\norm{u(U)-Z}^2-\norm{w(U)-Z}^2=\langle\Delta(U),u(U)+w(U)-2Z\rangle.
\]
It is integrable because $|\varphi|\le2B_v(2B_v+2\norm Z)$. Then $D_{s,t}(u)-D_{s,t}(w)=\E\varphi(U',Z')-\E\varphi(U,Z)$.

\textbf{Step 1: separate the two shifts.} By Equation~\eqref{eq:baseshift}, $U'=(I_r(\tilde X_0,X_1),r)$ and $Z'=Z_r(\tilde X_0,X_1)+e_r$. Expanding the label shift,
\[
 \varphi(U',Z')=\varphi\big(U',Z_r(\tilde X_0,X_1)\big)-2\langle\Delta(U'),e_r\rangle.
\]
Let $F(x_0)=\varphi((I_r(x_0,X_1),r),Z_r(x_0,X_1))$, a function of $x_0$ given $(X_1,r)$. Then $\varphi(U',Z_r(\tilde X_0,X_1))=F(\tilde X_0)$ and $\varphi(U,Z)=F(X_0)$, so
\begin{equation}\label{eq:twoterms}
 D_{s,t}(u)-D_{s,t}(w)=\E[F(\tilde X_0)-F(X_0)]-2\E\langle\Delta(U'),e_r\rangle.
\end{equation}

\textbf{Step 2: the label shift.} Since $\norm{e_r}=k(r)\norm{\Delta X}$ and $r$ is independent of $(X_1,X_1')$, Cauchy--Schwarz gives
\[
 2|\E\langle\Delta(U'),e_r\rangle|\le2(\E\norm{\Delta(U')}^2)^{1/2}(\E k(r)^2\norm{\Delta X}^2)^{1/2}=2\sqrt{m_k}\,W\norm\Delta_t.
\]

\textbf{Step 3: the base-noise shift.} Condition on $\mathcal G=\sigma(X_1,X_1',r)$. By Lemma~\ref{lem:baseshift}, $X_0\sim\nu$ and $\tilde X_0\sim\nu_{\sigma_r}$ conditionally. Lemmas~\ref{lem:hellinger-cm} and~\ref{lem:hellinger-loc} give
\begin{equation}\label{eq:inputstep}
 |\E[F(\tilde X_0)-F(X_0)\mid\mathcal G]|\le2h(r)\norm{\Delta X}\big(\E[F(X_0)^2\mid\mathcal G]+\E[F(\tilde X_0)^2\mid\mathcal G]\big)^{1/2}.
\end{equation}
To bound the second moments, split the label at $M_z$. Since $\norm\Delta\le2B_v$ and $\norm{u+w}\le2B_v$,
\begin{align*}
 |F(X_0)|&\le(2B_v+2M_z)\norm{\Delta(U)}+4B_v\norm Z\ind\{\norm Z>M_z\},\\
 |F(\tilde X_0)|&=|\varphi(U',Z'-e_r)|\\
 &\le(2B_v+2M_z)\norm{\Delta(U')}+4B_v\norm{Z'}\ind\{\norm{Z'}>M_z\}+4B_v\norm{e_r}.
\end{align*}
Apply $(a+b+c)^2\le3(a^2+b^2+c^2)$, then $\sqrt{\sum_ix_i}\le\sum_i\sqrt{x_i}$. With $A_s=\E[\norm{\Delta(U)}^2\mid\mathcal G]$, $A_t=\E[\norm{\Delta(U')}^2\mid\mathcal G]$, and $\tau_s,\tau_t$ the conditional label-tail moments of $Z$ and $Z'$, the bracket in \eqref{eq:inputstep} has square root at most
\[
 \sqrt3(2B_v+2M_z)(\sqrt{A_s}+\sqrt{A_t})+4\sqrt3B_v(\sqrt{\tau_s}+\sqrt{\tau_t})+4\sqrt3B_vk(r)\norm{\Delta X}.
\]
Multiply by $2h(r)\norm{\Delta X}$ and take expectations. By Cauchy--Schwarz and independence of $r$ and $(X_1,X_1')$, $\E[h(r)\norm{\Delta X}\sqrt{A_s}]\le(\E h^2\,W^2)^{1/2}\norm\Delta_s$, and similarly with $A_t$ and $\norm\Delta_t$. Since $Z$ and $Z'$ have the round-$s$ and round-$t$ label laws, Lemma~\ref{lem:responsetail} gives $\E\tau_s,\E\tau_t\le\delta_M/2$. Finally $\E[h(r)k(r)\norm{\Delta X}^2]=\E_\lambda[hk]W^2$. With $\E_\lambda h^2=\iota m_h/8$,
\begin{align*}
 |\E[F(\tilde X_0)-F(X_0)]|\le{}&4\sqrt3(B_v+M_z)\sqrt{\iota m_h/8}\,W(\norm\Delta_s+\norm\Delta_t)\\
 &+16\sqrt3B_v\sqrt{\iota m_h/8}\sqrt{\delta_M/2}\,W+8\sqrt3B_v\E_\lambda[hk]W^2.
\end{align*}

\textbf{Step 4: combine.} Adding Step~2 to this bound and using the definitions in Equation~\eqref{eq:smoothing-constants} proves the lemma. If endpoints are bounded and $\norm Z\le M_z$ almost surely, the tail moments vanish and $\esm$ can be replaced by zero. \hfill$\square$

The proof never differences a represented field at two inputs, so the smoothing transfer needs no additional Lipschitz or norm-equivalence assumption on the class, and only the output bound $B_v$ enters. It also never uses the ideal field or regularity of the targets.

\subsubsection{Approximate Quadratic Growth}
\begin{lemma}\label{lem:aqg}
For every round $t$, every $u\in\V$, and every minimizer $v_t^\star$ of $L_t$ over $\V$, $\norm{u-v_t^\star}_t\le\sqrt{F_t(u)}+2\sqrt{a_t}$.
\end{lemma}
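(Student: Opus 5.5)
The plan is to route the distance through the ideal regression field $b_t$ and apply the triangle inequality in the round-$t$ field norm $\norm{\cdot}_t$. By the definition of $Q_t$ in Equation~\eqref{eq:risk}, for every represented field $v$ we have $\norm{v-b_t}_t=\sqrt{Q_t(v)}$. Inserting $b_t$ then gives
\[
 \norm{u-v_t^\star}_t\le\norm{u-b_t}_t+\norm{b_t-v_t^\star}_t=\sqrt{Q_t(u)}+\sqrt{Q_t(v_t^\star)}.
\]
Here $\norm{\cdot}_t$ is the $L^2$ norm of vector fields under the round-$t$ input law, so Minkowski's inequality applies directly. Both risks are finite: fields are bounded by $B_v$, and $\E_t\norm{b_t(U)}^2\le\E_t\norm Z^2<\infty$ by Assumption~\ref{ass:tails} and the label envelope.

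Next I identify the two risks. A minimizer $v_t^\star$ of $L_t$ over $\V$ is also a minimizer of $Q_t$ over $\V$. The reason is that, by the regression identity in Equation~\eqref{eq:regression}, $L_t$ and $Q_t$ differ by the field-independent constant $\E_t\norm{Z-b_t(U)}^2$. Hence $Q_t(v_t^\star)=a_t$. For $u$, Equation~\eqref{eq:excess} gives $Q_t(u)=F_t(u)+a_t$, so $\sqrt{Q_t(u)}\le\sqrt{F_t(u)}+\sqrt{a_t}$ by subadditivity of the square root. Adding the two bounds yields $\norm{u-v_t^\star}_t\le\sqrt{F_t(u)}+2\sqrt{a_t}$, which is the claim.

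There is no substantive obstacle. The only point to check is that the minimizer of the observed loss $L_t$ coincides with the minimizer of the velocity risk $Q_t$, which is exactly what Equation~\eqref{eq:regression} provides. Existence of $v_t^\star$ is guaranteed by the compactness and continuity arguments of the appendix on the probability model.
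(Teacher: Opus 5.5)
Your proposal is correct and follows the paper's proof essentially verbatim: the triangle inequality in $\norm{\cdot}_t$ through the ideal field $b_t$, followed by $Q_t(u)=F_t(u)+a_t$, $Q_t(v_t^\star)=a_t$, and subadditivity of the square root. Your extra check that a minimizer of $L_t$ also minimizes $Q_t$ is exactly what Equation~\eqref{eq:regression} supplies.
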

\begin{proof}
The triangle inequality through the ideal field gives $\norm{u-v_t^\star}_t\le\sqrt{Q_t(u)}+\sqrt{Q_t(v_t^\star)}$. By Equation~\eqref{eq:excess}, $Q_t(u)=F_t(u)+a_t$ and $Q_t(v_t^\star)=a_t$, so the right side is at most $\sqrt{F_t(u)}+\sqrt{a_t}+\sqrt{a_t}$.
\end{proof}
This is the flow-matching substitute for quadratic growth: it holds for every nonconvex class, with the approximation error as its only residual. It is exact in the realizable case.

\subsubsection{Second-Order Drift Transfer}
\begin{lemma}\label{lem:secondorder}
Let $s,t$ be rounds and suppose that for some $\hbar,\rho\ge0$ and all $u,w\in\V$
\begin{equation}\label{eq:hyp-second}
 |D_{s,t}(u)-D_{s,t}(w)|\le\hbar(\norm{u-w}_s+\norm{u-w}_t)+\rho.
\end{equation}
Then, with $v_s^\star,v_t^\star$ minimizers of $L_s,L_t$ over $\V$:
\begin{enumerate}
\item[(i)] $F_t(v_s^\star)\le5\hbar^2+8(a_s+a_t)+2\rho$ and $F_s(v_t^\star)\le5\hbar^2+8(a_s+a_t)+2\rho$,
\item[(ii)] $F_t(u)\le3F_s(u)+14\hbar^2+12a_s+10a_t+4\rho$ for every $u\in\V$.
\end{enumerate}
\end{lemma}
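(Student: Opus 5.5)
The plan is to combine the hypothesis \eqref{eq:hyp-second} with the approximate quadratic growth of Lemma~\ref{lem:aqg}, and then absorb the resulting square-root terms with Young's inequality. The starting point is an exact identity. Because $L_t=L_s+D_{s,t}$, for any $u,w\in\V$ we have
\[
 L_t(u)-L_t(w)=L_s(u)-L_s(w)+D_{s,t}(u)-D_{s,t}(w).
\]

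For (i), apply this identity with $u=v_s^\star$ and $w=v_t^\star$. This gives $F_t(v_s^\star)=L_t(v_s^\star)-L_t(v_t^\star)=-F_s(v_t^\star)+[D_{s,t}(v_s^\star)-D_{s,t}(v_t^\star)]$. Writing $x=F_t(v_s^\star)$ and $y=F_s(v_t^\star)$, it follows that $x+y\le\hbar(\norm{v_s^\star-v_t^\star}_s+\norm{v_s^\star-v_t^\star}_t)+\rho$. Lemma~\ref{lem:aqg} at round $s$ with $u=v_t^\star$ gives $\norm{v_s^\star-v_t^\star}_s\le\sqrt y+2\sqrt{a_s}$. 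At round $t$ with $u=v_s^\star$ it gives $\norm{v_s^\star-v_t^\star}_t\le\sqrt x+2\sqrt{a_t}$. We then bound each product by AM--GM: $\hbar\sqrt x\le x/2+\hbar^2/2$, $\hbar\sqrt y\le y/2+\hbar^2/2$, and $2\hbar\sqrt{a}\le\hbar^2+a$. After absorbing the halves of $x$ and $y$ we obtain $(x+y)/2\le 2\hbar^2+a_s+a_t+\rho$, so $x+y\le4\hbar^2+2(a_s+a_t)+2\rho$. Both $x$ and $y$ are nonnegative, so each is at most this bound, which lies within the stated $5\hbar^2+8(a_s+a_t)+2\rho$. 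The statement is symmetric in $s$ and $t$, so the same argument covers both claims.

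For (ii), apply the identity with $w=v_t^\star$:
\[
 F_t(u)=L_s(u)-L_s(v_t^\star)+D_{s,t}(u)-D_{s,t}(v_t^\star)\le F_s(u)+\hbar(\norm{u-v_t^\star}_s+\norm{u-v_t^\star}_t)+\rho,
\]
where the inequality uses $L_s(v_t^\star)\ge L_s(v_s^\star)$. For the round-$t$ norm, Lemma~\ref{lem:aqg} gives $\norm{u-v_t^\star}_t\le\sqrt{F_t(u)}+2\sqrt{a_t}$. For the round-$s$ norm, the triangle inequality through $v_s^\star$ together with Lemma~\ref{lem:aqg} gives $\norm{u-v_t^\star}_s\le\sqrt{F_s(u)}+\sqrt{F_s(v_t^\star)}+4\sqrt{a_s}$. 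Part (i) controls $F_s(v_t^\star)$, and that bound is substituted inside the square root.

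The remaining step is to absorb $\hbar\sqrt{F_t(u)}\le F_t(u)/2+\hbar^2/2$ on the left, which doubles the remaining terms. We also use $\hbar\sqrt{F_s(u)}\le F_s(u)/2+\hbar^2/2$. Since $2F_s(u)+F_s(u)=3F_s(u)$, this produces the coefficient $3$ in front of $F_s(u)$. The main obstacle is bookkeeping the constants, not the structure of the argument. One has to choose the Young weights on the $\hbar\sqrt{F_s(v_t^\star)}$, $\hbar\sqrt{a_s}$ and $\hbar\sqrt{a_t}$ terms, using $\sqrt{A+B}\le\sqrt A+\sqrt B$ on the bound from (i), so that the totals come out at or below $14\hbar^2+12a_s+10a_t+4\rho$. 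If a first allocation overshoots, I would retain the sharper intermediate bound $F_s(v_t^\star)\le4\hbar^2+2(a_s+a_t)+2\rho$ from (i) rather than the weaker stated one, and rebalance the weights.
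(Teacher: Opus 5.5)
Your route is the paper's: the decomposition $L_t=L_s+D_{s,t}$ (the paper's ``sandwich''), Lemma~\ref{lem:aqg}, and Young's inequality. In (i) you even do slightly better than the paper. You use the exact relation $F_t(v_s^\star)+F_s(v_t^\star)=D_{s,t}(v_s^\star)-D_{s,t}(v_t^\star)$, where the paper bounds each term separately by the right side.

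The problem is the constant tally that is supposed to land inside the stated bounds. With your four Young steps the $\hbar^2$ contributions are $\tfrac12+\tfrac12+1+1=3$, not $2$. What you actually obtain is $x+y\le6\hbar^2+2(a_s+a_t)+2\rho$, and this does not imply (i) when $\hbar^2>6(a_s+a_t)$. Your intermediate bound $x+y\le4\hbar^2+2(a_s+a_t)+2\rho$ does not follow from the inequality you derived. Take $\hbar=1$, $a_s=a_t=\tfrac14$, $\rho=0$. The values $x=y=(3+\sqrt5)/2$ satisfy $x+y\le\hbar(\sqrt x+\sqrt y)+2\hbar(\sqrt{a_s}+\sqrt{a_t})+\rho$ with equality, yet $x+y=3+\sqrt5>5$. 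So the fallback you plan for (ii) is not available, and (ii) as written is unfinished.

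Both parts are repaired by reweighting. In (i), use $2\hbar\sqrt a\le\hbar^2/4+4a$ for $a\in\{a_s,a_t\}$. This gives $x+y\le3\hbar^2+8(a_s+a_t)+2\rho$, which implies both claims of (i).

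In (ii), do not put the bound from (i) under a square root. As the paper does, use $\hbar\sqrt{F_s(v_t^\star)}\le\tfrac12\bigl(\hbar^2+F_s(v_t^\star)\bigr)$, $2\hbar\sqrt{a_t}\le\hbar^2+a_t$ and $4\hbar\sqrt{a_s}\le2\hbar^2+2a_s$, alongside your two absorptions of $\hbar\sqrt{F_t(u)}$ and $\hbar\sqrt{F_s(u)}$. After doubling this gives $F_t(u)\le3F_s(u)+9\hbar^2+F_s(v_t^\star)+4a_s+2a_t+2\rho$. Inserting (i) linearly then yields exactly $14\hbar^2+12a_s+10a_t+4\rho$.

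Your square-root splitting via $\sqrt{A+B}\le\sqrt A+\sqrt B$ can also be made to fit, but only with optimized weights. It consumes about $11.7\hbar^2$ of the $12\hbar^2$ left after the two absorptions, so the linear insertion is the robust choice.
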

\begin{proof}
\emph{Sandwich.} Since $\min_\V L_s\le L_s(v_t^\star)$, for every $u$,
\begin{equation}\label{eq:sandwich}
 F_t(u)-F_s(u)=D_{s,t}(u)-\big(\min_\V L_t-\min_\V L_s\big)\le D_{s,t}(u)-D_{s,t}(v_t^\star).
\end{equation}
Exchanging the roles of $s$ and $t$, and using $D_{t,s}=-D_{s,t}$, gives $F_s(u)-F_t(u)\le D_{s,t}(v_s^\star)-D_{s,t}(u)$.

\emph{Part (i).} Let $x=\sqrt{F_s(v_t^\star)}$, $y=\sqrt{F_t(v_s^\star)}$, $z=\max\{x,y\}$ and $A=2\sqrt{a_s}+2\sqrt{a_t}$. Apply \eqref{eq:sandwich} with $u=v_s^\star$ and its exchanged form with $u=v_t^\star$. Both give an upper bound $D_{s,t}(v_s^\star)-D_{s,t}(v_t^\star)$, which by \eqref{eq:hyp-second} is at most $\hbar(\norm{v_s^\star-v_t^\star}_s+\norm{v_s^\star-v_t^\star}_t)+\rho$. Lemma~\ref{lem:aqg} at round $s$ with $u=v_t^\star$ and at round $t$ with $u=v_s^\star$ bounds the two norms by $x+2\sqrt{a_s}$ and $y+2\sqrt{a_t}$. Hence $z^2\le\hbar(2z+A)+\rho$, so $z\le\hbar+\sqrt{\hbar^2+\hbar A+\rho}$ and
\[
 z^2\le2\hbar^2+2(\hbar^2+\hbar A+\rho)\le5\hbar^2+A^2+2\rho\le5\hbar^2+8(a_s+a_t)+2\rho,
\]
using $2\hbar A\le\hbar^2+A^2$ and $A^2\le8(a_s+a_t)$.

\emph{Part (ii).} By \eqref{eq:sandwich} and \eqref{eq:hyp-second}, $F_t(u)-F_s(u)\le\hbar(\norm{u-v_t^\star}_s+\norm{u-v_t^\star}_t)+\rho$. Lemma~\ref{lem:aqg} gives $\norm{u-v_t^\star}_t\le\sqrt{F_t(u)}+2\sqrt{a_t}$ and
\[
 \norm{u-v_t^\star}_s\le\norm{u-v_s^\star}_s+\norm{v_s^\star-v_t^\star}_s\le\sqrt{F_s(u)}+z+4\sqrt{a_s}.
\]
Apply $\hbar\sqrt{F_t}\le F_t/2+\hbar^2/2$, $\hbar\sqrt{F_s}\le F_s/2+\hbar^2/2$, $\hbar z\le(\hbar^2+z^2)/2$, $2\hbar\sqrt{a_t}\le\hbar^2+a_t$, and $4\hbar\sqrt{a_s}\le2\hbar^2+2a_s$. Collecting terms,
\[
 \tfrac12F_t(u)\le\tfrac32F_s(u)+\tfrac92\hbar^2+\tfrac12z^2+a_t+2a_s+\rho.
\]
Multiply by two and insert part (i) for $z^2$.
\end{proof}

\subsubsection{Proof of Proposition~\ref{prop:link}}\label{sec:proof-link}
By definition $2\omega_{s,t}=\sup_uD_{s,t}(u)-\inf_wD_{s,t}(w)=\sup_{u,w}(D_{s,t}(u)-D_{s,t}(w))$. Lemma~\ref{lem:smoothing} and $\norm{u-w}_r\le2B_v$ give $2\omega_{s,t}\le4B_v\Lsm W+\esm W+c_2W^2$, which is \eqref{eq:link-first}. For consecutive rounds, summation gives $\Omega_T\le2B_v\Lsm V_T^\mu+\frac12\esm V_T^\mu+\frac12c_2\sum_tW_2(\mu_{t+1},\mu_t)^2$. Inequality \eqref{eq:link-second} is part~(i) of Lemma~\ref{lem:secondorder} with $\hbar=\Lsm W$ and $\rho=\esm W+c_2W^2$, whose hypothesis is Lemma~\ref{lem:smoothing}. \hfill$\square$

\subsubsection{Proof of Proposition~\ref{prop:necessity}}\label{sec:proof-necessity}
Given $x_1$, $I_r(X_0,x_1)=\alpha X_0+\beta x_1\sim N(\beta x_1,\alpha^2\sigma_0^2I_d)$. For two such Gaussians with common covariance the Bhattacharyya coefficient is $\exp(-\beta^2\norm{x_1-x_1'}^2/(8\alpha^2\sigma_0^2))$, so
\[
 \Hel^2(q_{r,x_1},q_{r,x_1'})=1-\exp\Big(-\frac{\beta^2\norm{x_1-x_1'}^2}{8\alpha^2\sigma_0^2}\Big).
\]
Since $1-e^{-x}\le x$ with ratio tending to one as $x\downarrow0$, the supremum of $\Hel/\norm{x_1-x_1'}$ equals $|\beta|/(2\sqrt2\alpha\sigma_0)$, and this is $h(r)$ with $\iota=1/\sigma_0^2$.

For the second claim, $\alpha(1)=0$ and $\alpha\in C^1$ give $\alpha(r)\le\norm{\alpha'}_\infty(1-r)$, with $\norm{\alpha'}_\infty>0$ because $\alpha(0)=1$. Continuity and $\beta(1)=1$ give $r_1<1$ with $\beta\ge1/2$ on $[r_1,1]$. Therefore
\[
 \int_0^1\frac{\beta(r)^2}{\alpha(r)^2}dr\ge\frac1{4\norm{\alpha'}_\infty^2}\int_{r_1}^1\frac{dr}{(1-r)^2}=\infty.\qquad\square
\]

\subsubsection{Admissible Flow-Time Laws}\label{app:timelaws}
\paragraph{Truncated uniform.} For the linear path, $J\equiv1$. With $\lambda=\mathrm{Unif}[0,1-\eta_0]$ and the substitution $u=1-r$,
\[
 m_k=\frac1{1-\eta_0}\int_{\eta_0}^1\frac{du}{u^2}=\frac1{\eta_0},\qquad
 m_h=\frac1{1-\eta_0}\int_{\eta_0}^1\frac{(1-u)^2}{u^2}du=\frac{\eta_0^{-1}-2\log(1/\eta_0)-\eta_0}{1-\eta_0}.
\]
For the sine--cosine path, $J\equiv\pi/2$, and $\int_0^{1-\eta_0}\sec^2(\pi r/2)dr=\frac2\pi\cot(\pi\eta_0/2)$. Hence
\[
 m_k=\frac{\pi\cot(\pi\eta_0/2)}{2(1-\eta_0)},\qquad
 m_h=\frac{\frac2\pi\cot(\pi\eta_0/2)-(1-\eta_0)}{1-\eta_0}.
\]
Both are of order $1/\eta_0$, so $\Lsm=O(\eta_0^{-1/2}\sqrt{\log(emT)})$.

\paragraph{Beta and logit-normal laws.} For the linear path and $\lambda=\mathrm{Beta}(a,b)$, $m_k=\E(1-r)^{-2}=B(a,b-2)/B(a,b)$, finite exactly when $b>2$, and $m_h\le m_k$. If $\mathrm{logit}(r)\sim N(\mu_\ell,s^2)$, then $(1-r)^{-2}=(1+e^{Y})^2\le2+2e^{2Y}$ with $Y=\mathrm{logit}(r)$, so $m_k\le2+2e^{2\mu_\ell+2s^2}$. For the sine--cosine path, $\alpha(r)=\sin(\pi(1-r)/2)\ge1-r$ by concavity, so the same bounds hold after multiplying $m_k$ by $\pi^2/4$.

%
\subsubsection{Historical Transfers}
\begin{lemma}[Target-law transfer]\label{lem:targettransfer}
Let $H=[a,a+q-1]$ precede round $t$. Then \eqref{eq:histdrift-target} and \eqref{eq:currentdrift-target} hold for every $v\in\V$, pointwise before expectation over any data.
\end{lemma}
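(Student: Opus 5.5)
The plan is to apply Lemma~\ref{lem:secondorder} to each pair $(s,t)$ with $s\in H$. Its hypothesis \eqref{eq:hyp-second} will be supplied by Lemma~\ref{lem:smoothing}, and the resulting per-round bounds are then averaged over the prefix.

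\textbf{Step 1: Reduce every pair to the path variation.} Fix $s\in H$, so that $a\le s<t$. The triangle inequality for $W_2$ gives $W_2(\mu_s,\mu_t)\le\sum_{j=s}^{t-1}W_2(\mu_{j+1},\mu_j)\le V^\mu_{a:t}$. The right side of Lemma~\ref{lem:smoothing} is nondecreasing in $W$. Hence \eqref{eq:hyp-second} holds for this pair with the uniform choices
\[
\hbar=\Lsm V^\mu_{a:t},\qquad \rho=\esm V^\mu_{a:t}+c_2(V^\mu_{a:t})^2=\bar\rho_{a:t}.
\]
This monotonicity step is the only point needing care, and it is immediate from the explicit form of the bound. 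With these choices $\hbar^2=\Lsm^2(V^\mu_{a:t})^2$.

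\textbf{Step 2: Prove \eqref{eq:histdrift-target}.} Let $v_t^\star$ minimize $L_t$ over $\V$. Because $v_H^\star$ minimizes $\bar L_H$, and $\bar L_H$ differs from the average of the $Q_s$ by a field-independent constant (by \eqref{eq:regression}), we get
\[
\bar a_H\le\frac1q\sum_{s\in H}Q_s(v_t^\star)=\frac1q\sum_{s\in H}\big[a_s+F_s(v_t^\star)\big].
\]
Apply part (i) of Lemma~\ref{lem:secondorder} to each summand: $F_s(v_t^\star)\le 5\hbar^2+8(a_s+a_t)+2\rho$. Averaging over $s\in H$ gives
\[
\bar a_H\le\frac9q\sum_{s\in H}a_s+8a_t+5\Lsm^2(V^\mu_{a:t})^2+2\bar\rho_{a:t},
\]
which is exactly \eqref{eq:histdrift-target}.

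\textbf{Step 3: Prove \eqref{eq:currentdrift-target}.} Fix $v\in\V$ and apply part (ii) of Lemma~\ref{lem:secondorder} for each $s\in H$:
\[
F_t(v)\le 3F_s(v)+14\hbar^2+12a_s+10a_t+4\rho.
\]
Average over $s\in H$. By the regression identity,
\[
\frac1q\sum_{s\in H}F_s(v)=\frac1q\sum_{s\in H}Q_s(v)-\frac1q\sum_{s\in H}a_s=r_H(v)+\bar a_H-\frac1q\sum_{s\in H}a_s.
\]
Substitute the bound on $\bar a_H$ from Step~2 into this expression and collect coefficients:
\begin{itemize}
\item the terms in $\Lsm^2(V^\mu_{a:t})^2$ give $15+14=29$;
\item the terms in $\frac1q\sum_{s\in H}a_s$ give $3(9-1)+12=36$;
\item the terms in $a_t$ give $24+10=34$;
\item the terms in $\bar\rho_{a:t}$ give $6+4=10$.
\end{itemize}
This reproduces \eqref{eq:currentdrift-target} exactly.

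All steps are deterministic statements about population risks. They therefore hold pointwise for every $v\in\V$, including data-dependent fits, before any expectation is taken.
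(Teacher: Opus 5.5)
Your proposal is correct and follows the paper's proof essentially step for step. You reduce each pair to $V^\mu_{a:t}$ through the $W_2$ triangle inequality and the monotonicity of the bound in Lemma~\ref{lem:smoothing}, bound $\bar a_H$ with Lemma~\ref{lem:secondorder}(i), and average part~(ii). Your rewriting $q^{-1}\sum_{s\in H}F_s(v)=r_H(v)+\bar a_H-q^{-1}\sum_{s\in H}a_s$, followed by substituting the Step~2 bound, is algebraically the same as the paper's bound on the bracket average by $q^{-1}\sum_{s\in H}F_s(v_t^\star)$.
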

\begin{proof}
For $s\in H$, the triangle inequality for $W_2$ gives $W_2(\mu_s,\mu_t)\le V^\mu_{a:t}$. By Lemma~\ref{lem:smoothing}, hypothesis \eqref{eq:hyp-second} holds for $(s,t)$ with $\hbar\le\Lsm V^\mu_{a:t}$ and $\rho\le\bar\rho_{a:t}$, since both remainders increase with the distance.

\emph{Historical approximation.} Because regression-noise offsets do not depend on the field, $v_H^\star$ also minimizes $q^{-1}\sum_{s\in H}Q_s$, as in Lemma~\ref{lem:drift}. Therefore
\[
 \bar a_H\le\frac1q\sum_{s\in H}Q_s(v_t^\star)=\frac1q\sum_{s\in H}[a_s+F_s(v_t^\star)],
\]
and Lemma~\ref{lem:secondorder}(i) bounds each $F_s(v_t^\star)$ by $5\Lsm^2(V^\mu_{a:t})^2+8(a_s+a_t)+2\bar\rho_{a:t}$. This proves \eqref{eq:histdrift-target}.

\emph{Current excess.} Average Lemma~\ref{lem:secondorder}(ii) over $s\in H$:
\[
 F_t(v)\le\frac3q\sum_{s\in H}F_s(v)+14\Lsm^2(V^\mu_{a:t})^2+\frac{12}q\sum_{s\in H}a_s+10a_t+4\bar\rho_{a:t}.
\]
By definition, $q^{-1}\sum_sF_s(v)=r_H(v)+q^{-1}\sum_s[L_s(v_H^\star)-L_s(v_s^\star)]$. Since $\bar L_H(v_H^\star)\le\bar L_H(v_t^\star)$, the bracket average is at most $q^{-1}\sum_sF_s(v_t^\star)\le5\Lsm^2(V^\mu_{a:t})^2+8q^{-1}\sum_sa_s+8a_t+2\bar\rho_{a:t}$. Substituting gives \eqref{eq:currentdrift-target}: $3\cdot5+14=29$, $3\cdot8+12=36$, $3\cdot8+10=34$, and $3\cdot2+4=10$.
\end{proof}

\subsubsection{Historical Fits over a Lifetime}
For a lifetime $I=[a,a+n_I-1]$, let $V_I^\mu=\sum_{s=a}^{a+n_I-2}W_2(\mu_{s+1},\mu_s)$ be its internal target-law variation, and recall $e_M=2\eopt+5\zeta+10\delta$ from Appendix~\ref{app:algorithm}.
\begin{lemma}\label{lem:basetarget}
The fitted model of every lifetime satisfies
\begin{align}
 J_I\le{}&B+\delta+\frac{216B\Gamma}m\log(en_I)+(180\log(en_I)+82)\mathcal A_I\nonumber\\
 &+(n_I-1)\big[59\Lsm^2(V_I^\mu)^2+22(\esm V_I^\mu+c_2(V_I^\mu)^2)+3e_M\big].\label{eq:basetarget}
\end{align}
\end{lemma}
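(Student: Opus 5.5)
The plan follows the proof of Lemma~\ref{lem:base}, with the first-order transfer of Lemma~\ref{lem:drift} replaced by Lemma~\ref{lem:targettransfer}. Write $a$ for the start of the lifetime.

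\emph{Age zero.} At round $a$ the deployed field $v_{I,a}$ is a predictable member of $\V$. Nonnegativity of the minimum loss and Equation~\eqref{eq:proxybias} give $F_a(v_{I,a})\le L_a(v_{I,a})\le B+\delta$, which is the leading $B+\delta$.

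\emph{Later rounds.} At age $k=t-a\ge1$ the prefix is $H=[a,a+q-1]$ with $q=2^{\lfloor\log_2k\rfloor}$, so $k/2\le q\le k$ and $a+q\le t$. Apply Equation~\eqref{eq:currentdrift-target} pointwise to the fit $v=v_{I,t}$ and take conditional expectations. The term $3\E r_H(v_{I,t})$ is bounded by Proposition~\ref{thm:historical}:
\[
3\cdot\frac{36B\Gamma}{mq}+6\bar a_H+3e_M .
\]
Then substitute Equation~\eqref{eq:histdrift-target} for $\bar a_H$. This yields the per-round bound
\[
\E F_t(v_{I,t})\le\frac{108B\Gamma}{mq}+\frac{90}{q}\sum_{s\in H}a_s+82a_t+59\Lsm^2(V^\mu_{a:t})^2+22\bar\rho_{a:t}+3e_M .
\]
The constants check as follows:
\begin{itemize}
\item[] $6\cdot9+36=90$ for the prefix approximation errors;
\item[] $6\cdot8+34=82$ for $a_t$;
\item[] $6\cdot5+29=59$ for $\Lsm^2$;
\item[] $6\cdot2+10=22$ for $\bar\rho_{a:t}$.
\end{itemize}
These match the coefficients $59$, $22$, $82$ and $3e_M$ in Equation~\eqref{eq:basetarget}.

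\emph{Summation over ages $k=1,\ldots,n_I-1$.}
\begin{itemize}
\item[] Statistical term: $\sum_k 1/q\le 2\sum_k 1/k\le2\log(en_I)$, which gives $216B\Gamma\log(en_I)/m$.
\item[] Prefix approximation: each $s\in I$ has total coefficient at most $\sum_k 90\cdot(1/q)\le180\log(en_I)$.
\item[] Current approximation: the $82a_t$ terms sum to at most $82\mathcal A_I$.
\item[] Drift terms: $t\le a+n_I-1$ gives $V^\mu_{a:t}\le V_I^\mu$. Since $\bar\rho_{a:t}$ is increasing in the variation, each of the $n_I-1$ rounds contributes at most $59\Lsm^2(V_I^\mu)^2+22(\esm V_I^\mu+c_2(V_I^\mu)^2)+3e_M$.
\end{itemize}
This assembles exactly Equation~\eqref{eq:basetarget}.

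The only delicate point is bookkeeping: the $a_t$ term from transfer and the factor $2$ on $\bar a_H$ must be tracked through the factor $3$. A second check is that the harmonic bound uses $1/q\le2/k$ consistently, exactly as in Lemma~\ref{lem:base}. No new probabilistic argument is needed. Each expectation concerns a deterministic prefix, and linearity of expectation allows summing across overlapping fits.
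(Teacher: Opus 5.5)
Your proposal is correct and follows the paper's proof step for step. You use the same age-zero bound $B+\delta$, then apply \eqref{eq:currentdrift-target} pointwise, then Proposition~\ref{thm:historical}, then substitute \eqref{eq:histdrift-target}; this yields the same per-round coefficients $108,90,82,59,22,3$, and you finish with the same harmonic summation via $1/q\le 2/k$ and $V^\mu_{a:t}\le V_I^\mu$ for $t\in I$.
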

\begin{proof}
At $t=a$, the deployed field $v_{I,a}\in\V$ satisfies $F_a(v_{I,a})\le L_a(v_{I,a})\le B+\delta$. At a later round, let $k=t-a$ and $q=2^{\lfloor\log_2k\rfloor}$, so $k/2\le q\le k$ and the prefix $H=[a,a+q-1]$ is deterministic. Apply the current-risk transfer from Lemma~\ref{lem:targettransfer}, Equation~\eqref{eq:currentdrift-target}, pointwise to the fit, and then the historical estimation bound of Proposition~\ref{thm:historical}, Equation~\eqref{eq:histfast}, proved in Appendix~\ref{app:common}. This gives
\[
 \E F_t(v_{I,t})\le3\Big[\frac{36B\Gamma}{mq}+2\bar a_H+e_M\Big]+29\Lsm^2(V^\mu_{a:t})^2+\frac{36}q\sum_{s\in H}a_s+34a_t+10\bar\rho_{a:t}.
\]
Insert the historical-approximation bound from Lemma~\ref{lem:targettransfer}, Equation~\eqref{eq:histdrift-target}, for $\bar a_H$, which is deterministic under the fixed environment:
\[
 \E F_t(v_{I,t})\le\frac{108B\Gamma}{mq}+\frac{90}q\sum_{s\in H}a_s+82a_t+59\Lsm^2(V^\mu_{a:t})^2+22\bar\rho_{a:t}+3e_M.
\]
Now sum over $k=1,\ldots,n_I-1$. As in Lemma~\ref{lem:base}, $\sum_k1/q\le2\log(en_I)$, so the statistical terms sum to at most $216B\Gamma\log(en_I)/m$ and each $a_s$ receives total coefficient at most $180\log(en_I)$ from the historical sums. The current terms contribute $82\mathcal A_I$. For $t\in I$, $V^\mu_{a:t}\le V_I^\mu$, so the drift terms contribute at most $(n_I-1)[59\Lsm^2(V_I^\mu)^2+22(\esm V_I^\mu+c_2(V_I^\mu)^2)]$, and the residuals $(n_I-1)3e_M$.
\end{proof}

\subsubsection{Interval Bound}
\begin{lemma}\label{lem:intervaltarget}
For every lifetime $I$,
\begin{equation}\label{eq:dyadictarget}
 S_I\le K_T^+/D_T+H_T^+\mathcal A_I+n_Ie_f^++(n_I-1)\big[99\Lsm^2(V_I^\mu)^2+37(\esm V_I^\mu+c_2(V_I^\mu)^2)\big],
\end{equation}
and for every consecutive interval $J\subseteq[1,T]$ inequality \eqref{eq:intervaltarget} holds.
\end{lemma}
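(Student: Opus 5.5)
The plan mirrors Step~1 of the proof of Theorem~\ref{thm:main}, with Lemma~\ref{lem:basetarget} in place of Lemma~\ref{lem:base}. First, fix the environment and a lifetime $I$. Then insert the bound \eqref{eq:basetarget} for $J_I$ into the fast master bound \eqref{eq:fastmasterbound} of Lemma~\ref{lem:fastmaster}. This gives
\[
 S_I\le\tfrac53J_I+\tfrac{128}3B\log M_e+\tfrac23\mathcal A_I+\tfrac83n_I\delta .
\]
The remaining work is coefficient bookkeeping against \eqref{eq:constants-target} and \eqref{eq:smoothing-constants}.

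\emph{Constant and statistical terms.} The constant part is $\frac53(B+\delta)+\frac{128}3B\log M_e$. The term $\frac53B$ and the log term are absorbed into $44B\Lambda$, as in Theorem~\ref{thm:main}, since $\frac53+\frac{128}3\log M_e\le44\log(eM_e)$. The leftover $\frac53\delta$ is a per-round quantity and goes into the residual budget $n_Ie_f^+$, using $n_I\ge1$. The statistical term is $\frac53\cdot216=360$, which matches $360B\Gamma\log(eT)/m$ because $\log(en_I)\le\log(eT)$.

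\emph{Approximation term.} The coefficient is $\frac53(180\log(en_I)+82)+\frac23=300\log(en_I)+137+\frac13$. This is at most $H_T^+=300\log(eT)+138$.

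\emph{Drift terms.} We have $\frac53\cdot59\approx98.3\le99$ and $\frac53\cdot22\approx36.7\le37$.

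\emph{Residuals.} We have $\frac53\cdot3e_M=5e_M=10\eopt+25\zeta+50\delta$. Adding $\frac83\delta$ per round and the stray $\frac53\delta$ gives at most $55\delta$ per round, since $50+\frac83+\frac53<55$. This yields $n_Ie_f^+$ and proves \eqref{eq:dyadictarget}.

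For a consecutive interval $J$, apply Lemma~\ref{lem:dyadiccover} to partition $J$ into at most $D_T$ disjoint lifetimes, and sum \eqref{eq:dyadictarget} over the pieces. The constants add to at most $K_T^+$. The approximation and residual sums do not overlap, so they total $H_T^+\mathcal A_J+ne_f^+$.

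The drift term is the only nonlinear step, and I expect it to be the main obstacle. Each piece has $V_I^\mu\le V_J^\mu$ and $n_I-1\le n-1$, and the internal edges of distinct pieces are disjoint. The squared terms then satisfy
\[
 \sum_I(n_I-1)(V_I^\mu)^2\le(n-1)V_J^\mu\sum_IV_I^\mu\le(n-1)(V_J^\mu)^2 .
\]
The linear $\esm$ terms satisfy the same bound directly. Regrouping $99\Lsm^2+37c_2$ as $\bar c$ then gives \eqref{eq:intervaltarget}.

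The remaining care is in the constant check $\frac53+\frac{128}3\log M_e\le44\Lambda$ and in placing the stray $\delta$ terms within the slack of $e_f^+$, which the counts above show is available.
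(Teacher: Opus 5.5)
Your proposal is correct and follows the paper's proof essentially line for line. You insert Lemma~\ref{lem:basetarget} into Lemma~\ref{lem:fastmaster}, verify the coefficients $360$, $300\log(eT)+138$, $99$, $37$ and $5e_M+\tfrac{13}{3}\delta\le e_f^+$, and then sum over the disjoint dyadic cover of Lemma~\ref{lem:dyadiccover}; the only cosmetic difference is that you bound $\sum_I(n_I-1)(V_I^\mu)^2$ using $V_I^\mu\le V_J^\mu$ rather than $\sum_I(V_I^\mu)^2\le(\sum_IV_I^\mu)^2$, which is equally valid.
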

\begin{proof}
Insert the target-law lifetime estimate of Lemma~\ref{lem:basetarget}, Equation~\eqref{eq:basetarget}, into the shared fast master bound of Lemma~\ref{lem:fastmaster}, Equation~\eqref{eq:fastmasterbound}. The non-statistical constant $(5/3)B+(128/3)B\log M_e$ is at most $44B\Lambda$, and $(5/3)216=360$ gives the statistical term of $K_T^+/D_T$. The approximation coefficient is $(5/3)(180\log(en_I)+82)+2/3\le300\log(eT)+138=H_T^+$. The drift coefficients are $(5/3)59\le99$ and $(5/3)22\le37$. The residuals are $(5/3)(n_I-1)3e_M+(5/3)\delta+(8/3)n_I\delta\le n_I(5e_M+13\delta/3)\le n_Ie_f^+$. This proves \eqref{eq:dyadictarget}.

For an interval $J$, take the disjoint dyadic cover of Lemma~\ref{lem:dyadiccover}: at most $D_T$ lifetimes, whose internal edges are internal edges of $J$. Then $\sum_I(n_I-1)(V_I^\mu)^2\le(n_J-1)(\sum_IV_I^\mu)^2\le(n_J-1)(V_J^\mu)^2$ and $\sum_I(n_I-1)V_I^\mu\le(n_J-1)V_J^\mu$. Summing \eqref{eq:dyadictarget} and using $\bar c=99\Lsm^2+37c_2$ gives \eqref{eq:intervaltarget}.
\end{proof}

\subsection{Proof of Theorem 2}\label{sec:proof-theorem2}
\subsubsection{Total-Variation Partition}

For the variation branch we use Lemma~\ref{lem:intervaltarget}, proved above by combining Lemma~\ref{lem:basetarget} with the shared fast master bound of Lemma~\ref{lem:fastmaster} and the dyadic cover of Lemma~\ref{lem:dyadiccover}. The target-law lifetime estimate itself uses Lemmas~\ref{lem:targettransfer} and~\ref{lem:secondorder}. For the count branch we use the first-order lifetime bound of Lemma~\ref{lem:base} with the same shared aggregation and covering lemmas, as established in Appendix~\ref{app:regret}.

\textbf{Count part.} By Remark~\ref{rem:timelaw}, the count bound of Theorem~\ref{thm:main} holds under the flow-time law of Assumption~\ref{ass:smooth}. Since $N_T^\omega\le N_T^\mu\le N_\mu$ (Section~\ref{sec:budgets}), it gives the first entry of Equation~\eqref{eq:target}.

\textbf{Variation part.} Let $\tau_0=K_T^+/\bar c$. Construct pieces $J_1,J_2,\ldots,J_{M'}$ from left to right: each starts after the previous one ends and is the longest interval $J$ with $|J|(V_J^\mu)^2\le\tau_0$. A singleton has zero internal variation, so every piece is nonempty and the construction terminates at $T$. Let $n_i=|J_i|$. For $i<M'$, extending $J_i$ by the next round violates the defining inequality. The extension has length $n_i+1\le2n_i$ and internal variation $V_i^+=V_{J_i}^\mu+W_2(\mu_{e_i+1},\mu_{e_i})$, where $e_i$ is the last round of $J_i$. Hence $V_i^+>\sqrt{\tau_0/(2n_i)}$. The edges counted by the $V_i^+$ are disjoint, so $\sum_{i<M'}V_i^+\le V_\mu$. By convexity of $x\mapsto x^{-1/2}$ and $\sum_{i<M'}n_i\le T$,
\[
 V_\mu>\sqrt{\frac{\tau_0}2}\sum_{i<M'}n_i^{-1/2}\ge\sqrt{\frac{\tau_0}2}\,(M'-1)^{3/2}T^{-1/2},
\]
so $M'\le1+(2\bar cTV_\mu^2/K_T^+)^{1/3}$. If $V_\mu=0$, then $M'=1$. On each piece, the interval bound of Lemma~\ref{lem:intervaltarget}, Equation~\eqref{eq:intervaltarget}, and $(n_i-1)\bar c(V_{J_i}^\mu)^2\le\bar c\tau_0=K_T^+$ bound the cost by $2K_T^++H_T^+\mathcal A_{J_i}+n_ie_f^++37\esm n_iV_{J_i}^\mu$. Summing, and using $\sum_in_iV^\mu_{J_i}\le TV_\mu$,
\[
 \E[\Reg\mid E]\le2K_T^+M'+H_T^+\App+Te_f^++37\esm TV_\mu,
\]
which is the second entry of Equation~\eqref{eq:target} after inserting the bound on $M'$.

\textbf{Integration.} The caps and constants are deterministic, so both conditional bounds integrate over $E$, with $\App$ replaced by $\E\App$, and the minimum of the two unconditional bounds then bounds $\E\Reg$. The supplementary uniform estimate also follows from $F_t(v)\le L_t(v)\le B+\delta$, giving $\E\Reg\le T(B+\delta)$. It is separate from the count and variation guarantee stated in the theorem.

Thus, under the predictive protocol, Assumptions~\ref{ass:tails},~\ref{ass:network},~\ref{ass:fit},~\ref{ass:smooth}, and the caps in Equation~\eqref{eq:targetvariation}, Algorithm~\ref{alg:nsfm} satisfies
\begin{align}
 \E\Reg\le{}&\Targ,\nonumber\\
 \Targ={}&\min\Big\{K_T(N_\mu+1)+H_T\E\App+Te_f,\nonumber\\
 &\quad 2K_T^++2(2\bar c)^{1/3}(K_T^+)^{2/3}T^{1/3}V_\mu^{2/3}\nonumber\\
 &\quad+37\esm TV_\mu+H_T^+\E\App+Te_f^+\Big\}.\label{eq:proof-target-conclusion}
\end{align}
Hence Theorem~\ref{thm:target} is proved. \hfill$\square$

\supppage

\section{Minimax Lower Bounds}\label{app:lower}
This appendix proves the lower bounds summarized after Theorem~\ref{thm:target}. A lower bound holds \emph{at fixed scales} if every hard instance uses one class and satisfies the assumptions with the same constants $d,m,\kappa,c_Z,\iota,\lambda$ and the same architectural bounds, all independent of $T$ and the budget, so that only the target sequence varies. All three constructions meet this requirement. The counting and target-law bounds use the Gaussian class of Appendix~\ref{app:gaussian}. The relative-risk bound uses the class $\V_\star$ of Appendix~\ref{sec:rarefamily}, whose trainable slope lets its cutoff follow a far target component that moves out like $\sqrt{\log(1/\eta)}$, as in the fixed-target construction of \citet{sahoo2026}. The bounds are for proper learners, which deploy fields in the fixed class. Every construction draws signs at random and bounds the regret averaged over the signs, so for every learner some sign sequence attains the bound. This is the order of quantifiers in Propositions~\ref{prop:lb-count}--\ref{prop:lb-target}. Table~\ref{tab:rates} collects the resulting rates. All constructions use the sine--cosine schedules $\alpha(r)=\cos(\pi r/2)$, $\beta(r)=\sin(\pi r/2)$, a standard Gaussian base, independent pairing, and fixed $m$. Write $\phi_N$ and $\Phi$ for the standard normal density and distribution function. A \emph{learner} is any rule that deploys $v_t\in\V$ as a measurable function of earlier batches and independent seeds.

\begin{table}[h]
\centering
\caption{Leading rates at fixed scales, for one fixed class per construction, logarithms suppressed. Upper bounds additionally pay $O(\log T)\E\App$ and $T$ times the fitting and proxy residuals.}
\label{tab:rates}
\small
\begin{tabular}{@{}lllll@{}}
\toprule
Variation & Budget & Upper bound & Lower bound & Family\\
\midrule
Relative risk & $N_\omega$ & $N_\omega+1$ (Thm.~\ref{thm:main}) & $N_\omega+1$ (Prop.~\ref{prop:lb-count}) & Gaussian means\\
Relative risk & $\Omega$ & $1+\sqrt{T\Omega}$ (Thm.~\ref{thm:main}) & $\sqrt{T\Omega/m}$ (Prop.~\ref{prop:lb-omega}) & rare Gaussian mixture\\
Target law & $N_\mu$ & $N_\mu+1$ (Thm.~\ref{thm:target}) & $N_\mu+1$ (Prop.~\ref{prop:lb-count}) & Gaussian means\\
Target law & $V_\mu$ & $1+T^{1/3}V_\mu^{2/3}$ (Thm.~\ref{thm:target}) & $T^{1/3}V_\mu^{2/3}m^{-2/3}$ (Prop.~\ref{prop:lb-target}) & Gaussian means\\
\bottomrule
\end{tabular}
\end{table}

\subsection{Information Structure and a Two-Point Reduction}
\begin{lemma}[Sine--cosine triples reveal both endpoints]\label{lem:information}
With $J=\alpha\beta'-\alpha'\beta\equiv\pi/2$, $X_0=(\beta'I_r-\beta Z_r)/J$ and $X_1=(\alpha Z_r-\alpha'I_r)/J$. Consequently, for two target sequences that agree before round $t_0$ and whose round laws differ only on rounds in $[t_0,t)$, the laws $P,P'$ of the learner's information before round $t$ satisfy $\mathrm{TV}(P,P')\le\mathrm{TV}(\otimes_i\mu_{(i)},\otimes_i\mu'_{(i)})$, where the products run over the at most $m(t-t_0)$ target draws of those rounds.
\end{lemma}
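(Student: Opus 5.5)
The plan has two parts: an algebraic inversion of the sine--cosine schedule, followed by a data-processing argument.

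\emph{Inversion.} For $\alpha(r)=\cos(\pi r/2)$ and $\beta(r)=\sin(\pi r/2)$ we have $\alpha'=-(\pi/2)\beta$ and $\beta'=(\pi/2)\alpha$. Hence $J=\alpha\beta'-\alpha'\beta=(\pi/2)(\alpha^2+\beta^2)=\pi/2$ for every $r$. The triple fixes $I_r=\alpha X_0+\beta X_1$ and $Z_r=\alpha'X_0+\beta'X_1$, which is a $2\times2$ linear system in $(X_0,X_1)$ with coefficient matrix $\begin{pmatrix}\alpha&\beta\\ \alpha'&\beta'\end{pmatrix}$ of determinant $J\ne0$. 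Cramer's rule then gives $X_0=(\beta'I_r-\beta Z_r)/J$ and $X_1=(\alpha Z_r-\alpha'I_r)/J$. I will confirm this by substitution, for instance $\beta'I_r-\beta Z_r=(\alpha\beta'-\alpha'\beta)X_0$. Because $r$ is part of the stored triple $\xi_{t,i}$, each observed triple is a deterministic, measurable, invertible function of $(X_0,X_1,r)$.

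\emph{Information law.} Before round $t$, the learner's information is $\mathcal F$: the batches of rounds $1,\dots,t-1$ together with independent seeds. I will express it as a measurable map $\Psi$ of three independent blocks:
\begin{enumerate}
\item the seeds, the base draws $X_{0,s,i}$, and the flow times $r_{s,i}$, whose law does not depend on the target sequence;
\item the target draws of rounds $s<t_0$, whose law is the same under both sequences because the sequences agree there;
\item the target draws of rounds $s\in[t_0,t)$, which number at most $m(t-t_0)$.
\end{enumerate}
By the protocol \eqref{eq:protocol} and conditional independence given $E$, the joint law is the product of these three block laws, and the first two factors are identical under $P$ and $P'$.

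\emph{Total-variation bound.} Two standard facts finish the argument. Total variation contracts under the measurable map $\Psi$ (data processing). For products sharing common factors, $\mathrm{TV}(A\otimes C,A\otimes C')=\mathrm{TV}(C,C')$; this follows by conditioning on the common coordinates, or from the coupling characterization. Together these give $\mathrm{TV}(P,P')\le\mathrm{TV}(\otimes_i\mu_{(i)},\otimes_i\mu'_{(i)})$.

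The first claim of the statement is not needed for this inequality. It shows that the bound is essentially tight, since the learner sees the target draws exactly and cannot gain less information than they carry. The main subtlety is the measurability and independence bookkeeping, in particular making sure the learner's seeds and fits use no information beyond $\mathcal F$. The predictability convention of Appendix~\ref{app:setup} provides this directly.
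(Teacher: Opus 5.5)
Your proposal is correct and follows the paper's proof: you invert the sine--cosine system by Cramer's rule with determinant $J\equiv\pi/2$, write the pre-round information as a measurable function of independent draws whose only law-dependent block is the at most $m(t-t_0)$ target draws, and use contraction of total variation; your explicit product identity is the step the paper leaves implicit. Your remark that the inversion is not needed for the inequality is also right, since $(I_r,Z_r,r)$ is always a measurable function of $(X_0,X_1,r)$, and the inversion only shows that observing regression pairs is as informative as observing the endpoints.
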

\begin{proof}
The identities follow from $\beta'I-\beta Z=(\alpha\beta'-\alpha'\beta)X_0$ and $\alpha Z-\alpha'I=(\alpha\beta'-\alpha'\beta)X_1$. Hence the information is a measurable function of the target draws together with base draws, flow times, earlier-round draws, and seeds, all of which have the same law under both sequences and are independent of the differing draws. Total variation does not increase under such a map.
\end{proof}

\begin{lemma}[Two-point reduction]\label{lem:twopoint}
Let $Y$ be the learner's information and seed before a round, with law $P_+$ or $P_-$, and let the deployed field be a measurable function of $Y$. For nonnegative functions $f_\pm$ on $\V$,
\[
 \tfrac12\E_+f_+(v)+\tfrac12\E_-f_-(v)\ge\tfrac12\big(1-\mathrm{TV}(P_+,P_-)\big)\inf_{u\in\V}\{f_+(u)+f_-(u)\}.
\]
\end{lemma}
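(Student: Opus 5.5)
The plan is the standard Le Cam argument, carried out through the overlap of the two laws. Since $P_\pm$ are laws of the same random element $Y$, we take a common dominating measure $\rho$, for example $\rho=P_++P_-$, and write densities $p_\pm$. The deployed field is $v=g(Y)$ for a measurable map $g$. By the change of variables $\E_\pm f_\pm(v)=\int f_\pm(g(y))p_\pm(y)\,\rho(dy)$, the left-hand side equals
\[
 \tfrac12\int\big[f_+(g(y))p_+(y)+f_-(g(y))p_-(y)\big]\rho(dy).
\]
If the seed is separate from $Y$, we fold it into $Y$, since the statement already includes the seed in $Y$.

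The key pointwise step uses nonnegativity. Because $f_\pm\ge0$, the integrand is at least $\min\{p_+(y),p_-(y)\}\,[f_+(g(y))+f_-(g(y))]$. For each $y$, $g(y)\in\V$, so this is in turn at least $\min\{p_+,p_-\}(y)\cdot\inf_{u\in\V}\{f_+(u)+f_-(u)\}$.

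Integrating gives the claim via the identity $\int\min\{p_+,p_-\}\,d\rho=1-\mathrm{TV}(P_+,P_-)$. This follows from $\min\{a,b\}=\frac12(a+b-|a-b|)$ and $\mathrm{TV}=\frac12\int|p_+-p_-|\,d\rho$.

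A few cases need handling. If the infimum is $+\infty$, both sides are infinite, or we argue as above with the value $+\infty$. If $\E_\pm f_\pm(v)$ is infinite, the inequality is trivial. The functions $f_\pm\circ g$ should be measurable, so that the expectations are defined; we take this as part of the hypothesis, as it is implicit in writing $\E_\pm f_\pm(v)$. For unmeasurable $f_\pm$, the same argument works with outer integrals, because the pointwise lower bound is a constant times a measurable density.

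There is no real obstacle here. The only care needed is the measurability remark and the correct total-variation normalization, which matches the paper's convention $\mathrm{TV}=\sup_A|P(A)-Q(A)|$.
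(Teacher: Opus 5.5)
Your proposal is correct and follows the paper's proof exactly. Like the paper, you pass to densities $p_\pm$ with respect to a common dominating measure, bound the integrand below by $\min\{p_+,p_-\}\,[f_+(v)+f_-(v)]\ge\min\{p_+,p_-\}\inf_{u\in\V}\{f_+(u)+f_-(u)\}$ using nonnegativity, and integrate with $\int\min\{p_+,p_-\}=1-\mathrm{TV}(P_+,P_-)$; your measurability and infinite-value remarks only make explicit what the paper leaves implicit.
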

\begin{proof}
With densities $p_\pm$ for a dominating measure, $\E_+f_++\E_-f_-=\int(f_+p_++f_-p_-)\ge\int(f_++f_-)\min(p_+,p_-)$, and $\int\min(p_+,p_-)=1-\mathrm{TV}(P_+,P_-)$.
\end{proof}
All constructions below draw block signs independently and uniformly. A lower bound on the expected regret under this prior implies the same bound for some deterministic sign sequence, because the prior has finite support. The environment is therefore exogenous and the caps hold for every sign sequence.

\subsection{The Gaussian Family}\label{app:gaussian}
Fix $R_0>0$ and an exogenous sequence of means $\xi_t\in\R^d$ with $\norm{\xi_t}\le R_0$. Let $\nu=N(0,I_d)$, $\mu_t=N(\xi_t,I_d)$, $I_r=\alpha(r)X_0+\beta(r)X_1$, $Z_r=\alpha'(r)X_0+\beta'(r)X_1$, and $w(r)=\beta'(r)=\frac\pi2\cos(\pi r/2)$. For $\vartheta\in\R^d$ with $\norm\vartheta\le R_0$, let $v_\vartheta(x,r)=w(r)\vartheta$. For a flow-time law $\lambda$ put $c_*^\lambda=\E_\lambda[w(r)^2]$, which equals $c_*=\pi^2/8$ for the uniform law.

\begin{lemma}[Gaussian family]\label{lem:gaussfamily}
For every flow-time law $\lambda$:
\begin{enumerate}
\item[(i)] Assumption~\ref{ass:tails} holds with $\kappa^2=8(d+R_0^2+1)$, Assumption~\ref{ass:path} with $c_Z=\pi/2$, and Assumption~\ref{ass:smooth}(a)--(b) with $\iota=1$,
\item[(ii)] the class $\{v_\vartheta\}$ satisfies Assumption~\ref{ass:network} with one hidden unit and the saturating sine
\begin{equation}\label{eq:saturating-sine}
 \phi(z)=-1\ (z<-\tfrac\pi2),\qquad \phi(z)=\sin z\ (|z|\le\tfrac\pi2),\qquad \phi(z)=1\ (z>\tfrac\pi2);
\end{equation}
\item[(iii)] $b_t(x,r)=w(r)\xi_t$ for $\lambda$-almost every $r$, $Q_t(v_\vartheta)=c_*^\lambda\norm{\vartheta-\xi_t}^2$, and $a_t=0$,
\item[(iv)] $\omega_t=2c_*^\lambda R_0\norm{\xi_{t+1}-\xi_t}$ and $W_2(\mu_{t+1},\mu_t)=\norm{\xi_{t+1}-\xi_t}$.
\end{enumerate}
\end{lemma}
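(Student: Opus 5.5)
The four parts are separate verifications, and I will do them in the order (iii), (iv), (i), (ii), because the conditional-mean computation in (iii) drives everything else.

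\emph{Part (iii).} Fix $r$ with $\alpha(r)>0$. Under independent pairing, $(X_0,X_1)$ is jointly Gaussian with $X_0\sim N(0,I_d)$ and $X_1\sim N(\xi_t,I_d)$. Then $I_r=\alpha X_0+\beta X_1$ and $Z_r=\alpha'X_0+\beta'X_1$ are jointly Gaussian, and $\alpha^2+\beta^2=1$ gives $\operatorname{Cov}(I_r)=I_d$. Also
\[
\operatorname{Cov}(Z_r,I_r)=(\alpha\alpha'+\beta\beta')I_d=\tfrac12(\alpha^2+\beta^2)'I_d=0.
\]
So $Z_r$ is independent of $I_r$ given $r$, and its mean is $\beta'(r)\xi_t=w(r)\xi_t$. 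Hence $b_t(x,r)=w(r)\xi_t$, and $Q_t(v_\vartheta)=\E_\lambda[w^2]\norm{\vartheta-\xi_t}^2$. Because $\norm{\xi_t}\le R_0$, the choice $\vartheta=\xi_t$ is admissible, so $a_t=0$. The set $\{\alpha=0\}$ is only $r=1$, which has $\lambda$-measure zero for the admissible laws. If $\lambda$ has an atom at $1$, both the Gaussian formula and the label formula still give $Z=w(1)X_1=0$ and $b=0$, so the identity holds everywhere.

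\emph{Part (iv).} By the regression identity, $L_t(v_\vartheta)=c_*^\lambda\norm{\vartheta-\xi_t}^2+\text{const}$. The increment is
\[
L_{t+1}(v_\vartheta)-L_t(v_\vartheta)=-2c_*^\lambda\langle\vartheta,\xi_{t+1}-\xi_t\rangle+\text{const}.
\]
Its oscillation over the ball $\norm\vartheta\le R_0$ is $4c_*^\lambda R_0\norm{\xi_{t+1}-\xi_t}$, attained at $\vartheta=\pm R_0(\xi_{t+1}-\xi_t)/\norm{\xi_{t+1}-\xi_t}$. Half of it gives the stated $\omega_t$. For equal-covariance Gaussians the translation coupling is optimal, so $W_2(\mu_{t+1},\mu_t)$ is the mean distance.

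\emph{Part (i).} For $N(0,I_d)$, $\E e^{\norm X^2/\kappa^2}=(1-2/\kappa^2)^{-d/2}$. For the shifted target I will bound
\[
\norm{X_1}^2\le2\norm{G}^2+2R_0^2,\qquad G\sim N(0,I_d),
\]
so that
\[
\E e^{\norm{X_1}^2/\kappa^2}\le e^{2R_0^2/\kappa^2}(1-4/\kappa^2)^{-d/2}.
\]
With $\kappa^2=8(d+R_0^2+1)$ this is at most
\[
\exp\!\Big(\tfrac{R_0^2}{4(d+R_0^2+1)}+\tfrac d2\cdot\tfrac{4/\kappa^2}{1-4/\kappa^2}\Big)\le e^{1/4+1/7}<2,
\]
using $-\log(1-x)\le x/(1-x)$. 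The label envelope holds because $\norm Z\le\frac\pi2(\norm{X_0}+\norm{X_1})$. Assumption~\ref{ass:smooth}(a) holds for the sine--cosine schedules. Part (b) holds with $I_\nu=I_d$, so $\iota=1$.

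\emph{Part (ii)} is the only step that needs a construction, and I expect it to be the main obstacle. The field $w(r)\vartheta$ depends on $r$ through $\cos(\pi r/2)$, so I need one hidden unit that outputs $\cos(\pi r/2)$ with a bounded, Lipschitz, differentiable, nondecreasing activation. I will take
\[
h_1=\phi\!\big(-\tfrac\pi2 r+\tfrac\pi2\big)=\sin\!\big(\tfrac\pi2(1-r)\big)=\cos(\pi r/2),
\]
with the input weight on $x$ fixed to zero. For $r\in[0,1]$ the argument stays in $[0,\pi/2]$, so $\phi=\sin$ there. The saturating sine is nondecreasing, bounded by $1$, and $1$-Lipschitz. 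It is differentiable, because the derivative $\cos$ vanishes at $\pm\pi/2$, which matches the flat pieces. The output layer is $W_D=\frac\pi2\vartheta\in\R^{d\times1}$ with zero bias, and its trainable entries are $\vartheta$, constrained to the closed set $\norm\vartheta\le R_0$, so $\norm{W_D}_{\mathrm{op}}\le\frac\pi2R_0$. The first-layer weights are fixed entries. If a nonzero hidden layer count $D\ge2$ is required, this construction uses $D=2$.
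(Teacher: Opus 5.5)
Your proposal is correct and follows the paper's proof essentially step for step: zero input--label covariance from $\alpha^2+\beta^2\equiv1$, the same one-unit saturating-sine network with first-layer argument $\frac{\pi}{2}(1-r)$ and output column $\frac{\pi}{2}\vartheta$, and the same oscillation and translation-coupling computations. The only variation is your cruder bound $\norm{X_1}^2\le2\norm{G}^2+2R_0^2$ in (i), in place of the paper's exact completed-square formula $\E e^{\norm{X}^2/\kappa^2}=(1-2/\kappa^2)^{-d/2}e^{\norm{\xi}^2/(\kappa^2-2)}$.

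Two slips should be fixed, though neither affects the conclusions. First, $2d/(\kappa^2-4)$ is not at most $1/7$; it tends to $1/4$ as $d$ grows. Your two exponent terms together are still at most $2(d+R_0^2)/(8(d+R_0^2)+4)<1/4$, so the bound below $2$ survives. Second, at $r=1$ the label is $Z_1=\alpha'(1)X_0=-\frac{\pi}{2}X_0\neq0$; only its conditional mean vanishes. In fact the zero-covariance computation is valid at every $r\in[0,1]$, so neither the restriction to $\alpha(r)>0$ nor the separate treatment of an atom at $1$ is needed.
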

\begin{proof}
\emph{(i)} For $X\sim N(\xi,I_d)$ and $\kappa^2>2$, completing the square gives $\E e^{\norm X^2/\kappa^2}=(1-2/\kappa^2)^{-d/2}\exp(\norm\xi^2/(\kappa^2-2))$. By $-\log(1-u)\le u/(1-u)$, its logarithm is at most $(d+R_0^2)/(\kappa^2-2)<1/8<\log2$ for $\kappa^2=8(d+R_0^2+1)$, for base and targets alike. The schedules are smooth with correct endpoints and $|\alpha'|,|\beta'|\le\pi/2$. The Gaussian base has Fisher information $I_d$.

\emph{(ii)} The function $\phi$ is bounded by one, nondecreasing, continuously differentiable (its interior derivative vanishes at the junctions), and $1$-Lipschitz. Fix the first affine map to $z=\frac\pi2(1-r)$ with zero spatial weights. For $r\in[0,1]$ the hidden value is $\phi(z)=\sin(\frac\pi2(1-r))=\cos(\pi r/2)$. With output matrix $\frac\pi2\vartheta$ and zero output bias, the output is $w(r)\vartheta$. The trainable parameter is the output column, a compact ball.

\emph{(iii)} Write $X_0=G_0$ and $X_1=\xi_t+G_1$ with independent standard Gaussian vectors. At a fixed flow time, $I_r$ has mean $\beta\xi_t$ and covariance $(\alpha^2+\beta^2)I_d=I_d$, $Z_r$ has mean $\beta'\xi_t$, and $\operatorname{Cov}(Z_r,I_r)=(\alpha\alpha'+\beta\beta')I_d=0$ because $\alpha^2+\beta^2\equiv1$. Joint Gaussianity makes the centered label independent of the input, so $b_t(x,r)=w(r)\xi_t$ at every fixed time, hence for $\lambda$-almost every $r$. Then $Q_t(v_\vartheta)=\E_\lambda[w(r)^2]\norm{\vartheta-\xi_t}^2$, and $\vartheta=\xi_t$ is in the class.

\emph{(iv)} With $\Delta_t=\xi_{t+1}-\xi_t$, $[Q_{t+1}(v_\vartheta)-Q_{t+1}(v_{\vartheta'})]-[Q_t(v_\vartheta)-Q_t(v_{\vartheta'})]=-2c_*^\lambda\langle\vartheta-\vartheta',\Delta_t\rangle$. The difference of two radius-$R_0$ balls is the radius-$2R_0$ ball, so half the supremum of the absolute value is $2c_*^\lambda R_0\norm{\Delta_t}$. Finally, translation by $\Delta_t$ is an optimal coupling of two Gaussians with equal covariance, and Jensen's inequality bounds every coupling cost below by $\norm{\Delta_t}^2$.
\end{proof}
The relative-risk variation of this family is thus $2c_*^\lambda R_0$ times its target-law variation, and a stale optimum loses exactly $F_t(v_{\xi_s})=c_*^\lambda\norm{\xi_t-\xi_s}^2$.
For a history $H$ of $q$ rounds, let $\bar\xi_H=q^{-1}\sum_{s\in H}\xi_s$. This mean is admissible because the radius-$R_0$ ball is convex. Lemma~\ref{lem:gaussfamily}(iii) makes $v_{\bar\xi_H}$ the best common field, so
\[
 \bar a_H=\frac{c_*^\lambda}{q}\sum_{s\in H}\norm{\xi_s-\bar\xi_H}^2,
\]
although $a_s=0$ for every round. Thus historical pooling can create a positive approximation residual even in a realizable family.

\subsection{Counting Lower Bounds}
\begin{proposition}[Counting lower bound]\label{prop:lb-count}
For every flow-time law $\lambda$, every $N<T$, and every learner, there is a mean sequence for the Gaussian family with $d=1$ and $N_T^\omega=N_T^\mu\le N$ on which $\E\Reg\ge c_*^\lambda R_0^2(N+1)$.
\end{proposition}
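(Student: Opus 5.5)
Split $[1,T]$ into $N+1$ consecutive blocks $B_0,\dots,B_N$ of nearly equal length (each of length at least one, since $N<T$). On block $B_j$ set the mean to $\xi_t=\epsilon_jR_0$, with independent uniform signs $\epsilon_j\in\{\pm1\}$. The mean then changes at most at the $N$ block boundaries. By Lemma~\ref{lem:gaussfamily}(iv), $\omega_t>0$ exactly when $\mu_{t+1}\ne\mu_t$, so $N_T^\omega=N_T^\mu\le N$ for every sign sequence, and the caps hold deterministically.

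Fix a block $B_j$ and its first round $t_j$. The learner's information before $t_j$ consists only of batches from earlier blocks. It is therefore independent of $\epsilon_j$: the two laws $P_\pm$, conditional on the other signs, coincide, and their total variation is zero (Lemma~\ref{lem:information} with an empty set of differing rounds). Apply Lemma~\ref{lem:twopoint} with $f_\pm(v_\vartheta)=F_{t_j}(v_\vartheta)=c_*^\lambda(\vartheta\mp R_0)^2$, using Lemma~\ref{lem:gaussfamily}(iii) and $a_t=0$. Since $(\vartheta-R_0)^2+(\vartheta+R_0)^2\ge2R_0^2$, this gives
\[
 \tfrac12\E_+F_{t_j}(v_{t_j})+\tfrac12\E_-F_{t_j}(v_{t_j})\ge c_*^\lambda R_0^2 .
\]

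Averaging over $\epsilon_j$ and the remaining signs, each block's first round contributes at least $c_*^\lambda R_0^2$ in expectation. The regret is a sum of nonnegative terms, so it is at least the sum over the $N+1$ first rounds, giving $\E\Reg\ge c_*^\lambda R_0^2(N+1)$ under the prior. Because the prior has finite support, some deterministic sign sequence attains this bound.

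The only delicate step is the conditioning. The learner's seeds and its history before $t_j$ must be exactly independent of $\epsilon_j$, and the two-point lemma must be applied conditionally on the other signs before averaging. Both are immediate here because the environment is exogenous and $\epsilon_j$ first affects the data drawn at round $t_j$.
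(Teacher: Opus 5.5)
Your proposal is correct and follows the paper's own proof: $N+1$ blocks with independent uniform signs $\pm R_0$, the two-point lemma with zero total variation at the first round of each block, summation over blocks, and Lemma~\ref{lem:gaussfamily}(iv) for $N_T^\omega=N_T^\mu\le N$. Your added remarks on conditioning on the other signs and on passing from the finite-support prior to a deterministic sign sequence only make explicit what the paper leaves implicit.
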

\begin{proof}
Use targets $N(R_0\zeta_j,1)$ on $K=N+1$ consecutive blocks of lengths at least one, where the signs $\zeta_j$ are independent and uniform. By Lemma~\ref{lem:gaussfamily}(iii), $F_t(v_\vartheta)=c_*^\lambda(\vartheta-\xi_t)^2$. At the first round of block $j$ the learner's information is independent of $\zeta_j$, so Lemma~\ref{lem:twopoint} with zero total variation gives
\[
 \E F_t(v_t)\ge\tfrac12\inf_\vartheta c_*^\lambda[(\vartheta-R_0)^2+(\vartheta+R_0)^2]=c_*^\lambda R_0^2.
\]
Summing over blocks gives $\E\Reg\ge c_*^\lambda R_0^2K$. Target changes occur only at block boundaries, so $N_T^\mu\le K-1$, and $N_T^\omega=N_T^\mu$ by Lemma~\ref{lem:gaussfamily}(iv).
\end{proof}

\subsection{The Rare-Mixture Family for Relative-Risk Variation}\label{sec:rarefamily}
The two-point construction of \citet{sahoo2026} hides the difference between two targets in a far component of small mass: total variation of order $\eta$, velocity separation of order $R^2\eta$. We use it block by block, with the far mass as the drift parameter. Regret is measured against a class, so the class must see the far component while staying fixed. A trainable slope achieves this: shrinking the slope moves the cutoff of the field into the tails, while every weight and bias bound stays fixed.

\paragraph{One fixed class.} With the saturating sine $\phi$ of Equation~\eqref{eq:saturating-sine} and $b_\circ=11\pi/2$, let
\begin{equation}\label{eq:fixedclass}
 \psi_\gamma(x)=1+\tfrac12\phi(\gamma x-b_\circ)+\tfrac12\phi(-\gamma x-b_\circ),\qquad
 \V_\star=\{v_{\theta,\gamma}(x,r)=\theta\psi_\gamma(x):|\theta|\le1,\ 0\le\gamma\le4\}.
\end{equation}
One of the two arguments is always at most $-b_\circ<-\pi/2$, so $\psi_\gamma(x)=\frac12[1+\phi(\gamma|x|-b_\circ)]$. Thus $\psi_\gamma$ is even, $0\le\psi_\gamma\le1$, $\psi_0=0$, and for $\gamma>0$
\begin{equation}\label{eq:cutoffs}
 \psi_\gamma(x)=0\ \text{ if }|x|\le5\pi/\gamma,\qquad\psi_\gamma(x)=1\ \text{ if }|x|\ge6\pi/\gamma.
\end{equation}

\paragraph{Instance.} Let $\lambda=\mathrm{Unif}[0,1]$, fix $\eta\in(0,e^{-12}]$, put $L_\eta=\log(1/\eta)$ and $R=\sqrt{8L_\eta}$, and let
\begin{equation}\label{eq:raremix}
 \mu_s=(1-\eta)N(0,1)+\eta N(sR,1),\qquad s\in\{-1,+1\}.
\end{equation}
Only the targets vary with $\eta$. The class remains $\V_\star$.

\begin{lemma}[Fixed class and constants]\label{lem:rarefixed}
For every $\eta$, Assumption~\ref{ass:tails} holds with $\kappa^2=18$ and Assumption~\ref{ass:path} with $c_Z=\pi/2$. The class $\V_\star$ satisfies Assumption~\ref{ass:network} with two affine layers of widths $(2,2,1)$, $P=5$ trainable entries in a compact set of radius $R_\Theta=\sqrt{67/2}$, layer bounds $V_1=4\sqrt2$, first-layer bias norm $11\pi/\sqrt2$, $V_2=1/\sqrt2$, output bias bound $1$, $B_v=2$, and $L_x=4$. On $|x|\le S$ its parameter sensitivity is at most $C_S=(\sqrt{S^2+1}+1)/\sqrt2+\sqrt2+1$. None of these constants depends on $\eta$, $T$, $m$, or the budget.
\end{lemma}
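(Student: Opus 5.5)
The plan is to check each claimed constant directly from the explicit construction in \eqref{eq:fixedclass} and \eqref{eq:raremix}, reusing the computations in the proof of Lemma~\ref{lem:gaussfamily} and Proposition~\ref{prop:architecture}.

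\emph{Tails and path.} For Assumption~\ref{ass:tails}, the base $N(0,1)$ gives $\E e^{X_0^2/18}=(1-1/9)^{-1/2}<2$. For the mixture target, I would bound $\E e^{X_1^2/\kappa^2}$ by $(1-\eta)(1-2/\kappa^2)^{-1/2}+\eta(1-2/\kappa^2)^{-1/2}\exp(R^2/(\kappa^2-2))$, using the Gaussian formula from Lemma~\ref{lem:gaussfamily}(i). With $\kappa^2=18$ and $R^2=8L_\eta$, the far term equals $\eta\cdot\eta^{-1/2}\cdot(9/8)^{1/2}$. This is at most $e^{-6}(9/8)^{1/2}$ because $\eta\le e^{-12}$. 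The total is therefore $(9/8)^{1/2}(1+e^{-6})<2$, uniformly in $\eta$; the main point to check is that the exponent $R^2/16=L_\eta/2$ loses only a square root of $\eta$. The path condition with $c_Z=\pi/2$ is immediate from the sine--cosine schedules, exactly as in Lemma~\ref{lem:gaussfamily}.

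\emph{Network realization.} I would write $v_{\theta,\gamma}$ as a two-layer network on input $(x,r)$.
\begin{itemize}
\item First layer: weight $W_1=\begin{pmatrix}\gamma&0\\-\gamma&0\end{pmatrix}$ and bias $(-b_\circ,-b_\circ)$, followed by the activation $\phi$.
\item Output layer: weights $(\theta/2,\theta/2)$ and bias $\theta$.
\end{itemize}
This gives $\theta[1+\tfrac12\phi(\gamma x-b_\circ)+\tfrac12\phi(-\gamma x-b_\circ)]$. The trainable entries are $\gamma$ (tied, so counted once), $\theta$ in the two output weights, and $\theta$ in the bias. The tying needs care, because Assumption~\ref{ass:network} counts trainable scalars each once and allows fixed entries but not explicit ties. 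So I would instead take the five entries as independent parameters in a closed set cut out by the linear equalities: $(W_1)_{11}=-(W_1)_{21}\in[0,4]$, the two output weights equal, and the output bias equal to twice the output weight.

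The norm bounds then follow directly.
\begin{itemize}
\item $\norm{W_1}_{\mathrm{op}}=\sqrt2\gamma\le4\sqrt2$.
\item The bias norm is $b_\circ\sqrt2=11\pi/\sqrt2$.
\item $\norm{W_2}_{\mathrm{op}}=|\theta|/\sqrt2\le1/\sqrt2$.
\item The output bias is at most $1$.
\end{itemize}

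\emph{Remaining constants.} $B_v=2$ follows from $0\le\psi_\gamma\le1$ and $|\theta|\le1$, or from the formula \eqref{eq:architecturebounds} as $V_2\cdot1\cdot\sqrt2+1=2$. $L_x=V_2L_\phi V_1=4$. $R_\Theta$ comes from summing squared entry bounds; the value $\sqrt{67/2}$ should come from \eqref{eq:paramradius}-type accounting, which I would just recompute, given the bias entries $-b_\circ$ are fixed. The sensitivity constant $C_S$ is \eqref{eq:paramconstant} with $D=2$, $k_1=1$, $k_2=1$, $H_0=\sqrt{S^2+1}$, $H_1=\sqrt2$, and $V_2=1/\sqrt2$. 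This gives $(\sqrt{S^2+1}+1)/\sqrt2+\sqrt2+1$.

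None of these steps involves $\eta$, $T$, $m$, or the budget, which proves independence. The one genuinely delicate step is the tail bound: it needs the exact choice $R^2=8L_\eta$ together with $\eta\le e^{-12}$.
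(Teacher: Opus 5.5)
Your proposal is correct and follows the paper's proof essentially step by step. It uses the same Gaussian moment formula $\E e^{X^2/18}=(8/9)^{-1/2}e^{\xi^2/16}$ for the tails, the same two-layer realization with the five trainable entries confined to the closed set $\{(\gamma,-\gamma,\theta/2,\theta/2,\theta):\gamma\in[0,4],\,|\theta|\le1\}$, and the same norm computations; the only difference is that you read $C_S$ off \eqref{eq:paramconstant}, whereas the paper directly counts first-layer and output-layer perturbations.

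Two minor remarks. First, $R_\Theta^2=16+16+\tfrac14+\tfrac14+1=67/2$ comes from summing squared entries over this explicit set. Applying \eqref{eq:paramradius}, even after dropping the fixed first-layer bias, gives only the looser $64+3/2$, because it ignores that $W_1$ has rank one. Second, the tail bound does not actually need $\eta\le e^{-12}$, since $1-\eta+\sqrt\eta\le 5/4$ for every $\eta\in[0,1]$.
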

\begin{proof}
For $X\sim N(\xi,1)$, $\E e^{X^2/18}=(8/9)^{-1/2}e^{\xi^2/16}$. For the base this is $1.061<2$. For Equation~\eqref{eq:raremix}, $e^{R^2/16}=\eta^{-1/2}$, so $\E e^{X^2/18}=(8/9)^{-1/2}(1-\eta+\sqrt\eta)\le1.061\cdot1.25<2$. The sine--cosine schedules have $|\alpha'|,|\beta'|\le\pi/2$. For input $(x,r)$ take
\[
 W_1=\begin{pmatrix}\gamma&0\\-\gamma&0\end{pmatrix},\qquad b_1=\begin{pmatrix}-b_\circ\\-b_\circ\end{pmatrix},\qquad W_2=(\theta/2,\theta/2),\qquad b_2=\theta,
\]
with the activation $\phi$, which is bounded by one, $1$-Lipschitz, $C^1$, and nondecreasing. The output is $\theta+\frac\theta2(\phi_1+\phi_2)=\theta\psi_\gamma$. The trainable entries form $\Theta_\star=\{(\gamma,-\gamma,\theta/2,\theta/2,\theta):\gamma\in[0,4],\theta\in[-1,1]\}\subset\R^5$, a compact set with squared radius $2\cdot4^2+3/2=67/2$. The norms give $V_1=4\sqrt2$, $\norm{b_1}=11\pi/\sqrt2$, $V_2=1/\sqrt2$, and $|b_2|\le1$, so $B_v=V_2\sqrt2+1=2$ and $L_x=V_2V_1=4$. A first-layer perturbation changes the output by at most $V_2\sqrt{S^2+1}$ times its norm, and an output-layer perturbation by at most $\sqrt2+1$ times its norm, which gives $C_S$. It grows only with the input radius, as the gated cover of Corollary~\ref{cor:cover} requires.
\end{proof}

For $\xi\sim N(0,1)$ and $s\in\{-1,+1\}$ let
\[
 N_\eta(\gamma)=\E_s[\psi_\gamma(I_r)^2],\qquad G_\eta(\gamma)=\int_0^1\beta'(r)\,\E_\xi\psi_\gamma(\beta(r)R+\xi)\,dr .
\]
Reflection $y\mapsto-y$ maps $\mu_+$ to $\mu_-$ and fixes $\psi_\gamma$, so $N_\eta(\gamma)$ does not depend on $s$. Also $0\le G_\eta(\gamma)\le\int_0^1\beta'=1$.

\begin{lemma}[Risk geometry over the whole class]\label{lem:raremix}
Let $\eta\le e^{-12}$, and let $p_s$ be the posterior probability of the far component.
\begin{enumerate}
\item[(i)] $b_s(y,r)=\beta'(r)sR\,p_s(y,r)$, and there is a constant $C_\eta$, independent of $s,\theta,\gamma$, with $L_s(v_{\theta,\gamma})=C_\eta+\theta^2N_\eta(\gamma)-2s\theta\eta RG_\eta(\gamma)$.
\item[(ii)] $\inf_{v\in\V_\star}\{F_+(v)+F_-(v)\}\ge\frac12\eta R$.
\item[(iii)] If consecutive rounds have opposite signs, then $\omega_t=4\eta R\max_{0\le\gamma\le4}G_\eta(\gamma)\le4\eta R$. Equal signs give $\omega_t=0$.
\item[(iv)] $a_s\le\frac{\pi^2}4R^2\eta$.
\end{enumerate}
\end{lemma}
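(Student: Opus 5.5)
The plan is to reduce everything to the explicit quadratic form of part (i), and then read off (ii)--(iv) by elementary optimization over $(\theta,\gamma)$ and explicit test fields.

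\emph{Step (i).} Condition on the mixture component. Write $X_1=c+G_1$ with $c\in\{0,sR\}$, and $X_0=G_0$. Exactly as in Lemma~\ref{lem:gaussfamily}(iii), $\alpha^2+\beta^2\equiv1$ and $\alpha\alpha'+\beta\beta'\equiv0$ make $Z_r-\beta'c$ independent of $I_r$ given the component. Hence $\E[Z\mid U,c]=\beta'(r)c$, and averaging over the posterior gives $b_s=\beta' sR\,p_s$.

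Next expand $L_s(v)=\E v(U)^2-2\E[v(U)Z]+\E Z^2$ and set $C_\eta=\E Z^2$. This constant is independent of $s$ by the reflection symmetry. The quadratic term is $\theta^2N_\eta(\gamma)$. For the cross term, use $\E[\psi_\gamma(I)Z]=\E[\psi_\gamma(I)b_s(U)]=\eta sR\int_0^1\beta'\,\E_\xi\psi_\gamma(\beta sR+\xi)\,dr$. Evenness of $\psi_\gamma$ and symmetry of $\xi$ remove the sign inside $\psi_\gamma$, which yields $\eta sR\,G_\eta(\gamma)$.

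\emph{Steps (iii) and (iv).} For (iii), (i) gives $L_{t+1}(v)-L_t(v)=-2(s_{t+1}-s_t)\theta\eta RG_\eta(\gamma)$. This is zero when the signs agree and equals $\mp4\theta\eta RG_\eta(\gamma)$ when they differ. Over $|\theta|\le1$ and $\gamma\in[0,4]$ its range is $[-4\eta R\max G,\,4\eta R\max G]$, because $G\ge0$ and $G_\eta(0)=0$. The half-oscillation is therefore $4\eta R\max_\gamma G_\eta(\gamma)\le4\eta R$.

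For (iv), take the zero field $v_{0,0}\in\V_\star$. Then $a_s\le Q_s(0)=\E b_s^2=R^2\E[\beta'^2p_s^2]\le(\pi/2)^2R^2\E p_s$. Since $\E p_s=\eta$, this gives $a_s\le\frac{\pi^2}4R^2\eta$.

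\emph{Step (ii), the main obstacle.} Reflection $\theta\mapsto-\theta$ shows $\min L_+=\min L_-=C_\eta-\Delta$, where $\Delta=\max_{\theta,\gamma}\{2\theta\eta RG_\eta(\gamma)-\theta^2N_\eta(\gamma)\}$. Since $L_+(v)+L_-(v)=2C_\eta+2\theta^2N_\eta\ge2C_\eta$, we get $F_++F_-\ge2\Delta$. It therefore suffices to exhibit one $(\theta,\gamma)$ with $2\theta\eta RG-\theta^2N\ge\eta R/4$.

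I would try $\theta=1$ and $\gamma=10\pi/R$. This $\gamma$ is admissible because $\eta\le e^{-12}$ gives $R\ge\sqrt{96}>10\pi/4$. By \eqref{eq:cutoffs}, $\psi_\gamma$ vanishes on $|x|\le R/2$ and equals one on $|x|\ge 3R/5$.

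To bound $N$, note that the near component has $I\sim N(0,1)$, so it contributes at most $\Prob(|\xi|>R/2)\le2e^{-R^2/8}=2\eta$. The far component contributes at most $\eta$. Hence $N\le3\eta$.

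To bound $G$, restrict the $r$-integral to the set where $\beta(r)\ge0.8$. There $\beta R+\xi\ge3R/5$ unless $\xi<-0.2R$, an event of probability at most $e^{-0.02R^2}$, which is small for $R\ge\sqrt{96}$. This gives $G\ge(1-0.8)(1-e^{-1.9})>0.16$.

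Then $2\eta RG-N\ge0.32\eta R-3\eta\ge\eta R/4$ for $R\ge\sqrt{96}$. Checking these numerical constants, and possibly retuning $\gamma$ or the $\beta$ threshold so that the near-component tail stays at order $\eta$ while $G$ stays of order one, is where I expect the work to lie. The inequality $\Prob(|\xi|>cR)\le\eta$ needs $c\ge1/2$, which is what forces the cutoff $5\pi/\gamma\ge R/2$.
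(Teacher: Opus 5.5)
Your parts (i), (iii) and (iv) are correct and follow the paper. So does your reduction in (ii) to finding one $(\theta,\gamma)$ with $2\theta\eta RG_\eta(\gamma)-\theta^2N_\eta(\gamma)\ge\eta R/4$, and the paper uses your witness $(\theta,\gamma)=(1,10\pi/R)$. The gap is the final inequality of (ii). The claim $0.32\eta R-3\eta\ge\eta R/4$ holds only for $R\ge300/7\approx43$. The hypothesis $\eta\le e^{-12}$ guarantees only $R=\sqrt{8L_\eta}\ge\sqrt{96}\approx9.8$. At $R=\sqrt{96}$ your left side is about $0.14\eta$, while $2.45\eta$ is required. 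So (ii) is not established on the admissible range of $\eta$.

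Both of your estimates are too lossy, and sharpening only one does not rescue the argument. With $G\ge1/5$ but $N\le3\eta$ you would need $R\ge20$. With $N\le\frac76\eta$ but $G\ge0.17$ you would need $R\ge13$.

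The paper tightens both bounds. For $N$, replace the Chernoff bound $2e^{-R^2/8}=2\eta$ by Mills' inequality: $\Prob(|\xi|>\sqrt{2L_\eta})\le\eta/\sqrt{\pi L_\eta}\le\eta/6$ for $L_\eta\ge12$, so $N_\eta(\gamma)\le\frac76\eta$. For $G$, your threshold $\beta\ge0.8$ keeps only $\beta'$-mass $0.2$. Instead restrict to $\{r:\beta(r)\ge3/5\}$ together with the event $\{\xi\ge0\}$. There $\beta R+\xi\ge3R/5$ already lies on the unit plateau, and since $\beta'\ge0$ with $\beta$ rising from $3/5$ to $1$, you get $G_\eta(\gamma)\ge\frac12\int_{\{\beta\ge3/5\}}\beta'\,dr=\frac15$. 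Then $2\eta RG-N\ge\frac25\eta R-\frac76\eta\ge\frac14\eta R$ whenever $R\ge70/9$, which holds.

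Alternatively, keep $N\le3\eta$ and bound $G$ over the whole schedule by substituting $b=\beta(r)$. This gives $G\ge\int_0^1\Phi((b-\tfrac35)R)\,db\ge0.05+0.3\,\Phi(0.1R)\ge0.3$ for $R\ge\sqrt{96}$. That also closes the argument, since $0.6R-3\ge R/4$ for $R\ge60/7$.
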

\begin{proof}
\emph{(i)} For a target component $N(c,1)$ and a fixed $r$, $(I_r,Z_r)$ is jointly Gaussian with $\mathrm{Var}(I_r)=\alpha^2+\beta^2=1$ and $\mathrm{Cov}(I_r,Z_r)=\alpha\alpha'+\beta\beta'=0$, so its conditional mean label is $\beta'c$. Conditioning on the component gives $b_s=\beta'sRp_s$, because the bulk has $c=0$. The bulk has mean label zero, and $\psi_\gamma$ is even, so the tower property gives $\E_s[\psi_\gamma(I_r)Z_r]=s\eta RG_\eta(\gamma)$. Expanding the square proves (i), with $C_\eta=\E_sZ_r^2$, which is the same for both signs.

\emph{(ii)} By compactness and dominated convergence the minima below exist. Let $d_\eta=\min_{|\theta|\le1,\,0\le\gamma\le4}\{\theta^2N_\eta(\gamma)-2\theta\eta RG_\eta(\gamma)\}$. Replacing $\theta$ by $-\theta$ exchanges the signs, so the minimum loss for either sign is $C_\eta+d_\eta$. Hence, for every admissible $(\theta,\gamma)$,
\begin{equation}\label{eq:sumexcess}
 F_+(v_{\theta,\gamma})+F_-(v_{\theta,\gamma})=2\theta^2N_\eta(\gamma)-2d_\eta\ge-2d_\eta.
\end{equation}
To bound $d_\eta$, use the witness slope $\gamma_\eta=10\pi/R$, which lies in $[0,4]$ because $R\ge\sqrt{96}$. By \eqref{eq:cutoffs} its field vanishes on $|x|\le R/2=\sqrt{2L_\eta}$ and equals one on $|x|\ge3R/5$. Under the bulk, $I_r\sim N(0,1)$ for every $r$, so Mills' inequality gives
\[
 N_\eta(\gamma_\eta)\le\Pr(|\xi|>\sqrt{2L_\eta})+\eta\le\eta\big(1+(\pi L_\eta)^{-1/2}\big)\le\tfrac76\eta .
\]
On $\{r:\beta(r)\ge3/5\}$ the event $\xi\ge0$ puts $\beta R+\xi$ on the unit plateau. Since $\beta'\ge0$ and $\psi_\gamma\ge0$, $G_\eta(\gamma_\eta)\ge\frac12\int_{\{\beta\ge3/5\}}\beta'(r)\,dr=\frac15$. Evaluating the minimum at $(1,\gamma_\eta)$,
\[
 -2d_\eta\ge4\eta RG_\eta(\gamma_\eta)-2N_\eta(\gamma_\eta)\ge\eta\Big(\frac{4R}5-\frac73\Big)\ge\frac12\eta R,
\]
where the last step uses $R\ge\sqrt{96}>8$. With \eqref{eq:sumexcess} this proves the separation for the whole class. The witness need not be the global optimizer.

\emph{(iii)} By (i), $L_-(v_{\theta,\gamma})-L_+(v_{\theta,\gamma})=4\theta\eta RG_\eta(\gamma)$. Over the class this ranges over the symmetric interval with endpoints $\pm4\eta R\max_\gamma G_\eta(\gamma)$, and $\omega_t$ is half its oscillation. Equal signs give equal risks.

\emph{(iv)} Since $0\in\V_\star$, $p_s^2\le p_s$, and $\E_sp_s(I_r,r)=\eta$, we get $a_s\le Q_s(0)=\E_sb_s(I_r,r)^2\le\frac{\pi^2}4R^2\E_sp_s(I_r,r)^2\le\frac{\pi^2}4R^2\eta$.
\end{proof}

\begin{proposition}[Relative-risk variation lower bound]\label{prop:lb-omega}
Fix the class $\V_\star$, the standard Gaussian base, the sine--cosine path, and $\lambda=\mathrm{Unif}[0,1]$. Let $T\ge2$ and $mT\ge e^{12}/2$, and let $K_{\max}=\lfloor\min\{T/2,2mTe^{-12}\}\rfloor$ and $R_K=\sqrt{8\log(4m\lfloor T/K\rfloor)}$. For every $\Omega$ with
\begin{equation}\label{eq:omegarange}
 \frac{2R_1}{mT}\le\Omega\le\frac{2K_{\max}^2R_{K_{\max}}}{mT},
\end{equation}
and every proper learner in $\V_\star$, there is a sign sequence for the family in Equation~\eqref{eq:raremix} with $\Omega_T\le\Omega$ on which $\E\Reg\ge\frac1{20}\sqrt{T\Omega/m}$. The class, the envelope, the flow-time law, and every architectural bound are independent of $T$, $m$, and $\Omega$.
\end{proposition}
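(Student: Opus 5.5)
The plan is a block version of the two-point construction. Choose an integer number of blocks $K\in[1,K_{\max}]$ and partition $[1,T]$ into $K$ consecutive blocks of length $\ell=\lfloor T/K\rfloor$ or more. Put $\eta=1/(4m\ell)$, so that $R=\sqrt{8\log(1/\eta)}=R_K$. On block $j$ the target is $\mu_{\zeta_j}$ from Equation~\eqref{eq:raremix}, with independent uniform signs $\zeta_j$. The condition $K\le 2mTe^{-12}$ is what guarantees $\eta\le e^{-12}$, so Lemmas~\ref{lem:rarefixed} and~\ref{lem:raremix} apply with all constants fixed.

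\textbf{Budget.} By Lemma~\ref{lem:raremix}(iii), each block boundary with a sign flip contributes $\omega_t\le4\eta R$, and equal signs contribute zero. Hence
\[
 \Omega_T\le4(K-1)\eta R_K\le\frac{K R_K}{m\ell}\approx\frac{K^2R_K}{mT}.
\]
Given $\Omega$ in the range \eqref{eq:omegarange}, I take the largest $K\le K_{\max}$ with $2K^2R_K/(mT)\le\Omega$. The lower end of the range ensures $K\ge1$ exists. Because $R_K$ varies only logarithmically and monotonically in $K$, maximality should give $K\gtrsim\sqrt{mT\Omega/R_K}$, up to a constant factor of roughly $2$ to absorb the integer part.

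\textbf{Per-round lower bound.} Fix a round $t$ in block $j$ at offset $k\le\ell$ from the block's start. The learner's information before round $t$ depends on $\zeta_j$ only through the $mk$ target draws already seen in block $j$; earlier blocks have independent signs. By Lemma~\ref{lem:information} together with subadditivity of total variation over product measures,
\[
 \mathrm{TV}\le mk\,\mathrm{TV}(\mu_+,\mu_-)\le 2mk\eta\le 2m\ell\eta=\tfrac12,
\]
since $\mathrm{TV}(\mu_+,\mu_-)\le2\eta$. Lemma~\ref{lem:twopoint} applied to $f_\pm=F_\pm$, combined with Lemma~\ref{lem:raremix}(ii), then gives
\[
 \E F_t(v_t)\ge\tfrac12\cdot\tfrac12\cdot\tfrac12\eta R=\tfrac18\eta R.
\]
Summing over all $T$ rounds yields
\[
 \E\Reg\ge\frac{T\eta R_K}{8}=\frac{TR_K}{32m\ell}\ge\frac{KR_K}{32m}.
\]

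\textbf{Conversion to the stated rate.} Substituting $K\gtrsim\sqrt{mT\Omega/R_K}$ gives
\[
 \E\Reg\gtrsim\sqrt{T\Omega R_K/m}\ge\sqrt{T\Omega/m},
\]
since $R_K\ge\sqrt{96}$. That surplus factor should cover the loss of constants and reach $1/20$. A lower bound averaged over the random signs implies the same bound for some deterministic sign sequence, and every sequence respects the cap.

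\textbf{Main obstacle.} I expect the hardest part to be the bookkeeping that makes $K$ integer and keeps $\ell=\lfloor T/K\rfloor$, $R_K$, and the inequality $2K^2R_K/(mT)\le\Omega$ consistent across the whole range \eqref{eq:omegarange}, including the endpoints $K=1$ and $K=K_{\max}$. The upper end of the range is exactly $K=K_{\max}$, and there I would verify $\eta\le e^{-12}$ directly. I would also handle the unequal block lengths (lengths between $\ell$ and $2\ell$) by taking $\eta=1/(4m\cdot\text{maxlength})$ if that turns out to be needed.
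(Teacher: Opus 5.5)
Your construction and bookkeeping follow the paper's proof: $K$ sign blocks with $\eta=1/(4m\lfloor T/K\rfloor)$ so that $R=R_K$, the budget bound $\Omega_T\le4(K-1)\eta R_K\le2K^2R_K/(mT)$, the largest admissible $K$, and Lemmas~\ref{lem:information}, \ref{lem:twopoint} and \ref{lem:raremix}(ii). The gap is in the constant. The statement fixes it at $\frac1{20}$, with very little slack.

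You use $\mathrm{TV}(\mu_+,\mu_-)\le2\eta$. This gives $1-\mathrm{TV}\ge\frac12$, a per-round loss of $\frac18\eta R$, and $\E\Reg\ge KR_K/(32m)$. Maximality only gives $K\ge\sqrt{mT\Omega/(8R_K)}$, and the factor $(K+1)^2\le4K^2$ is attained at $K=1$. So your chain ends at $\frac1{32}\sqrt{R_K/8}\,\sqrt{T\Omega/m}$. Since $R_K$ can sit near its floor $\sqrt{96}$, this coefficient can be about $0.035<\frac1{20}$. For instance, take $T=4$ and $m=\lceil e^{12}/4\rceil$, so $K_{\max}=2$, and take $\Omega$ just below the upper end $8R_2/(mT)$ of the range. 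Then $K=1$, and your argument certifies only about $0.036\sqrt{T\Omega/m}$. The surplus $\sqrt{R_K}\approx3.1$ you count on cannot absorb the factor $1/(32\sqrt8)$.

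The repair is one line. The two mixtures share the bulk $(1-\eta)N(0,1)$, so $\mu_+-\mu_-=\eta[N(R,1)-N(-R,1)]$ and $\mathrm{TV}(\mu_+,\mu_-)\le\eta$. With at most $m\lfloor T/K\rfloor$ differing draws, this gives $\mathrm{TV}\le\frac14$. The per-round bound becomes $\frac12\cdot\frac34\cdot\frac12\eta R=\frac3{16}\eta R$, the total becomes $3KR_K/(64m)$, and the final coefficient is $\frac3{64}\sqrt{R_K/8}\ge0.0518>\frac1{20}$. This is the paper's computation. Alternatively, you could keep your $2\eta$ and average the offset-dependent factor $1-2mk\eta$ over $k=0,\dots,\ell-1$; that also recovers $\frac34$.

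You should also fix the block layout as the paper does:
\begin{itemize}
\item use exactly $K$ blocks of length $H=\lfloor T/K\rfloor$;
\item let the remaining $T-KH$ rounds keep the last sign, which adds no sign change, so the budget bound is unchanged;
\item count regret only on the $KH$ block rounds, where your offset bound $k\le H$ is valid and $F_t\ge0$ covers the rest.
\end{itemize}
As written, your sum over all $T$ rounds is not justified when blocks are longer than $\ell$. Your alternative of shrinking $\eta$ to match the longest block would decouple $R$ from the $R_K$ used in your selection rule for $K$.
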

\begin{proof}
Fix $K\le K_{\max}$, let $H=\lfloor T/K\rfloor\ge T/(2K)$, and let $\eta=1/(4mH)$. The choice $K\le2mTe^{-12}$ gives $4mH\ge e^{12}$, that is, $\eta\le e^{-12}$. Use $K$ blocks of length $H$ with independent uniform signs, sampled independently of the learner, and let the remaining $T-KH$ rounds keep the last sign. At a round of block $j$, the two sign hypotheses for that block agree before it, so by Lemma~\ref{lem:information} the learner's information differs through at most $mH$ target draws, and $\mathrm{TV}\le mH\,\mathrm{TV}(\mu_+,\mu_-)\le mH\eta=1/4$. This holds even if the learner knows the block boundaries and the mixture parameters. Lemmas~\ref{lem:twopoint} and~\ref{lem:raremix}(ii) give $\E F_t(v_t)\ge\frac12\cdot\frac34\cdot\frac12\eta R=\frac3{16}\eta R$ at every block round, and summing over the $KH$ block rounds with $H\eta=1/(4m)$,
\[
 \E\Reg\ge\frac{3KR}{64m}.
\]
By Lemma~\ref{lem:raremix}(iii), there are at most $K-1$ sign changes, so $\Omega_T\le4(K-1)\eta R\le KR/(mH)\le2K^2R/(mT)$ almost surely.

Given $\Omega$ in the range \eqref{eq:omegarange}, let $K$ be the largest integer in $[1,K_{\max}]$ with $2K^2R_K/(mT)\le\Omega$. It exists because $K=1$ qualifies, and the instance then has $\Omega_T\le\Omega$. If $K<K_{\max}$, then $\Omega<2(K+1)^2R_{K+1}/(mT)\le8K^2R_K/(mT)$, because $H$, and hence $R_K$, does not increase with $K$. If $K=K_{\max}$, the upper end of the range gives the same inequality. In both cases $K\ge\sqrt{mT\Omega/(8R_K)}$, so
\[
 \E\Reg\ge\frac{3KR_K}{64m}\ge\frac3{64}\sqrt{\frac{R_K}8}\sqrt{\frac{T\Omega}m}\ge\frac1{20}\sqrt{\frac{T\Omega}m},
\]
using $R_K\ge\sqrt{96}$, which makes the middle coefficient at least $0.0518$. The bound holds on average over the block signs, so for every learner some sign sequence attains it. Lemma~\ref{lem:rarefixed} shows that every instance satisfies the assumptions with the stated constants.
\end{proof}

\begin{remark}[Scope of the relative-risk lower bound]
Proposition~\ref{prop:lb-omega} is a lower bound for proper learners in one fixed admissible class. It does not claim hardness for every larger neural class or for arbitrary measurable fields.
\end{remark}

\paragraph{Approximation and target-law budget of the family.} By Lemma~\ref{lem:raremix}(iv), $\App\le T\frac{\pi^2}4R^2\eta\le\frac{\pi^2}8KR^2/m$, and $W_2(\mu_+,\mu_-)\le2R\sqrt\eta$: couple the bulks identically and move the far mass $\eta$ by $2R$. The approximation cost of the upper bound is therefore $O(\log(eT)R)$ times the block lower bound $3KR/(64m)$, a polylogarithmic factor since $R=O(\sqrt{\log(emT)})$. The class has $P=5$ and $C_S=O(1+S)$, so the constants of Theorem~\ref{thm:main} on this family are polylogarithmic in $mT$ as well.

\subsection{The Gaussian Family for Target-Law Variation}
\begin{proposition}[Target-law variation lower bound]\label{prop:lb-target}
Let $\lambda$ satisfy Assumption~\ref{ass:smooth}(c), $R_0>0$, $T\ge2$, and
\[
 (2mT)^{-1/2}\le V_\mu\le\min\{T/(4\sqrt m),\,16R_0^3mT\}.
\]
For every learner there is a mean sequence for the Gaussian family with $\norm{\xi_t}\le R_0$ and $V_T^\mu\le V_\mu$ on which
\[
 \E\Reg\ge\frac{c_*^\lambda}{34}\,T^{1/3}V_\mu^{2/3}m^{-2/3}.
\]
\end{proposition}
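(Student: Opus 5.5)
We will use a block construction inside the Gaussian family of Lemma~\ref{lem:gaussfamily} with $d=1$. Split the horizon into $K$ consecutive blocks of length $H=\lfloor T/K\rfloor$. Block $j$ has mean $\xi_t=\epsilon\zeta_j$, where the signs $\zeta_j\in\{-1,+1\}$ are independent and uniform, and $\epsilon\le R_0$ is an amplitude to be tuned. Any leftover rounds keep the last mean. By Lemma~\ref{lem:gaussfamily}(iv), each sign change costs at most $2\epsilon$ in $W_2$, so
\[
 V_T^\mu\le2\epsilon(K-1)\le2\epsilon K.
\]
By Lemma~\ref{lem:gaussfamily}(iii), $F_t(v_\vartheta)=c_*^\lambda(\vartheta-\xi_t)^2$, hence
\[
 F_t(v_\vartheta)\big|_{+}+F_t(v_\vartheta)\big|_{-}\ge2c_*^\lambda\epsilon^2
\]
for every $\vartheta$. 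This is exactly the separation needed in Lemma~\ref{lem:twopoint}.

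Fix a round inside block $j$. The two hypotheses for $\zeta_j$ agree before the block, and the other blocks' signs are independent. Lemma~\ref{lem:information} therefore bounds the total variation between the learner's information laws by the total variation between products of at most $mH$ draws from $N(\epsilon,1)$ versus $N(-\epsilon,1)$. Pinsker's inequality with $\mathrm{KL}=2\epsilon^2$ per draw gives
\[
 \mathrm{TV}\le\sqrt{mH\epsilon^2}.
\]
The amplitude is chosen as $\epsilon=1/(2\sqrt{mH})$, so that $\mathrm{TV}\le1/2$. Lemma~\ref{lem:twopoint} then gives $\E F_t(v_t)\ge\frac14 c_*^\lambda\epsilon^2$ at every block round. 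Summing over the $KH$ block rounds gives
\[
 \E\Reg\ge\tfrac14c_*^\lambda KH\epsilon^2=\tfrac{c_*^\lambda K}{16m}.
\]
The budget satisfies $V_T^\mu\le2\epsilon K=K/\sqrt{mH}\lesssim K^{3/2}/\sqrt{mT}$, using $H\ge T/(2K)$.

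Next we calibrate $K$. We take the largest integer $K$ with $K^{3/2}\sqrt{2/(mT)}\le V_\mu$, so that $K\asymp(V_\mu^2mT)^{1/3}$. The regret is then of order
\[
 \frac{K}{m}\asymp T^{1/3}V_\mu^{2/3}m^{-2/3}.
\]
The stated range of $V_\mu$ is what makes this feasible:
\begin{itemize}
\item the lower end $(2mT)^{-1/2}$ guarantees $K\ge1$;
\item $V_\mu\le T/(4\sqrt m)$ keeps $K\le T/2$, so $H\ge1$ and $H\ge T/(2K)$;
\item $V_\mu\le16R_0^3mT$ ensures $\epsilon\le R_0$, so the means stay admissible.
\end{itemize}
The last point needs a check: $\epsilon\asymp(mH)^{-1/2}$ is always at most $1/2$, but $R_0$ may be small. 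If the bound $\epsilon\le R_0$ fails, we will instead cap $\epsilon$ at $R_0$. A short case analysis should show that this regime is excluded by the upper constraint.

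Finally, we average over the signs. Because the prior has finite support, some deterministic sign sequence attains the bound for the given learner. The class $\{v_\vartheta\}$ and the constants are fixed by Lemma~\ref{lem:gaussfamily}(i)--(ii) for every $\lambda$, so Assumption~\ref{ass:smooth}(c) plays no role except through $c_*^\lambda$.

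The main obstacle is the bookkeeping of the constants rather than the concept. We must track the floor losses $H\ge T/(2K)$, the choice of largest $K$ versus $K+1$ (which loses a factor $2^{3/2}$), and the Pinsker constant, and confirm that together they yield $c_*^\lambda/34$. If the constants do not close, we would try a sharper total-variation bound (exact Gaussian Hellinger) or a slightly smaller $\epsilon$.
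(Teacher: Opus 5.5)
Your construction is the paper's: Gaussian means $\pm\epsilon$ on $K$ sign blocks, Lemma~\ref{lem:information} with Pinsker, Lemma~\ref{lem:twopoint}, and a block count. The gap is in the numbers. The statement fixes the constant $1/34$ and the admissible range of $V_\mu$, and that range is calibrated to a different amplitude than your $\epsilon=1/(2\sqrt{mH})$. With your $\epsilon$, the budget rule gives $K=\lfloor(mTV_\mu^2/2)^{1/3}\rfloor$, and three things go wrong.

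\emph{(i) $K\ge1$ fails at the lower end.} At $V_\mu=(2mT)^{-1/2}$ you get $K=\lfloor2^{-2/3}\rfloor=0$, so the lower end does not guarantee $K\ge1$. This is patchable, since a single block has zero budget.

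\emph{(ii) The means can leave the ball.} The condition $\epsilon\le R_0$ needs $4mHR_0^2\ge1$, which via $H\ge T/(2K)$ amounts to $K\le2R_0^2mT$. The upper constraint only gives $K\le128^{1/3}R_0^2mT\approx5R_0^2mT$, and the bad regime is not excluded. Take $m=1$, $T=1000$, $R_0=1/10$, $V_\mu=16=16R_0^3mT$, which is in range. Your rule gives $K=50$, $H=20$, and $\epsilon=1/(2\sqrt{20})\approx0.112>R_0$. So the promised case analysis cannot succeed.

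\emph{(iii) The constant does not close.} Lemma~\ref{lem:twopoint} with $\mathrm{TV}\le1/2$ actually gives $\frac12\cdot\frac12\cdot2c_*^\lambda\epsilon^2=\frac12c_*^\lambda\epsilon^2$, not $\frac14c_*^\lambda\epsilon^2$. With your $\frac14$ and $K\ge\frac12(mTV_\mu^2/2)^{1/3}$, the final constant is $2^{-1/3}/32\approx1/40<1/34$. Correcting to $\frac12$ would rescue the constant, but not (ii).

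The smaller amplitude you list as a fallback is in fact required. Take $\delta_0=1/(4\sqrt{mH})$. Then:
\begin{itemize}
\item $\mathrm{KL}=2mH\delta_0^2=1/8$, so $\mathrm{TV}\le1/4$.
\item $\E F_t(v_t)\ge\frac34c_*^\lambda\delta_0^2$, hence $\E\Reg\ge\frac3{64}c_*^\lambda K/m$.
\item $V_T^\mu\le2\delta_0(K-1)<K/(2\sqrt{mH})\le K^{3/2}/\sqrt{2mT}$.
\end{itemize}
This dictates $K=\lfloor(2mTV_\mu^2)^{1/3}\rfloor$, and the three endpoints of the range become exactly the feasibility conditions:
\begin{itemize}
\item $V_\mu\ge(2mT)^{-1/2}$ is equivalent to $K\ge1$.
\item $V_\mu\le T/(4\sqrt m)$ gives $K\le T/2$, hence $H\ge T/(2K)$.
\item $V_\mu\le16R_0^3mT$ gives $K\le8R_0^2mT$, hence $mH\ge1/(16R_0^2)$ and $\delta_0\le R_0$.
\end{itemize}
With $K\ge\frac12(2mTV_\mu^2)^{1/3}$, the constant is $3\cdot2^{1/3}/128\approx0.0295\ge1/34\approx0.0294$. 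This is nearly tight, so the stated bound leaves no room for factor-two slack.
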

\begin{proof}Use the Gaussian family with $d=1$. By Lemma~\ref{lem:gaussfamily} it satisfies Assumptions~\ref{ass:tails},~\ref{ass:network} and~\ref{ass:smooth} with $\kappa^2=8(2+R_0^2)$, $\iota=1$, and constants that do not depend on $\lambda$. Let $K=\lfloor(2mTV_\mu^2)^{1/3}\rfloor$. The lower end of the range gives $K\ge1$, and hence $K\ge\frac12(2mTV_\mu^2)^{1/3}$. The upper end gives $K\le T/2$ and $K\le8R_0^2mT$. Let $H=\lfloor T/K\rfloor\ge T/(2K)$ and $\delta_0=1/(4\sqrt{mH})$, so $\delta_0\le R_0$. Use $K$ blocks of length $H$ with means $\xi_t=\delta_0\zeta_j$ for every round $t$ of block $j$ and independent uniform signs $\zeta_j$, the remainder keeping the last mean.

At a round of block $j$, the information differs through at most $mH$ draws. By Lemma~\ref{lem:information},
\[
 \mathrm{KL}\big(N(\delta_0,1)^{\otimes mH}\,\|\,N(-\delta_0,1)^{\otimes mH}\big)=2mH\delta_0^2=\tfrac18,
\]
and Pinsker's inequality gives $\mathrm{TV}\le1/4$. Since $F_t(v_\vartheta)=c_*^\lambda(\vartheta-\xi_t)^2$ and $(\vartheta-\delta_0)^2+(\vartheta+\delta_0)^2\ge2\delta_0^2$, Lemma~\ref{lem:twopoint} gives $\E F_t(v_t)\ge\frac34c_*^\lambda\delta_0^2$. Summing over $KH$ rounds,
\[
 \E\Reg\ge\tfrac34KHc_*^\lambda\delta_0^2=\tfrac3{64}c_*^\lambda K/m.
\]
The budget is $V_T^\mu\le2\delta_0(K-1)<K/(2\sqrt{mH})\le K^{3/2}/\sqrt{2mT}\le V_\mu$. Finally,
\[
 \E\Reg\ge\tfrac3{128}c_*^\lambda(2mTV_\mu^2)^{1/3}/m=\tfrac{3\cdot2^{1/3}}{128}c_*^\lambda T^{1/3}V_\mu^{2/3}m^{-2/3},
\]
and $3\cdot2^{1/3}/128\ge1/34$. The family is realizable, $a_t=0$.\end{proof}

\subsection{Matching the Upper Bounds}
\begin{corollary}[Optimality up to logarithmic factors]\label{cor:minimax}
Fix the scales and the gate choice in Equation~\eqref{eq:gatechoice}. On each family above, the upper bound of Theorem~\ref{thm:main} or~\ref{thm:target} exceeds the corresponding lower bound by at most a factor polylogarithmic in $mT$ and polynomial in $m$, plus the additive fitting cost $4T\eopt$ (respectively $10T\eopt$). Hence the rates $N_\omega+1$, $\sqrt{T\Omega}$, $N_\mu+1$, and $T^{1/3}V_\mu^{2/3}$ are optimal in $T$ and in the budget up to logarithmic factors. \end{corollary}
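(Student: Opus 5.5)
The plan is to handle each of the four rows of Table~\ref{tab:rates} separately. In each row I will instantiate the upper bound of Theorem~\ref{thm:main} or~\ref{thm:target} on the hard family that gives the matching lower bound. Then I will check that every class constant entering $K_T$, $K_T^+$, $H_T$, $\bar c$ and the residuals is polylogarithmic in $mT$ on that family. The first step is the shared constants. Lemmas~\ref{lem:gaussfamily} and~\ref{lem:rarefixed} fix $d,\kappa,c_Z,B_v,P,R_\Theta$ independently of $T$ and the budget. With the gate choice \eqref{eq:gatechoice}, $B=O(\log(emT))$, $C_{R_M}=O(1+R_M)=O(\sqrt{\log(emT)})$, and Corollary~\ref{cor:cover} with $\varepsilon=\zeta$ gives $\Gamma=O(P\log(emT))$. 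Also $\Lambda,D_T=O(\log(eT))$. So $K_T,K_T^+=O(\mathrm{polylog}(mT))$, with a factor $1+1/m$ that is harmless. The terms $T\zeta$, $T\delta_M$ by \eqref{eq:tailrate}, and $\esm TV_\mu$ are $O(1)$. The only remaining fitting cost is $4T\eopt$ or $10T\eopt$.

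\emph{Count rows and the $V_\mu$ row.} The Gaussian family is realizable, so $\App=0$ by Lemma~\ref{lem:gaussfamily}(iii). Theorem~\ref{thm:main} then gives $K_T(N_\omega+1)$, which matches $c_*^\lambda R_0^2(N+1)$ from Proposition~\ref{prop:lb-count} up to polylog factors. The same holds for $N_\mu$. For the target-law row I need $\Lsm,c_2$ to be fixed constants. Here $\iota=1$, $m_h,m_k<\infty$ by Assumption~\ref{ass:smooth}(c), and $M_z=O(\sqrt{\log(emT)})$. Hence $\bar c=\mathrm{polylog}$, and the variation branch is $O(\mathrm{polylog}\cdot(1+T^{1/3}V_\mu^{2/3}))$. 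In the admissible range $V_\mu\ge(2mT)^{-1/2}$, the lower bound $T^{1/3}V_\mu^{2/3}m^{-2/3}$ is at least a constant over $m$. So the additive $2K_T^+$ is absorbed up to a factor polynomial in $m$.

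\emph{The $\Omega$ row.} This is the main obstacle, because the rare-mixture family is misspecified and the term $H_T\E\App$ does not vanish. I will use the bound stated after Proposition~\ref{prop:lb-omega}: $\App\le\frac{\pi^2}{8}KR^2/m$ with $R=O(\sqrt{\log(emT)})$. This gives $H_T\App=O(\log(eT)\,R)\cdot 3KR/(64m)$, which is a polylog multiple of the block lower bound $\E\Reg\ge 3KR/(64m)$. Next I compare the variation term. The proof of Proposition~\ref{prop:lb-omega} has $K\asymp\sqrt{mT\Omega/R_K}$. The upper bound is $K_T+2\sqrt{10K_TT\Omega}$, and $\sqrt{T\Omega}=\sqrt m\sqrt{T\Omega/m}$. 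So the ratio to $\frac1{20}\sqrt{T\Omega/m}$ is $O(\sqrt{m}\,\mathrm{polylog})$. The additive $K_T$ is dominated because $\Omega\ge 2R_1/(mT)$ makes $\sqrt{T\Omega/m}\gtrsim 1/m$. The family has $P=5$ and $C_S=O(1+S)$, so $\Gamma$ is again polylogarithmic.

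Finally, since each ratio is at most $\mathrm{polylog}(mT)\cdot\mathrm{poly}(m)$ plus the fitting cost, the exponents in $T$ and in each budget coincide at fixed $m$. This gives optimality of $N_\omega+1$, $\sqrt{T\Omega}$, $N_\mu+1$ and $T^{1/3}V_\mu^{2/3}$ up to logarithmic factors. The only delicate bookkeeping is to make sure the approximation factor in the $\Omega$ row is counted as multiplicative, not additive.
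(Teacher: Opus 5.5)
Your proposal is correct and follows the paper's proof essentially step for step. Under \eqref{eq:gatechoice} you show that $B,\Gamma,\Lambda,\Lsm$, and hence $K_T,K_T^+,\bar c,H_T,H_T^+$, are polylogarithmic, and that every residual other than $T\eopt$ is $O(1)$. You then compare row by row, using $\App=0$ on the Gaussian family and $H_T\App=O(\log(eT)R^2K/m)$ against the block lower bound $3KR/(64m)$ on the rare mixture. The paper leaves implicit the fact that the additive $K_T$ and $2K_T^+$ are absorbed, because the lower ends of the admissible $\Omega$ and $V_\mu$ ranges force the lower bounds to be of order at least $1/m$; your check of this is correct.
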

\begin{proof}
At fixed scales, Equation~\eqref{eq:gatechoice} makes $B$, $\Gamma$, $\Lambda$, and $\Lsm$ polylogarithmic in $mT$, so $K_T$, $K_T^+$, $\bar c$, $H_T$, and $H_T^+$ are polylogarithmic, while $T\zeta$, $T\delta$, and $\esm TV_\mu$ are $O(1)$. Thus $Te_f=4T\eopt+O(1)$ and $Te_f^+=10T\eopt+O(1)$.
\begin{itemize}
\item \emph{Counting.} On the Gaussian family $\App=0$, and Theorems~\ref{thm:main} and~\ref{thm:target} give $K_T(N+1)+Te_f$ against $c_*^\lambda R_0^2(N+1)$.
\item \emph{Relative-risk variation.} On the rare mixture with $K$ blocks, $K\asymp\sqrt{mT\Omega/R}$, Theorem~\ref{thm:main} gives $K_T+2\sqrt{10K_TT\Omega}+H_T\App+Te_f$, where $H_T\App=O(\log(eT)R^2K/m)$. The first three terms are a polylogarithmic multiple of $3KR/(64m)$ and of $\sqrt{T\Omega/m}$, up to a factor $\sqrt m$.
\item \emph{Target-law variation.} On the Gaussian family $\App=0$, and Theorem~\ref{thm:target} gives $2K_T^++2(2\bar c)^{1/3}(K_T^+)^{2/3}T^{1/3}V_\mu^{2/3}+Te_f^++O(1)$ against $(c_*^\lambda/34)T^{1/3}V_\mu^{2/3}m^{-2/3}$.
\end{itemize}
In all four cases the exponents of $T$ and of the budget coincide.
\end{proof}

\end{document}